\theoremstyle{definition}
\theoremstyle{definition}
\newtheorem{proposition}{Proposition}
\theoremstyle{definition}
\newtheorem{example}{Example}
\theoremstyle{definition}
\newtheorem{definition}{Definition}
\theoremstyle{definition}
\newtheorem{theorem}{Theorem}
\theoremstyle{definition}
\newtheorem{lemma}{Lemma}
\theoremstyle{definition}
\newtheorem{corollary}{Corollary}
\theoremstyle{definition}
\newtheorem{assumption}{Assumption}
\theoremstyle{definition}
\theoremstyle{remark}
\newtheorem{remark}{Remark}
\theoremstyle{definition}
\theoremstyle{definition}
\theoremstyle{definition}
\theoremstyle{definition}
\theoremstyle{definition}
\newcommand{\sgn}{{\rm sgn}}
\newcommand{\norm}[1]{\left\|#1\right\|}
\newcommand{\real}{{\mathbb R}}
\newcommand{\Real}{{\mathbb R}}
\newcommand{\integer}{{\mathbb Z}}
\newcommand{\refeq}[1]{(\ref{#1})}
\newcommand{\reftab}[1]{Table \ref{#1}}
\newcommand{\reffig}[1]{Figure \ref{#1}}
\newcommand{\prox}{{\rm prox}}
\newcommand{\argmin}{\operatornamewithlimits{argmin}}
\newcommand{\argmax}{\operatornamewithlimits{argmax}}
\newcommand{\X}{{\mathcal{X}}}
\renewcommand{\H}{{\mathcal{H}}}
\newcommand{\U}{\mathcal{U}}
\newcommand{\T}{{\sf T}}
\newcommand{\Dict}{{\mathcal{D}}}
\newcommand{\di}{{\rm d}}
\newcommand{\sigu}{\textbf{u}}
\newcommand{\sigh}{\textbf{h}}
\newcommand{\sigw}{\textbf{w}}
\newcommand{\sigk}{\textbf{k}}
\newcommand{\sigx}{\textbf{x}}
\newcommand{\sigy}{\textbf{y}}
\newcommand{\sigv}{\textbf{v}}
\newcommand{\sigz}{\textbf{z}}
\newcommand{\x}{\times}
\newcommand{\s}{\sigma}
\newcommand{\ka}[1]{\kappa\left(#1\right)}
\newcommand{\inpro}[1]{\left<#1\right>}
\newcommand{\Ex}[1]{E\left[#1\right]}
\definecolor{darkgreen}{rgb}{0,.6,0}
\definecolor{medorange}{rgb}{0.7,0.3,0}
\definecolor{cyancyan}{rgb}{0.68, 0.92, 0.92}
\def\nn{\nonumber}
\begin{document}

\numberwithin{equation}{section}
\renewcommand{\theequation}{\thesection.\arabic{equation}}
\numberwithin{theorem}{section}
\numberwithin{assumption}{section}
\numberwithin{definition}{section}
\numberwithin{lemma}{section}
\numberwithin{proposition}{section}
\numberwithin{remark}{section}
\numberwithin{corollary}{section}
\numberwithin{table}{section}
\numberwithin{figure}{section}
\numberwithin{fact}{section}
\numberwithin{example}{section}
%
\title{Barrier-Certified Adaptive Reinforcement Learning\\ with Applications to Brushbot Navigation}

 \author{Motoya Ohnishi, Li Wang,
 	Gennaro Notomista,
and Magnus Egerstedt
\thanks{This work was sponsored in part by the U.S. National Science Foundation under Grant No. 1531195.
	The work of M. Ohnishi was supported in part by the Scandinavia-Japan Sasakawa Foundation under Grant GA17-JPN-0002 and the Travel Grant of the School of Electrical Engineering, Royal Institute of Technology.}
\thanks{M. Ohnishi is with the School of Electrical Engineering, Royal Institute of Technology, 11428 Stockholm, Sweden,
		the Georgia Robotics and Intelligent Systems Laboratory, Georgia Institute of Technology, Atlanta, GA 30332 USA,
	and also with the RIKEN Center for Advanced Intelligence Project, Tokyo 103-0027, Japan (e-mail: motoya@kth.se).}
\thanks{L. Wang and M. Egerstedt are with the School of Electrical and Computer
Engineering, Georgia Institute of Technology, Atlanta, GA 30332 USA (e-mail: liwang@gatech.edu; magnus@gatech.edu).}
\thanks{G. Notomista is with the School of Mechanical Engineering, Georgia Institute of Technology, Atlanta, GA 30313 USA (e-mail: g.notomista@gatech.edu).
}
}

\maketitle

\begin{abstract}
	This paper presents a safe learning framework that employs an adaptive model learning algorithm together with barrier certificates for systems with possibly nonstationary agent dynamics.
	To extract the dynamic structure of the model, we use a sparse optimization technique.
	We use the learned model in combination with control barrier certificates which constrain policies (feedback controllers)
	in order to maintain safety, which refers to avoiding particular undesirable regions of the state space.
	Under certain conditions, recovery of safety in the sense of Lyapunov stability after violations of safety due to the nonstationarity
	is guaranteed.
	In addition, we reformulate an action-value function approximation to make any kernel-based nonlinear function estimation method applicable
	to our adaptive learning framework.
	Lastly, solutions to the barrier-certified policy optimization are guaranteed to be globally optimal,
	ensuring the greedy policy improvement under mild conditions.
	The resulting framework is validated via simulations of a quadrotor, which has previously been used under {\em stationarity} assumptions in the safe learnings literature,
	and is then tested on a real robot, the brushbot, whose dynamics is unknown, highly complex and nonstationary.
\end{abstract}

\begin{keywords}
Safe learning, control barrier certificate, sparse optimization, kernel adaptive filter, brushbot

\end{keywords}

\section{Introduction}
\label{sec:introduction}
By exploring and interacting with an environment, reinforcement learning can
determine the optimal policy with respect to the long-term rewards given to an agent \cite{reinforcement,lewis2009reinforcement}.
Whereas the idea of determining the optimal policy in terms of a cost over some time horizon is standard
in the controls literature \cite{optimalcontrol}, reinforcement learning is aimed at
learning the long-term rewards by exploring the states and actions.
As such, the agent dynamics is no longer explicitly taken into account, but rather is subsumed by the
data.

If no information about the agent dynamics is available, however, an agent might end up in certain regions of the state space that
must be avoided while exploring.
Avoiding such regions of the state space is referred to as {\it safety}.
Safety includes collision avoidance, boundary-transgression avoidance,
connectivity maintenance in teams of mobile robots, and other mandatory constraints, and
this tension between exploration and safety becomes particularly pronounced in robotics, where
safety is crucial.

In this paper, we address this safety issue,
by employing model learning in combination with barrier certificates.
In particular, we focus on learning for systems with discrete-time nonstationary (or time-varying) agent dynamics.
Nonstationarity comes, for example, from failures of actuators, battery degradations, or sudden environmental disturbances.
The result is a method that adapts to nonstationary agent dynamics and, under certain conditions, ensures recovery of safety in the sense of Lyapunov stability even after violations of safety due to the nonstationarity occur.
We also propose discrete-time barrier certificates that guarantee global optimality of solutions to the barrier-certified policy optimization,
and we use the learned model for barrier certificates.

Over the last decade, the safety issue has been addressed under the name of safe learning,
and plenty of solutions have been proposed \cite{berkenkamp2017safe,berkenkamp2016safe,schreiter2015safe,akametalu2014reachability,shalev2016safe,ammar2015safe,van2017online,achiam2017constrained,demonst1,wang2017safe}.
To ensure safety while exploring, an initial knowledge of the agent dynamics,
initial safe policy or a {\it teacher} advising the agent is necessary \cite{Safesurvey,berkenkamp2017safe}.
To obtain a model of the agent dynamics, human operators may maneuver the agent and record its trajectories \cite{demonstsurvey,demonst1}.
It is also possible that an agent continues exploring without entering the states with low long-term {\it risks} (e.g., \cite{geibel2006reinforcement, achiam2017constrained}).
Due to the inherent uncertainty, the worst case scenario (e.g., possible lowest rewards) is typically taken into account \cite{wang2017safe,coraluppi1999risk} and the set of safe policies can be expanded by exploring the states \cite{berkenkamp2017safe,berkenkamp2016safe}.
To address the issue of this uncertainty for nonlinear-model estimation tasks, Gaussian process regression \cite{rasmussen2006gaussian} is a strong tool, and many safe learning studies
have taken advantage of its property (e.g., \cite{wang2017safe,berkenkamp2017safe,akametalu2014reachability,schreiter2015safe,van2017online}).

Nevertheless, when the agent dynamics is nonstationary and the long-term rewards vary accordingly, the assumptions often made in
the safe learnings literature no longer hold, and violations of safety become inevitable.
In such cases, we wish to ensure that the agent is at least successfully brought back to the set of safe states
and the negative effect of an unexpected violation of safety is mitigated.
Moreover, the long-term rewards must also be learned in an adaptive manner.
These are the core motivations of this paper.

To constrain the states within a desired safe region while exploring, we employ control barrier functions (cf. \cite{xu2015robustness,wieland2007constructive,glotfelter2017nonsmooth,wang2017safety,ames2017control,2017dcbf}).
When the exact model of the agent dynamics is available, control barrier certificates ensure
that an agent remains in the set of safe states for all time by constraining the instantaneous control input at
each time.  Also, an agent outside of the set of safe states is forced back to safety (Proposition \ref{prpCBF}).
A useful property of control barrier certificates is that they modify polices
only when violations of safety are truly imminent \cite{wang2017safety}.

If no nominal model (or simulation) of the possibly nonstationary agent dynamics is available, on the other hand, violations of safety are inevitable.
Therefore, we wish to adaptively learn the agent dynamics, and eventually bring the agent back to safety.
To this end, we propose a learning framework for a possible nonstationary agent dynamics, which recovers safety in the sense of Lyapunov stability
under some conditions.
This learning framework ties adaptive algorithms with control barrier certificates by
focusing on set-theoretical aspects and monotonicity (or non-expansivity).
By augmenting the state with the estimate of agent dynamics, Lyapunov stability with respect to the set of augmented safe states is guaranteed (Theorem \ref{maintheo}).
Also, to efficiently enforce control barrier certificates, we employ adaptive sparse optimization techniques to extract dynamic structures (e.g., control-affine
dynamics) by identifying truly {\em active} structural components (see Section \ref{subsec:overviewdynamic} and \ref{subsec:modellearn}).

In addition, the long-term rewards need to be adaptively estimated when the agent dynamics is nonstationary.
To this end, we reformulate the action-value function approximation problem so that,
even if the action-value function varies, it can be adaptively estimated in the same functional space by
employing an adaptive supervised learning algorithm in the space.
Consequently, resetting the learning whenever the agent dynamics varies becomes unnecessary.
Moreover, we present a barrier-certified policy update strategy by employing control barrier functions to effectively constrain policies.
Because the global optimality of solutions to the constrained policy optimization is necessary to ensure the greedy improvement of a policy,
we propose a discrete-time control barrier certificate that ensures
the global optimality under some mild conditions (see Section \ref{subsec:adaptiveRL} and Theorem \ref{CBFaffine} therein).
This is an improvement of the previously proposed discrete-time control barrier certificate \cite{2017dcbf}.

To validate and clarify our learning framework, we first conduct experiments of quadrotor simulations.  Then, we conduct real-robotics experiments on a {\it brushbot}, whose dynamics is unknown, highly complex and nonstationary, to test the efficacy of our framework in the real world (see Section \ref{sec:numerical}).
This is challenging due to many uncertainties and lack of simulators often used in applications of reinforcement learning
in robotics (see \cite{kober2013reinforcement} for example).
\section{Preliminaries}
\label{sec:preliminaries}
In this section, we present some of the related work and the system model considered in this paper.
Throughout, $\Real$, $\integer_{\geq0}$ and $\integer_{>0}$ are
	the sets of real numbers, nonnegative integers and positive integers,
	respectively.
	Let $\norm{\cdot}_{\H}$ be the norm induced by the inner product $\inpro{\cdot,\cdot}_{\H}$ in an inner-product space $\H$.
	In particular, define $\inpro{\sigx,\sigy}_{\Real^L}:=\sigx^{\T}\sigy$ for $L$-dimensional real vectors $\sigx,\sigy\in\real^L$,
	and $\norm{\sigx}_{\Real^L}:=\sqrt{\inpro{\sigx,\sigx}_{\Real^L}}$, where $(\cdot)^{\T}$ stands for transposition.
	We define $[\sigx;\sigy]$ as $[\sigx^{\T},\sigy^{\T}]^{\T}$, and let
$\sigx_n\in\X\subset\Real^{n_x}$ and $\sigu_n\in\U\subset\Real^{n_u}$, for $n_x,n_u\in\integer_{>0}$, denote the state and the control input at time instant $n\in\Real_{\geq0}$, respectively.
\subsection{Related Work}
\label{subsec:relatedwork}
The primary focus of this paper is the safety issue {\it while exploring}.
Typically, some initial knowledges, such as an initial safe policy and a model of the agent dynamics, are required to address the safety issue while exploring;
therefore, model learning is often employed together.
We introduce some related work on model learning and kernel-based action-value function approximation.
\subsubsection{Model Learning for Safe Maneuver}
The recent work in \cite{wang2017safe}, \cite{akametalu2014reachability}, and \cite{berkenkamp2017safe}
assumes an initial conservative set of safe policies, which is gradually expanded as more data become available.
These approaches are designed for stationary agent dynamics, and Gaussian processes (GPs) are employed to obtain the confidence interval of the model.
To ensure safety, control barrier functions and control Lyapunov functions are employed in \cite{wang2017safe} and \cite{berkenkamp2017safe},
respectively.
On the other hand, the work in \cite{van2017online} uses a trajectory optimization based on the receding
horizon control and model learning by GPs, which is computationally expensive when the model is highly nonlinear.

In this paper, we aim at tying adaptive model learning algorithms and control barrier certificates by focusing on set-theoretical aspects and monotonicity (or non-expansivity).
Hence, we employ an adaptive filter with monotone approximation property, which shares similar ideas
with stable online learning for adaptive control based on Lyapunov stability (c.f. \cite{janakiraman2013lyapunov,french2000non,polycarpou1996stable,aastrom2013adaptive}, for example).  
\subsubsection{Learning Dynamic Structures in Reproducing Kernel Hilbert Spaces}
An approach that learns dynamics in reproducing kernel Hilbert spaces (RKHSs) so that the resulting model satisfies the Euler-Lagrange equation
	 was proposed in \cite{cheng2016learn}, while our paper proposes a learning framework that adaptively captures control-affine structure in RKHSs to efficiently enforce control barrier certificates.
\subsubsection{Reinforcement Learning in Reproducing Kernel Hilbert Spaces}
We introduce, briefly, ideas of existing action-value function approximation techniques.
Given a policy $\phi:\X\rightarrow\U$, the action-value function $Q^{\phi}$ associated with the policy $\phi$ is defined as
\begin{align}
Q^{\phi}(\sigx,\phi(\sigx))=V^{\phi}(\sigx):=\sum_{n=0}^{\infty}\gamma^n R(\sigx_n,\phi(\sigx_n)),  \label{Qdef}
\end{align}	
where $\gamma\in(0,1)$ is the discount factor, $(\sigx_n)_{n\in\integer_{\geq0}}$ is a trajectory of the agent starting from $\sigx_0=\sigx$, and
$R(\sigx,\sigu)\in\Real$ is the immediate reward.
It is known that the action-value function follows the Bellman equation (c.f. {\cite[Equation~(66)]{lewis2009reinforcement}}):
	\begin{align}
		&{Q^{\phi}}(\sigx_n,\sigu_n)=\gamma{Q^{\phi}}(\sigx_{n+1},\phi(\sigx_{n+1}))+R(\sigx_n,\sigu_n). \label{bellman2}
	\end{align}
	For robotics applications, where the states and controls are continuous,
	some form of function approximators is required to approximate the action-value function (and/or policies).
	Nonparametric learning such as a kernel method is often desirable when {\it a priori} knowledge about a suitable set of basis functions for learning is unavailable.
	Kernel-based reinforcement learning has been studied in the literature, e.g., \cite{ormoneit2002kernel,xu2007kernel,taylor2009kernelized,sun2016online,nishiyama2012hilbert,grunewalder2012modelling,engel2005reinforcement,barreto2016practical,barreto2011reinforcement,kveton2012kernel,bae2011reinforcement,ormoneit2002kernel,reisinger2008online,cui2017kernel,van2015learning}.
	Due to the property of reproducing kernels, the framework of linear learning algorithms is
	directly applied to nonlinear function estimation tasks in a possibly infinite-dimensional functional space, namely a reproducing kernel Hilbert space.
	\begin{definition}[{\cite[page~343]{aronszajn50}}]
		Given a nonempty set $\mathcal{Z}$ and $\H$ which is a Hilbert space defined in $\mathcal{Z}$, the function $\ka{\sigz,\sigw}$ of $\sigz$ is called a reproducing kernel of $\H$ if
		\begin{enumerate}
			\item for every $\sigw\in\mathcal{Z}$, $\ka{\sigz,\sigw}$ as a function of $\sigz\in\mathcal{Z}$ belongs to $\H$, and
			\item it has the reproducing property, i.e., the following holds for every $\sigw\in\mathcal{Z}$ and every $\varphi\in\H$:
			\begin{equation}
			\varphi(\sigw)=\inpro{\varphi,\ka{\cdot,\sigw}}_{\H}. \nn
			\end{equation}
		\end{enumerate}
		If $\H$ has a reproducing kernel, $\H$ is called a Reproducing Kernel Hilbert Space (RKHS).
	\end{definition}
	One of the examples of kernels is the Gaussian kernel
	$\kappa(\sigx,\sigy):=
	\dfrac{1}{(2\pi\sigma^2)^{L/2}}
	\exp\left(-\dfrac{\norm{\sigx - \sigy}_{\real^{L}}^2}
	{2\sigma^2}\right)$, $\sigx,\sigy\in\Real^L$, $\sigma>0$.
	It is well-known that the Gaussian reproducing kernel Hilbert space has universality \cite{steinwart01}, i.e., any continuous function on every compact subset of $\Real^L$ can be
	approximated with an arbitrary accuracy.
	Another widely used kernel is the polynomial kernel
	$\kappa(\sigx,\sigy):=(\sigx^{\T}\sigy+c)^{d},\;c\geq0,d\in\integer_{>0}$.

In contrast to these existing approaches, we explicitly define a so-called reproducing kernel Hilbert space (RKHS) so that adaptive supervised learning
of action-value functions can be conducted in the same space without having to reset the learning.
	Consequently, we can also conduct an action-value function approximation in the same RKHS even after the agent dynamics changes or
 policies are updated (See the remark below Theorem \ref{theoRKHS} and Section \ref{experisimQfunc}).
	The GP SARSA can also be reproduced by employing a GP in the explicitly defined RKHS as is discussed in Appendix \ref{appencomp}.
Specifically, in this paper, a possibly nonstationary agent dynamics is considered as detailed below.
\subsection{System Model}
\label{subsec:problemsettings}
In this paper, we consider the following discrete-time deterministic nonlinear model of the nonstationary agent dynamics,
\begin{align}
	\sigx_{n+1}-\sigx_n=p(\sigx_n,\sigu_n)+f(\sigx_n)+g(\sigx_n)\sigu_n, \label{genemodel}
\end{align}	
where $p:\X\x\U\rightarrow\Real^{n_x}$, $f:\X\rightarrow\Real^{n_x}$, $g:\X\rightarrow\Real^{n_x\x n_u}$ are continuous.
Hereafter, we regard $\X\x\U$ as the same as $\mathcal{Z}\subset\Real^{n_x+n_u}$ under the one-to-one correspondence between $\sigz:=[\sigx;\sigu]\in\mathcal{Z}$ and $(\sigx,\sigu)\in\X\x\U$ if there is no confusion.

We consider an agent with dynamics given in \refeq{genemodel}, and the goal is
to find an optimal policy which drives the agent to a desirable state {\em while remaining in the set of safe states (or the safe set)} 
$\mathcal{C}\subset\X$ defined as
\begin{align}
	\mathcal{C}:=\{\sigx\in\X|B(\sigx)\geq 0\}, \label{safeset}
\end{align}
where $B:\X\rightarrow\Real$.
An optimal policy is a policy $\phi$ that attains an optimal value $Q^{\phi}(\sigx,\phi(\sigx))$
for every state $\sigx\in\X$.
Note that the value associated with a policy varies when the dynamics is nonstationary, and that
a quadruple $(\sigx_n,\sigu_n,\sigx_{n+1},R(\sigx_n,\sigu_n))$ is available at each time instant $n$.

With these preliminaries in place, we can present our safe learning framework.
\section{Safe Learning Framework}
\label{sec:overview}
Under possibly nonstationary dynamics, our safe learning framework
adaptively estimates the long-term rewards to update policies with safety constraints.
Also, recovery of safety in the sense of Lyapunov stability during exploration is guaranteed
under certain conditions.
Define $\psi:\mathcal{Z}\rightarrow\Real$ as $\psi(\sigx,\sigu):=p(\sigx,\sigu)+f(\sigx)+g(\sigx)\sigu$,
and suppose that the estimator of $\psi$ at time instant $n$, denoted by $\hat{\psi}_n$, is approximated by the model parameter $\sigh_n\in\Real^r,\;r\in\integer_{>0}$ in the linear form as
\begin{equation}
\hat{\psi}_n(\sigz_n):=\sigh_n^{\T}\sigk(\sigz_n). \nn
\end{equation}
Here, $\sigk(\sigz_n)\in\Real^{r}$ is the output of basis functions at $\sigz_n$.
If the model parameter is accurately estimated (or the exact agent dynamics is available), the safe set $\mathcal{C}$ becomes
forward invariant and asymptotically stable by enforcing control barrier certificates at each time instant $n$.
\subsection{Discrete-time Control Barrier Functions}
The idea of control barrier functions is similar to Lyapunov functions;
they require no explicit computations of the forward reachable set while
ensuring certain properties by constraining the instantaneous control input.
Particularly, control barrier functions guarantee that an agent starting from the safe set
remains safe (i.e., forward invariance), and that an agent outside of the safe set is forced back to safety (i.e., Lyapunov stability with respect to the safe set).
To make barrier certificates compatible with model learning and reinforcement learning,
we employ the discrete-time control barrier certificates.
\begin{definition}[{\cite[Definition~4]{2017dcbf}}]
	A map $B:\X\rightarrow\Real$ is a discrete-time exponential control barrier function if 
	there exists a control input $\sigu_n\in\U$ such that 
	\begin{align}
		B(\sigx_{n+1})-B(\sigx_n)\geq -\eta B(\sigx_n), ~\forall n\in\integer_{\geq 0}, ~0< \eta \leq1. \label{DCBFdef2}
	\end{align}
\end{definition}
Note that we intentionally removed the condition $B(\sigx_0)\geq0$ originally presented in {\cite[Definition~4]{2017dcbf}}.
Then, the forward invariance and asymptotic stability with respect to the safe set are ensured by the following proposition.
\begin{proposition}
	\label{prpCBF}
	The set $\mathcal{C}$ defined in \refeq{safeset} for a valid discrete-time exponential control barrier function $B:\X\rightarrow\Real$ is forward invariant
	when $B(\sigx_0)\geq0$, and is asymptotically stable when $B(\sigx_0)<0$.
\end{proposition}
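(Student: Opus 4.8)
The plan is to convert the defining inequality \refeq{DCBFdef2} into a one-step estimate and iterate it. First I would set $\lambda:=1-\eta$, so that $0<\eta\le1$ forces $\lambda\in[0,1)$, and rewrite \refeq{DCBFdef2} as $B(\sigx_{n+1})\ge\lambda B(\sigx_n)$ for every $n\in\integer_{\geq 0}$ along a trajectory produced by an admissible control sequence. A one-line induction then gives $B(\sigx_n)\ge\lambda^n B(\sigx_0)$ for all $n\in\integer_{\geq 0}$.

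With this inequality in hand, forward invariance is immediate: if $B(\sigx_0)\ge0$, then $\lambda^n B(\sigx_0)\ge0$, hence $B(\sigx_n)\ge0$, i.e., $\sigx_n\in\mathcal{C}$ for all $n$, which is precisely the forward invariance of $\mathcal{C}$.

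For the remaining case $B(\sigx_0)<0$ I would work with the nonnegative candidate $V(\sigx):=\max\{0,-B(\sigx)\}$, which vanishes exactly on $\mathcal{C}$. A short case split on the signs of $B(\sigx_n)$ and $B(\sigx_{n+1})$ shows $V(\sigx_{n+1})\le\lambda V(\sigx_n)$ in each case, whence $V(\sigx_n)\le\lambda^n V(\sigx_0)\to0$. Since $V$ is monotonically nonincreasing along the trajectory, converges to $0$, and is zero only on $\mathcal{C}$, it serves as a Lyapunov function certifying asymptotic stability of $\mathcal{C}$ via the standard Lyapunov theorem for stability of sets. Equivalently, one can avoid $V$ altogether: either $B(\sigx_{n_0})\ge0$ for some finite $n_0$, after which the forward-invariance argument applied from time $n_0$ pins the agent in $\mathcal{C}$, or $B(\sigx_n)<0$ for every $n$, in which case $0\le-B(\sigx_n)\le\lambda^n(-B(\sigx_0))\to0$, so the agent returns to $\mathcal{C}$.

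The part that needs the most care --- and which I would flag explicitly --- is what ``asymptotically stable'' should mean for the set $\mathcal{C}$ and what regularity of $B$ it tacitly requires. The geometric bound only delivers that $B(\sigx_n)$ (equivalently $V(\sigx_n)$) returns to the safe region; to upgrade this to genuine Lyapunov-sense asymptotic stability one also needs $V$ to be sandwiched by class-$\mathcal{K}$ functions of $\mathrm{dist}(\sigx,\mathcal{C})$ near $\mathcal{C}$, which is not automatic for a merely continuous $B$ and would require a mild extra assumption (e.g., $B$ continuous together with a nonvanishing-gradient-type condition on $\partial\mathcal{C}$). Aside from that interpretive point, the argument is just the geometric iteration above, so I do not expect a real obstacle.
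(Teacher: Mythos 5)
Your argument is correct and takes essentially the same route as the paper: the paper's proof cites \cite{2017dcbf} for forward invariance and concludes asymptotic stability from exactly the geometric bound $B(\sigx_n)\geq(1-\eta)^nB(\sigx_0)\to 0$ that you derive by induction. Your closing caveat about upgrading convergence of $B(\sigx_n)$ to set-stability in the strict Lyapunov sense is a fair observation, but the paper's own proof does not address that point either and simply concludes from the limit.
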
	
\begin{proof}
	See Appendix \ref{appprp3-1}.
\end{proof}	
Proposition \ref{prpCBF} implies that an agent remains in the safe set defined in \refeq{safeset} for all time if $B(\sigx_0)\geq0$ and the inequality \refeq{DCBFdef2} are satisfied, and the agent outside of the safe set is brought back to safety.

The main motivations of using control barrier functions are given below:
\begin{enumerate}
	\renewcommand{\labelenumi}{\alph{enumi}).}
	\item Little modifications of policies: control barrier functions modify polices
	only when violations of safety are imminent.  Consequently, an inaccurate or rough estimation of the model causes less negative effect on (model-free) reinforcement learning.  
	\item Asymptotic stability of the safe set: the agent outside
	of the safe set is brought back to the safe set.
	In addition to Proposition \ref{prpCBF}, this robustness property is analyzed in \cite{xu2015robustness}.
	This property together with the adaptive model learning algorithm presented in the next subsection is particularly important when the safety is violated due to the nonstationarity of the agent dynamics.
\end{enumerate}
Under a possibly nonstationary agent dynamics, we can no longer guarantee that the current estimate of the model parameter
is sufficiently accurate to enforce the inequality \refeq{DCBFdef2} or forward invariance of $\mathcal{C}$.
Nevertheless, we are still able to show that safety is recovered in the sense of Lyapunov stability under certain conditions by {\em adaptively} learning the model.

\subsection{Adaptive Model Learning Algorithms with Monotone Approximation Property}
\label{subsec:overviewadaptive}
At each time instant, an input-output pair $(\sigz_n,\delta_n)$, where $\sigz_n:=[\sigx_n;\sigu_n]$ and $\delta_n:=\sigx_{n+1}-\sigx_n$ for model learning
	is available.  
	Under possibly nonstationary agent dynamics, it is vital for the model parameter estimation to be stable even
	after the agent dynamics changes.
	In this paper, we employ an adaptive algorithm with monotone approximation property.
	Note this approach shares a similar idea with stable online learning based on Lyapunov-like conditions.

	Suppose that the estimate of model parameter at time instant $n$ is given by $\sigh_n\in\Real^{r},\;r\in\integer_{>0}$.
	Given a cost function $\Theta_n(\sigh)$ at time instant $n$, we update the parameter $\sigh_n$ so as to satisfy
	the strictly monotone approximation property $\norm{\sigh_{n+1}-\sigh_n^{*}}_{\Real^{r}}<\norm{\sigh_{n}-\sigh_n^{*}}_{\Real^{r}},\;\forall \sigh_n^{*}\in\Omega_n:=\argmin_{\sigh\in\Real^{r}}\Theta_n(\sigh)$
	if $\sigh_n\notin\Omega_n\neq\emptyset$, where $\emptyset$ is the empty set.
Then, if
$\Omega:=\bigcap_{n\in\integer_{\geq0}}\Omega_n$ is nonempty and if $\sigh_n\notin\Omega_n$,
it follows that $\norm{\sigh_{n+1}-\sigh^{*}}_{\Real^{r}}<\norm{\sigh_{n}-\sigh^{*}}_{\Real^{r}},\;\forall \sigh^{*}\in\Omega,\;n\in\integer_{\geq0}$.
This is illustrated in \reffig{fig:monotone}.
\begin{figure}[t]
	\begin{center}
		\includegraphics[clip,width=0.35\textwidth]{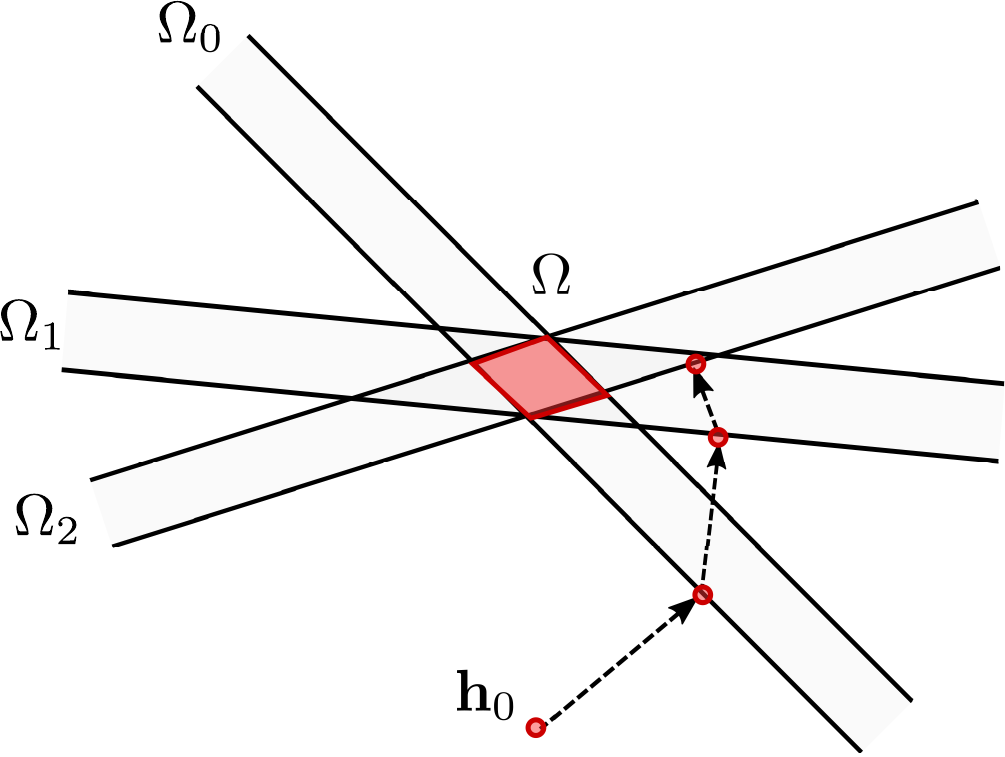}
		\caption{An illustration of the monotone approximation property.  The estimate $\sigh_n$ monotonically approaches to the set $\Omega$ of optimal vectors $\sigh^{*}$
			by sequentially minimizing the distance between $\sigh_n$ and $\Omega_n$.  Here, $\Omega_n:=\argmin_{\sigh\in\Real^{r}}\Theta_n(\sigh)$, where $\Theta_n(\sigh)$
			is the cost function at time instant $n$.}
		\label{fig:monotone}
	\end{center}
\end{figure}
Under mild conditions, we can also design algorithms (e.g., the adaptive projected subgradient method \cite{APSM1}) that satisfy $\norm{\sigh_{n}-\sigh_n^{*}}^2_{\Real^{r}}-\norm{\sigh_{n+1}-\sigh_n^{*}}^2_{\Real^{r}}\geq\varrho^2_3 dist^2(\sigh_n,\Omega_n)$, for all $\sigh_n^{*}\in\Omega_n$, and for some $\varrho_3>0$, where $dist(\sigh_n,\Omega_n):=\inf\{\norm{\sigh_n-\sigh^{*}_n}_{\Real^{r}}|\sigh^{*}_n\in\Omega_n\}$. (See \cite{APSM1} for more detailed arguments for example.)

At each time instant, we use the current estimate of the model to constrain control inputs so that they satisfy
\begin{align}
&B(\hat{\sigx}_{n+1})-B(\sigx_n)\geq -\eta B(\sigx_n)+\varrho_1,\;\forall n\in\integer_{\geq0},~0< \eta \leq1, \nn
\end{align}
for some margin $\rho_1>0$,
where $\hat{\sigx}_{n+1}$ is the predicted output of the current estimate $\sigh_n$ at $\sigx_n$ and $\sigu_n$.
Then, under certain conditions, we can guarantee Lyapunov stability of the system for the augmented state $[\sigx_n;\sigh_n]\in\Real^{n_x+r}$ with respect to the forward invariant set $\mathcal{C}\x\Omega\subset\Real^{n_x+r}$ as illustrated in \reffig{fig:augment}.
\begin{figure}[t]
	\begin{center}
		\includegraphics[clip,width=0.35\textwidth]{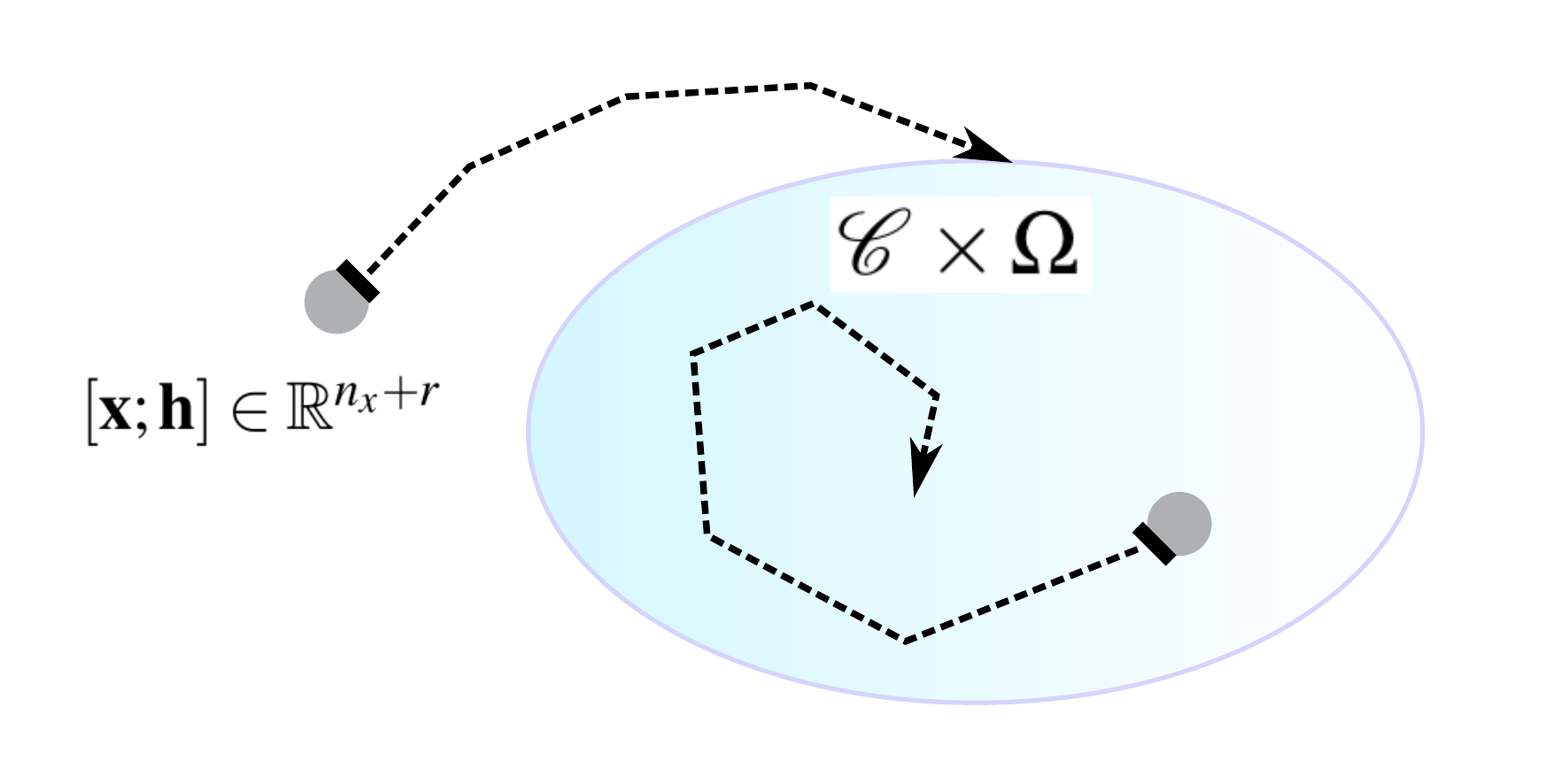}
		\caption{An illustration of Lyapunov stability of the system for the augmented state $[\sigx;\sigh]\in\Real^{n_x+r}$ with respect to the forward invariant set $\mathcal{C}\x\Omega\subset\Real^{n_x+r}$.}
		\label{fig:augment}
	\end{center}
\end{figure}
In Sections \ref{subsec:safeman} and \ref{sec:numerical}, we will theoretically and experimentally show that the system for the augmented state is stable on the set of augmented safe states.

To efficiently constrain policies by using control barrier functions, the learned model
is preferred to be affine in control.  (see Section \ref{subsec:adaptiveRL} and Theorem \ref{CBFaffine} therein.)
As such, outputs of the learned model should have preferred dynamic structures while capturing
the true agent dynamics.
\subsection{Leaning Dynamic Structure via Sparse Optimizations}
\label{subsec:overviewdynamic}
Control-affine dynamics is given by \refeq{genemodel} with $p=0$, where $0$ denotes the null function.
Therefore, the simplest way is to learn the agent dynamics with the constraint $p=0$.
In practice, however, it is unrealistic to assume that $p=0$ due to
the effects of frictions and other disturbances.
Instead, as long as the term $p$ is negligibly small, we can consider $p$ to be a {\em system noise} added to a control-affine dynamics.
To encourage the term $p$ to be as small as possible while capturing the true input-output relations of the agent dynamics,
we use adaptive sparse optimization techniques.
In particular, motivated by the monotone approximation property due to convexity of the formulations, we use (sparse) kernel adaptive filters for the systems with nonlinear dynamics.
Specifically, we take the following steps to extract the control-affine structure:
\begin{enumerate}
	\item Assume for simplicity that $n_x=1$.  We suppose that $p\in\H_p$, $f\in\H_f$, and $g^{(1)},g^{(2)},...,g^{(n_u)}\in\H_g$, where $\H_p$, $\H_f$ and $\H_g$ are RKHSs, and $g(\sigx)=[g^{(1)}(\sigx),g^{(2)}(\sigx),\cdots,g^{(n_u)}(\sigx)]$.
	\item Let $\H_u$ be the RKHS associated with the reproducing kernel $\ka{\sigu,\sigv}:=\sigu^{\T}\sigv,\;\sigu,\sigv\in\U$,
	and $\H_c$ the set of constant functions on $\U$.
	Estimate the function $\psi$ in the RKHS $\H_{\psi}:=\H_p+\H_f\otimes\H_c+\H_g\otimes\H_u$ (see Section \ref{subsec:modellearn} and Theorem \ref{directtheorem} therein).
	\item Define the cost $\Theta_n$ so as to promote sparsity of the model parameter.  If the underlying true dynamics is affine in control,
	a control-affine model (i.e., the estimate of $p$ denoted by $\hat{p}$ becomes null) is expected to be extracted.
\end{enumerate}	
The resulting control-affine part of the estimated dynamics is used in combination with control barrier certificates 
in order to efficiently constrain policies while and after learning an optimal policy. (see Theorem \ref{maintheo} and Theorem \ref{CBFaffine} for more details.) 

\subsection{Barrier-certified Policy Update}
Lastly, we present the barrier-certified policy update strategy.
To update policies, we use the long-term rewards that needs to be adaptively estimated for systems with possibly nonstationary agent dynamics.
\subsubsection{Adaptive Action-value Function Approximation in RKHSs}
\label{subsec:overviewbellman}
Again, motivated by the monotone approximation property (see Corollary \ref{theoNonex}) and the flexibility of nonparametric learning that requires no fixed set of basis functions, we employ kernel-based adaptive algorithms to estimate the action-value function. 
One of the issues arising when applying a kernel-based method to an action-value function approximation
is that the output of the action-value function $Q^{\phi}(\sigx_n,\sigu_n)\in\H_Q$ associated with a policy $\phi$, where $\H_{Q}$ is assumed to be an RKHS, is unobservable.
Nevertheless, we know that the action-value function follows the Bellman equation \refeq{bellman2}.
Hence, by defining a function $\psi^Q:\mathcal{Z}^2\rightarrow\Real$,
where $\Real^{2(n_x+n_u)}\supset\mathcal{Z}^2=\mathcal{Z}\x\mathcal{Z}$, as
\begin{align}
&\psi^Q([\sigz;\sigw]):={Q^{\phi}}(\sigx,\sigu)-\gamma {Q^{\phi}}(\sigy,\sigv),\label{psidef}\\
&\sigx,\sigy\in\X,\;\sigu,\sigv\in\U,\;\sigz=[\sigx;\sigu],\;\sigw=[\sigy;\sigv], \nn
\end{align}	
the Bellman equation in \refeq{bellman2} is solved via iterative nonlinear function estimation with the input-output pairs $\{([\sigx_n;\sigu_n;\sigx_{n+1};\phi(\sigx_{n+1})],R(\sigx_n,\sigu_n))\}_{n\in\integer_{\geq0}}$.
In fact, the function $\psi^Q$ is an element of a properly constructed RKHS $\H_{\psi^Q}$ (see Section \ref{subsec:adaptiveRL} and Theorem \ref{theoRKHS} therein).
Because the domain of $\H_{\psi^Q}$ is defined as $\mathcal{Z}\x\mathcal{Z}$ instead of $\mathcal{Z}$, the RKHS $\H_{\psi^Q}$ does not depend on the agent dynamics.
Therefore, we do not have to reset learning even after the dynamics changes or the policy is updated, and we can analyze
convergence and/or monotone approximation property of an action-value function approximation in the same RKHS (see Section \ref{experisimQfunc}, for example).

\subsubsection{Policy Update}
For a current policy $\phi:\X\rightarrow\U$, assume that the action-value function $Q^{\phi}$
with respect to $\phi$ at time instant $n$ is available.
Given a discrete-time exponential control barrier function $B$ and $0< \eta\leq1$, the barrier certified safe control space is define as
\begin{align}
\hspace{-0.5em}\mathcal{S}(\sigx_n):=\{\sigu_n\in\U|B(\sigx_{n+1})-B(\sigx_n)\geq -\eta B(\sigx_n)\}. \nn
\end{align}
From Proposition \ref{prpCBF}, the set $\mathcal{C}$ defined in \refeq{safeset} is forward invariant and asymptotically stable if
$\sigu_n\in \mathcal{S}(\sigx_n)$ for all $n\in\integer_{\geq0}$.
Then, the updated policy $\phi^+$ given by
\begin{align}
\phi^+(\sigx):=\argmax_{\sigu\in{\mathcal{S}}(\sigx)}\left[Q^{\phi}(\sigx,\sigu)\right], \label{improvepolicy}
\end{align}
is well-known (e.g., \cite{puterman1979convergence,bertsekas2005dynamic}) to satisfy that $Q^{\phi}(\sigx,\phi(\sigx))\leq Q^{\phi^+}(\sigx,\phi^+(\sigx))$,
where $Q^{\phi^+}$ is the action-value function with respect to $\phi^+$.
In practice, we use the estimate of $Q^{\phi}$ because the exact function $Q^{\phi}$ is unavailable.
For example, the action-value function is estimated over $N_f\in\integer_{>0}$ iterations, and the policy is updated every $N_f$ iterations.
\section{Analysis of Barrier-certified Adaptive Reinforcement Learning}
\label{sec:proposed}
In the previous section, we presented our barrier-certified adaptive reinforcement learning framework.
In this section, we present theoretical analysis of our framework to further strengthen the arguments.
\subsection{Safety Recovery: Adaptive Model Learning and Control Barrier Certificates}
\label{subsec:safeman}
The monotone approximation property of model parameters is closely related to Lyapunov stability.
In fact, by augmenting the state vector with the model parameter, we can construct
a Lyapunov function which guarantees stability with respect to the safe set under certain conditions.

We first make following assumptions.
\begin{assumption}
	\label{assumstab}
	\begin{enumerate}
	\item Finite-dimensional model parameter: the dimension of model parameter $\sigh$ remains finite, and is $r\in\integer_{>0}$.
	\item Boundedness of the basis functions: all of the basis functions (or kernel functions) are bounded over $\X$.
	\item Lipschitz continuity of the control barrier function: the control barrier function $B$ is Lipschitz continuous over $\X$ with Lipschitz constant $\nu_B$.
	\item Validity of barrier certificates: there exists a control input $\sigu_n\in\U$ satisfying for a sufficiently small $\varrho_1>0$ that
	\begin{align}
	&B(\hat{\sigx}_{n+1})-B(\sigx_n)\geq -\eta B(\sigx_n)+\varrho_1,\nn\\
	&~~~~~~~~~~~~~~~~ \forall n\in\integer_{\geq0},~0< \eta \leq1, \label{DCBFdef3}
	\end{align}
	where $\hat{\sigx}_{n+1}$ is the predicted output of the current estimate $\sigh_n$ at $\sigx_n$ and $\sigu_n$.
	\item Appropriate cost functions: if $\sigh_n\in\Omega_n:=\argmin_{\sigh\in\Real^{r}}\Theta_n(\sigh)$, where $\Theta_n(\sigh)$
	is the continuous cost function at time instant $n$, then $\norm{\sigx_{n+1}-\hat{\sigx}_{n+1}}_{\Real^{n_x}}\leq \frac{\varrho_1}{\nu_B}$.
	\item Model learning with monotone approximation property: model parameter $\sigh_n$ is updated as $\sigh_{n+1}=T_n(\sigh_n)$, where $T_n:\Real^{r}\rightarrow\Real^{r}$
	is continuous and has monotone approximation property: if $\sigh_n\notin\Omega_n$, then $dist^2(\sigh_n,\Omega_n)\geq\varrho^2_2$ and $\norm{\sigh_{n}-\sigh_n^{*}}^2_{\Real^{r}}-\norm{\sigh_{n+1}-\sigh_n^{*}}^2_{\Real^{r}}\geq\varrho^2_3 dist^2(\sigh_n,\Omega_n)$, for all $\sigh_n^{*}\in\Omega_n$,
	and for some $\varrho_2,\varrho_3>0$.  If $\sigh_n\in\Omega_n$, then $\sigh_{n+1}=\sigh_n$.
	\item Data consistency: The set $\Omega:=\bigcap_{n\in\integer_{\geq 0}}\Omega_n$ is nonempty.
	\end{enumerate}
\end{assumption}	
\begin{remark}[On Assumption \ref{assumstab}.1]	
	Assumption \ref{assumstab}.1 is made so that Lyapunov stability can be analyzed in an Euclidean space and is reasonable if polynomial kernels are employed for learning or if the input space $\mathcal{Z}:=\X\x\U$ is compact.
\end{remark}	
\begin{remark}[On Assumptions \ref{assumstab}.2 and \ref{assumstab}.3]	
	Assumptions \ref{assumstab}.2 and \ref{assumstab}.3 ensure that the predicted value of the barrier function
		is close to its true value if the current estimate of model parameter is close to the true parameter.
\end{remark}	
\begin{remark}[On Assumption \ref{assumstab}.4]	
	Assumption \ref{assumstab}.4 implies that we can enforce barrier certificates for the current estimate of the dynamics with a
	sufficiently small margin $\varrho_1$.
	This assumption is necessary to implicitly bound the growth of $B(\sigx_{n+1})$ and to robustly enforce barrier certificates whenever $\sigh_n\in\Omega_n$.
	Although this assumption is somewhat restrictive, it is still reasonable if the initial estimate does not largely deviate from the true dynamics.
\end{remark}	
\begin{remark}[On Assumption \ref{assumstab}.5]	
	Assumption \ref{assumstab}.5 implies that the set $\Omega_n$ or equivalently the cost $\Theta_n$
	is designed so that the predicted output $\hat{\sigx}_{n+1}$ for $\sigh_n\in\Omega_n$ is sufficiently close to the true output $\sigx_{n+1}$.  Such a cost can be easily designed.
	This assumption is necessary to render the set $\mathcal{C}\x\Omega$ forward invariant.
\end{remark}	
\begin{remark}[On Assumptions \ref{assumstab}.6 and \ref{assumstab}.7]	
	To apply theories of Lyapunov stability, Assumption \ref{assumstab}.6 is needed to make sure that
	the dynamical system for the augmented state is continuous.  Moreover, the cost (or the set $\Omega_n$) is designed so that
	$\sigh_n\in\Omega_n$ or $dist^2(\sigh_n,\Omega_n)\geq\varrho^2_2$.  See the work in \cite{APSM1} for a class of algorithms that satisfy this property, for example.
	Unless there exist some adversarial data (or inappropriate costs) that do not reflect the true agent dynamics,
	Assumption \ref{assumstab}.7 is valid and ensures that the set of augmented safe states is nonempty.  
\end{remark}	
Let the augmented state be $[\sigx;\sigh]\in\Real^{n_x+r}$.
Then, the following theorem states that the system for the augmented state is (asymptotically) stable with respect to the set of augmented safe states even after a violation of safety due to the abrupt and unexpected change of the agent dynamics occurs.
\begin{theorem}
	\label{maintheo}
	Suppose that a triple $(\sigx_n,\sigu_n,\sigx_{n+1})$ is available at time instant $n+1$.
Suppose also that a control input $\sigu_n$ satisfying \refeq{DCBFdef3} is employed for all $n\in\integer_{\geq0}$.
Then, under Assumption \ref{assumstab}, the system for the augmented state is stable with respect to the set of augmented safe states $\mathcal{C}\x\Omega\subset\Real^{n_x+r}$.
If, in addition, $\sigh_n\notin\Omega_n$ for all $n\in\integer_{\geq0}$ such that $[\sigx_n;\sigh_n]\notin\mathcal{C}\x\Omega$, then the system is uniformly globally asymptotically stable
with respect to $\mathcal{C}\x\Omega\subset\Real^{n_x+r}$.
\end{theorem}	
\begin{proof}
	See Appendix \ref{applyapu}.
\end{proof}	
\begin{remark}[On Theorem \ref{maintheo}]
	Theorem \ref{maintheo} implies that how much the current estimate gets closer to the true dynamics depends on how much the next state of the agent is deviated from the predicted next state.
	Therefore, both barrier certificates and model learning work together to guarantee stability.
	If the model learning algorithm satisfies Assumption \ref{assumstab}, then Theorem \ref{maintheo} claims
	that safety is recovered successfully.
	When GPs or kernel ridge regressions are employed for model learning, for example, introducing forgetting factors or
	letting the sample size grow as time advances will make the algorithms adaptive to time-varying systems;
	in such cases, we need to make sure that the algorithms satisfy Assumption \ref{assumstab} to guarantee safety recovery.
	Numerical simulations about safety recovery is given in Section \ref{subsec:simulate}.
\end{remark}
If the agent dynamics keeps changing or if we know that there are multiple modes for dynamics, then we may have separate model learning processes as proposed in \cite{mckinnon2018experience},
and the augmented state can be regarded as following a hybrid system.  Hence, stability should be analyzed under additional assumptions in this case.
We leave such an analysis as a future work.
\subsection{Structured Model Learning}
\label{subsec:modellearn}
We have seen that, by employing a model learning with monotone approximation property under Assumption \ref{assumstab}, the agent is stabilized on the set of augmented safe states even after an abrupt and unexpected change of the agent dynamics.
Here, we show that a control-affine dynamics can be learned via sparse optimizations satisfying monotone approximation property in a properly defined RKHS.
We assume that $n_x=1$ for simplicity (we can employ $n_x$ approximators if $n_x>1$).

First, we show that the space $\H_c$ (see Section \ref{subsec:overviewdynamic}) is an RKHS.
\begin{lemma}
	\label{prp1}
	The space $\H_c$
	is an RKHS associated with the reproducing kernel $\kappa(\sigu,\sigv)=\textbf{1}(\sigu):=1,\forall \sigu,\sigv\in\U$,
	with the inner product defined as $\inpro{\alpha\textbf{1},\beta\textbf{1}}_{\H_c}:=\alpha\beta$, $\alpha,\beta\in\Real$.
\end{lemma}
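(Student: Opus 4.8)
The plan is to verify directly that $\H_c$, equipped with the stated inner product, is a Hilbert space of functions on $\U$, and then to check the two defining conditions of a reproducing kernel from the definition above for the candidate $\kappa\equiv1$. First I would pin down well-definedness of the inner product: since $\U\neq\emptyset$, the assignment $\Real\to\H_c$, $\alpha\mapsto\alpha\textbf{1}$ — where $\textbf{1}$ denotes the function on $\U$ identically equal to $1$ — is a bijection, so every $\varphi\in\H_c$ is written uniquely as $\varphi=\alpha\textbf{1}$ and the prescription $\inpro{\alpha\textbf{1},\beta\textbf{1}}_{\H_c}:=\alpha\beta$ is unambiguous. Bilinearity, symmetry, and positive definiteness of $\inpro{\cdot,\cdot}_{\H_c}$ are then immediate from the corresponding properties of multiplication on $\Real$, and the induced norm is $\norm{\alpha\textbf{1}}_{\H_c}=\abs{\alpha}$, so $\alpha\mapsto\alpha\textbf{1}$ is an isometric isomorphism of $\Real$ onto $\H_c$; hence $\H_c$ is complete and therefore a (one–dimensional) Hilbert space.

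Next I would check the two conditions in the definition of a reproducing kernel. For condition~1, for every $\sigw\in\U$ the function $\sigu\mapsto\ka{\sigu,\sigw}=1$ is exactly $\textbf{1}\in\H_c$, so it belongs to $\H_c$. For condition~2 (the reproducing property), take any $\varphi=\alpha\textbf{1}\in\H_c$ and any $\sigw\in\U$; then $\ka{\cdot,\sigw}=1\cdot\textbf{1}$, and therefore $\inpro{\varphi,\ka{\cdot,\sigw}}_{\H_c}=\inpro{\alpha\textbf{1},1\cdot\textbf{1}}_{\H_c}=\alpha\cdot1=\alpha=\varphi(\sigw)$, as required. This shows $\kappa\equiv1$ is a reproducing kernel of $\H_c$, so $\H_c$ is an RKHS.

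I do not expect a genuine obstacle here: the statement is essentially a bookkeeping check. The only points that merit a word of care are (i) well-definedness of the inner product, which reduces to a constant function determining its value uniquely — valid precisely because $\U$ is nonempty — and (ii) completeness, which is inherited from $\Real$ through the isometry $\alpha\mapsto\alpha\textbf{1}$; the degenerate case $\U=\emptyset$ is vacuous and can be excluded by convention.
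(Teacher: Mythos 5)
Your proof is correct and follows essentially the same route as the paper's: both directly verify that $\inpro{\cdot,\cdot}_{\H_c}$ is a valid inner product on the one-dimensional space ${\rm span}\{\textbf{1}\}$ (complete since it is finite-dimensional, or equivalently isometric to $\Real$) and then check the reproducing property $\inpro{\alpha\textbf{1},\kappa(\cdot,\sigw)}_{\H_c}=\alpha=\varphi(\sigw)$. Your added remarks on well-definedness of the representation $\varphi=\alpha\textbf{1}$ (requiring $\U\neq\emptyset$) are a harmless refinement of the same argument.
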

\begin{proof}
	See Appendix \ref{appprp1}.
\end{proof}	
Then, the following lemma implies that $\psi$ can be approximated in the sum space of RKHSs denoted by $\H_{\psi}$.
\begin{lemma}[{\cite[Theorem~13]{moor1}}]
	\label{prp2}
	Let $\H_1$ and $\H_2$ be two RKHSs associated with the reproducing 
	kernels $\kappa_1$ and $\kappa_2$.  Then the completion of the tensor product of $\H_1$ and $\H_2$, denoted by $\H_1\otimes\H_2$, is an RKHS associated with the reproducing kernel $\kappa_1\otimes\kappa_2$.
\end{lemma}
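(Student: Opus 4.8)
The plan is to build the completed tensor product $\H_1\otimes\H_2$ concretely via the feature‑map / algebraic‑tensor‑product construction and then verify directly that the Schur product kernel is its reproducing kernel. Write $\mathcal{Z}_1,\mathcal{Z}_2$ for the domains on which $\H_1,\H_2$ live, so that $(\kappa_1\otimes\kappa_2)\big((z_1,z_2),(w_1,w_2)\big):=\kappa_1(z_1,w_1)\kappa_2(z_2,w_2)$ is a function on $(\mathcal{Z}_1\times\mathcal{Z}_2)^2$. First I would equip the algebraic tensor product with the unique bilinear form satisfying $\inpro{f_1\otimes f_2,\ g_1\otimes g_2}:=\inpro{f_1,g_1}_{\H_1}\inpro{f_2,g_2}_{\H_2}$ on elementary tensors; the one point that needs care is positive‑definiteness, which follows by expanding an arbitrary element with respect to an orthonormal family in the first factor and reducing to positivity in the second factor. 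Completing in the induced norm yields the Hilbert space $\H_1\otimes\H_2$.

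Next I would exhibit $\H_1\otimes\H_2$ as an actual space of functions on $\mathcal{Z}_1\times\mathcal{Z}_2$ via the linear extension of $(f_1\otimes f_2)(z_1,z_2):=f_1(z_1)f_2(z_2)$. Two facts make this legitimate. First, on the dense algebraic tensor product the point evaluation at $(z_1,z_2)$ equals $\inpro{\,\cdot\,,\ \kappa_1(\cdot,z_1)\otimes\kappa_2(\cdot,z_2)}$, since $\inpro{f_1\otimes f_2,\ \kappa_1(\cdot,z_1)\otimes\kappa_2(\cdot,z_2)}=f_1(z_1)f_2(z_2)$ by the reproducing properties of $\H_1$ and $\H_2$ separately; hence point evaluations are bounded and extend continuously to the completion. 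Second, the map to functions is injective, because the closed linear span of $\{\kappa_1(\cdot,z_1)\otimes\kappa_2(\cdot,z_2)\}$ is all of $\H_1\otimes\H_2$ (the kernel sections $\kappa_i(\cdot,\cdot)$ span dense subspaces of $\H_i$), so an element vanishing at every $(z_1,z_2)$ is orthogonal to a dense set and therefore zero. Granting this, $(\kappa_1\otimes\kappa_2)(\cdot,(w_1,w_2))=\kappa_1(\cdot,w_1)\otimes\kappa_2(\cdot,w_2)$ is an elementary tensor, hence lies in $\H_1\otimes\H_2$, and the computation above combined with linearity and the established continuity of point evaluations gives the reproducing identity $F(w_1,w_2)=\inpro{F,\ \kappa_1(\cdot,w_1)\otimes\kappa_2(\cdot,w_2)}$ for every $F\in\H_1\otimes\H_2$. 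This is exactly the statement that $\H_1\otimes\H_2$ is an RKHS with reproducing kernel $\kappa_1\otimes\kappa_2$; uniqueness of the kernel is then automatic, and positive‑definiteness of $\kappa_1\otimes\kappa_2$ (a Schur product of positive‑definite kernels) is consistent with this.

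The step I expect to be the main obstacle is the passage to the completion: one must ensure that no nonzero limit element of the algebraic tensor product represents the zero function, i.e.\ that the identification with functions survives completion. This is precisely where the RKHS hypothesis on \emph{both} factors is essential — it makes point evaluations continuous on the factors, hence continuous with a uniform bound on the dense algebraic tensor product, and combined with the denseness of $\mathrm{span}\{\kappa_1(\cdot,z_1)\otimes\kappa_2(\cdot,z_2)\}$ it forces injectivity. The remaining items (bilinearity, positive‑definiteness of the tensor inner product, membership of the kernel sections) are routine verifications.
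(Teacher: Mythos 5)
Your proof is correct, but note that the paper does not actually prove this lemma: it is imported verbatim as a known result, cited to Berlinet and Thomas--Agnan (Theorem~13 of \cite{moor1}), and no argument is given in the text or appendices. Your construction --- defining the cross inner product on the algebraic tensor product, checking positive-definiteness by orthonormalizing in the first factor, completing, and then showing that point evaluation at $(z_1,z_2)$ coincides with the inner product against $\kappa_1(\cdot,z_1)\otimes\kappa_2(\cdot,z_2)$ so that evaluations are bounded and the function realization is injective on the completion --- is precisely the classical argument behind the cited theorem (going back to Aronszajn), and you correctly isolate the one genuinely delicate point, namely that no nonzero limit element can represent the zero function, which you settle via density of the span of the kernel-section tensors. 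In short: the proposal is a sound, self-contained proof of a statement the paper leaves to the literature.
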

From Lemmas \ref{prp1} and \ref{prp2}, we can now assume that $\hat{f}\in\H_f\otimes\H_c$ and $\hat{\tilde{g}}\in\H_g\otimes\H_u$,
where $\hat{\tilde{g}}$ is an estimate of $\tilde{g}(\sigx,\sigu):=g(\sigx)\sigu$.
As such, $\psi$ can be approximated in the RKHS $\H_{\psi}:=\H_p+\H_f\otimes\H_c+\H_g\otimes\H_u$.
Therefore, we can employ a kernel adaptive filter working in the sum space $\H_{\psi}$.

Second, the following theorem ensures that $\psi$ can be uniquely decomposed into $p$, $f$, and $\tilde{g}$ in the RKHS $\H_{\psi}$.
\begin{theorem}
	\label{directtheorem}
	Assume that $\X$ and $\U$ have nonempty interiors.
	Assume also that $\H_p$ is a Gaussian RKHS.
	Then, $\H_{\psi}$ is the direct sum of $\H_p$, $\H_f\otimes\H_c$, and $\H_g\otimes\H_u$, i.e.,
	the intersection of any two of the RKHSs $\H_p$, $\H_f\otimes\H_c$, and $\H_g\otimes\H_u$ is $\{0\}$.
\end{theorem}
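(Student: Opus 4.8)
The plan is to show that the three RKHSs in the sum $\H_{\psi}=\H_p+\H_f\otimes\H_c+\H_g\otimes\H_u$ pairwise intersect only in the zero function, which by definition makes $\H_{\psi}$ their direct sum. I would treat the three pairs separately, and in each case argue that a nonzero common element would contradict a structural property that distinguishes the two spaces: smoothness/analyticity for anything involving $\H_p$, and the degree in $\sigu$ (constant versus linear) for the pair $\H_f\otimes\H_c$ versus $\H_g\otimes\H_u$. The hypotheses that $\X$ and $\U$ have nonempty interiors and that $\H_p$ is Gaussian are exactly what make these arguments go through.

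First I would handle $\H_p\cap(\H_f\otimes\H_c)$ and $\H_p\cap(\H_g\otimes\H_u)$ together. A function in $\H_f\otimes\H_c$ has the form $(\sigx,\sigu)\mapsto \hat f(\sigx)$, i.e.\ it does not depend on $\sigu$; a function in $\H_g\otimes\H_u$ has the form $(\sigx,\sigu)\mapsto \langle \hat g(\sigx),\sigu\rangle$, affine-homogeneous of degree one in $\sigu$. On the other hand every element of the Gaussian RKHS $\H_p$ on $\mathcal{Z}=\X\x\U$ extends to a real-analytic function on $\Real^{n_x+n_u}$ (the Gaussian kernel is analytic, so its RKHS consists of restrictions of analytic functions). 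Suppose $h\in\H_p\cap(\H_f\otimes\H_c)$. On the interior of $\mathcal{Z}$, $h$ is analytic in $(\sigx,\sigu)$ jointly and independent of $\sigu$; that is fine by itself, so I need the stronger fact that a nonzero $\H_p$ element cannot be $\sigu$-independent — and here I would invoke that a function in $\H_f\otimes\H_c$ that lies in the Gaussian RKHS and is $\sigu$-independent, when viewed as a function on the full space, would have to decay (Gaussian RKHS functions are in $L^2$ with Gaussian-type decay, or at least vanish at infinity in the appropriate sense) in all $n_x+n_u$ variables, contradicting $\sigu$-independence unless it is identically zero; since $\U$ has nonempty interior, vanishing on an open $\sigu$-slice for all $\sigx$ forces $h\equiv0$. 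The same decay argument kills a degree-one-in-$\sigu$ element of $\H_p$: $\langle\hat g(\sigx),\sigu\rangle$ grows linearly along $\sigu$-rays, incompatible with membership in a Gaussian RKHS unless $\hat g\equiv 0$.

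For the remaining pair $(\H_f\otimes\H_c)\cap(\H_g\otimes\H_u)$, I would argue purely by homogeneity in $\sigu$. An element of the intersection equals $\hat f(\sigx)$ and also equals $\langle\hat g(\sigx),\sigu\rangle$ for all $(\sigx,\sigu)$ in a set with nonempty interior; fixing $\sigx$ in the interior of $\X$ and varying $\sigu$ over an open ball shows $\langle\hat g(\sigx),\sigu\rangle$ is constant in $\sigu$ on that ball, hence $\hat g(\sigx)=0$, hence $\hat f(\sigx)=0$; since this holds on an open subset of $\X$ and RKHS elements vanishing on an open set of a connected domain with the relevant kernels vanish identically (again using that these are restrictions of analytic functions, or directly from the structure of $\H_f$ and $\H_c$), we get that the common element is $0$. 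Concluding, the three pairwise intersections are $\{0\}$, so the sum is direct and the decomposition $\psi=p+f+\tilde g$ is unique.

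The main obstacle I anticipate is making the ``$\sigu$-independent (or $\sigu$-linear) functions are not in the Gaussian RKHS'' step fully rigorous: it rests on the precise decay/integrability characterization of the Gaussian RKHS on $\Real^{n_x+n_u}$, and on being careful that we only know agreement of the functions on $\mathcal{Z}=\X\x\U$, not on all of $\Real^{n_x+n_u}$, so I must first use analyticity together with the nonempty-interior hypotheses to extend the agreement to the whole space (unique analytic continuation from an open set) before invoking the decay obstruction. The $\H_f\otimes\H_c$ versus $\H_g\otimes\H_u$ part, by contrast, is elementary once the open-set-vanishing-implies-zero fact for those RKHSs is in hand.
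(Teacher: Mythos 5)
Your proposal is correct in outline, but for the two intersections involving $\H_p$ it takes a genuinely different route from the paper's. The paper's proof factorizes the Gaussian RKHS on $\X\x\U$ as a tensor product of Gaussian RKHSs on $\X$ and on $\U$ (Lemma~\ref{hojo4}), invokes Minh's theorem that a Gaussian RKHS over any set with nonempty interior contains no nonzero polynomial (Lemma~\ref{hojo1}) to exclude constant and linear functions already at the level of the $\U$-factor, and lifts the conclusion with a tensor-product direct-sum lemma (Lemma~\ref{hojo3}); no extension off $\mathcal{Z}$, analytic continuation, or decay estimate is needed because Minh's result is stated directly on sets with nonempty interior. You instead extend an element of $\H_p$ to a real-analytic, square-integrable element of the Gaussian RKHS on all of $\Real^{n_x+n_u}$, propagate $\sigu$-independence (resp.\ homogeneous $\sigu$-linearity) globally by unique continuation from the nonempty open set ${\rm int}(\X)\x{\rm int}(\U)$, and then apply the $L^2$ (or vanishing-at-infinity) obstruction. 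This is sound, and the facts you flag as the main obstacle --- the restriction characterization of the Gaussian RKHS on a subset, real-analyticity of its elements, and its embedding into $L^2$ --- are standard and citable, so what remains is bookkeeping rather than a missing idea; the trade-off is that the paper's single citation of Minh packages exactly the ``no constants, no linear functions'' conclusion you derive by hand. For the pair $\H_f\otimes\H_c$ versus $\H_g\otimes\H_u$ your homogeneity argument is essentially the paper's (which proves $\H_c\cap\H_u=\{0\}$ on the $\U$-factor the same way via Lemma~\ref{hojo2} and lifts it by the tensor lemma), and is in fact slightly simpler than you state: since only $\sigu$ needs to range over an open set, $\hat g(\sigx)=0$ and hence $\hat f(\sigx)=0$ follow for \emph{every} $\sigx\in\X$, so your final unique-continuation step over $\X$ is unnecessary.
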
	
\begin{proof}
	See Appendix \ref{apptheo1}.
\end{proof}	
\begin{remark}[On Theorem \ref{directtheorem}]
	Because only the control-affine part of the learned model is used in combination with barrier certificates (see Assumption \ref{assump1}
	and Theorem \ref{CBFaffine}) and the term $p$ is assumed to be a system noise added to the control-affine dynamics,
	the unique decomposition is crucial; if the unique decomposition does not hold, the term $p$ may be able to
	estimate the overall dynamics, including the control-affine terms.
\end{remark}
By using a sparse optimization for the coefficient vector $\sigh_n\in\Real^{r}$,  
we wish to extract a structure of the model;
	from Theorem \ref{directtheorem}, the term $\hat{p}_n$ is expected to drop off when the true agent dynamics is affine in control.

In order to use the learned model in combination with control barrier functions, each entry of the vector $\hat{g}_n(\sigx_n)$ is required.
Assume, without loss of generality, that $\{\textbf{e}_i\}_{i\in\{1,2,...,n_u\}}\subset\U$ (this is always possible for $\U\neq\emptyset$ by transforming
coordinates of the control inputs and reducing the dimension $n_u$ if necessary).
Then, the $i$th entry of the vector $\hat{g}_n(\sigx_n)$ is given by
$\hat{g}_n(\sigx_n)\textbf{e}_i=\hat{\tilde{g}}_n(\sigx_n,\textbf{e}_i)$.
As such, we can use the learned model to constrain control inputs efficiently by using control barrier functions for explorations as well as policy updates.
We analyze an adaptive action-value function approximation with barrier-certified policy updates in the next subsection.

\subsection{Adaptive Action-value Function Approximation with Barrier-certified Policy Updates}
\label{subsec:adaptiveRL}
In this subsection, we analyze the proposed adaptive action-value function approximation with barrier-certified policy updates.

We showed in Section \ref{subsec:overviewbellman} that the Bellman equation in \refeq{bellman2} is solved via iterative nonlinear function estimation with the input-output pairs $\{([\sigx_n;\sigu_n;\sigx_{n+1};\phi(\sigx_{n+1})],R(\sigx_n,\sigu_n))\}_{n\in\integer_{\geq0}}$.
The following theorem states that the function $\psi^Q$ defined in \refeq{psidef} can be estimated in a properly constructed RKHS.
\begin{theorem}
	\label{theoRKHS}
	Suppose that $\H_Q$ is an RKHS associated with the reproducing kernel $\kappa^Q(\cdot,\cdot):\mathcal{Z}\x\mathcal{Z}\rightarrow\Real$.
	Define, for $\gamma\in(0,1)$,
	\begin{align}
	&\H_{\psi^Q}:=\{\varphi|\varphi([\sigz;\sigw])=\varphi^Q(\sigz)-\gamma \varphi^Q(\sigw),\nn\\
	&\;\exists \varphi^Q\in\H_{Q},\;\forall \sigz,\sigw\in\mathcal{Z}\}.\nn
	\end{align}
	Then, the operator $U:\H_{Q}\rightarrow\H_{\psi^Q}$ defined by $U(\varphi^Q)([\sigz;\sigw]):=\varphi^Q(\sigz)-\gamma \varphi^Q(\sigw),\;\forall \varphi^Q\in\H_Q$,
	is bijective.
	Moreover, $\H_{\psi^Q}$ is an RKHS with the inner product defined by
	\begin{align}
	&\hspace{-1em}\inpro{\varphi_1,\varphi_2}_{\H_{\psi^Q}}:=\inpro{\varphi_1^Q,\varphi_2^Q}_{\H_{Q}},\label{theoinpro}\\
	&\hspace{-1em}\varphi_i([\sigz;\sigw]):=\varphi_i^Q(\sigz)-\gamma\varphi_i^Q(\sigw), \;\forall \sigz,\sigw\in\mathcal{Z},\;i\in\{1,2\}. \nn
	\end{align}
	The reproducing kernel of the RKHS $\H_{\psi^Q}$ is given by
	\begin{align}
	\kappa&([\sigz;\sigw],[\tilde{\sigz};\tilde{\sigw}]):=\left(\kappa^Q(\sigz,\tilde{\sigz})-\gamma\kappa^Q(\sigz,\tilde{\sigw})\right)\nn\\
	&-\gamma\left(\kappa^Q(\sigw,\tilde{\sigz})-\gamma\kappa^Q(\sigw,\tilde{\sigw})\right),\;\sigz,\sigw,\tilde{\sigz},\tilde{\sigw}\in\mathcal{Z}. \label{kapa}
	\end{align}
\end{theorem}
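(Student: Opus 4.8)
The plan is to establish the three claims in sequence: bijectivity of $U$, the well-definedness and validity of the inner product on $\H_{\psi^Q}$ (so that it is genuinely an RKHS), and finally the explicit formula for its reproducing kernel. First I would address injectivity of $U$. Since $U$ is clearly linear, it suffices to show that $U(\varphi^Q)=0$ implies $\varphi^Q=0$. If $\varphi^Q(\sigz)-\gamma\varphi^Q(\sigw)=0$ for all $\sigz,\sigw\in\mathcal{Z}$, then fixing $\sigw$ and varying $\sigz$ forces $\varphi^Q$ to be the constant $\gamma\varphi^Q(\sigw)$; feeding this constant back into the relation gives $(1-\gamma)\varphi^Q\equiv 0$, and since $\gamma\in(0,1)$ we get $\varphi^Q=0$. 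Surjectivity is immediate because $\H_{\psi^Q}$ is \emph{defined} as the image of $U$, so $U$ is a bijection onto $\H_{\psi^Q}$.

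Next I would verify that the proposed inner product $\inpro{\varphi_1,\varphi_2}_{\H_{\psi^Q}}:=\inpro{\varphi_1^Q,\varphi_2^Q}_{\H_Q}$ is well defined: since $U$ is a bijection, each $\varphi\in\H_{\psi^Q}$ has a \emph{unique} preimage $\varphi^Q\in\H_Q$, so there is no ambiguity in the definition. Bilinearity, symmetry, and positive-definiteness then transfer directly from $\inpro{\cdot,\cdot}_{\H_Q}$ through the bijection $U$; in effect $U$ becomes an isometric isomorphism by construction, which also makes $\H_{\psi^Q}$ complete (inheriting completeness from $\H_Q$), hence a Hilbert space. To conclude it is an RKHS I must exhibit a reproducing kernel. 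For fixed $[\tilde{\sigz};\tilde{\sigw}]$, define $\kappa([\cdot;\cdot],[\tilde{\sigz};\tilde{\sigw}])$ by the formula in \refeq{kapa}; I would show this equals $U(k_{\tilde{\sigz}}-\gamma k_{\tilde{\sigw}})$ where $k_{\tilde{\sigz}}:=\kappa^Q(\cdot,\tilde{\sigz})\in\H_Q$ is the canonical feature map of $\H_Q$. Expanding $U(k_{\tilde{\sigz}}-\gamma k_{\tilde{\sigw}})([\sigz;\sigw]) = \kappa^Q(\sigz,\tilde{\sigz})-\gamma\kappa^Q(\sigz,\tilde{\sigw})-\gamma\bigl(\kappa^Q(\sigw,\tilde{\sigz})-\gamma\kappa^Q(\sigw,\tilde{\sigw})\bigr)$ matches \refeq{kapa} exactly, so in particular $\kappa([\cdot;\cdot],[\tilde{\sigz};\tilde{\sigw}])\in\H_{\psi^Q}$ (property 1 of a reproducing kernel).

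For the reproducing property, take any $\varphi\in\H_{\psi^Q}$ with preimage $\varphi^Q\in\H_Q$. Using the definition of the inner product and then the reproducing property in $\H_Q$ twice:
\begin{align}
\inpro{\varphi,\kappa([\cdot;\cdot],[\tilde{\sigz};\tilde{\sigw}])}_{\H_{\psi^Q}}
&=\inpro{\varphi^Q,k_{\tilde{\sigz}}-\gamma k_{\tilde{\sigw}}}_{\H_Q}\nn\\
&=\varphi^Q(\tilde{\sigz})-\gamma\varphi^Q(\tilde{\sigw})=\varphi([\tilde{\sigz};\tilde{\sigw}]),\nn
\end{align}
which is exactly property 2. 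Hence $\kappa$ is the reproducing kernel of $\H_{\psi^Q}$. The main obstacle I anticipate is not any single computation — each step above is short — but rather pinning down the well-definedness subtlety cleanly: the inner product and the whole RKHS structure hinge entirely on $U$ being injective, so the argument that $\varphi^Q$ is uniquely determined by $\varphi$ (which in turn rests on $\gamma\neq 1$) must be made carefully before anything else, since a non-injective $U$ would collapse the entire construction. A secondary point worth stating explicitly is that $\H_{\psi^Q}$ as defined is a genuine linear subspace closed under the operations used — this follows from linearity of $U$ and is routine, but should be noted so the kernel manipulations are justified.
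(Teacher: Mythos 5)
Your proposal is correct and follows essentially the same route as the paper's proof: surjectivity by definition, injectivity from $\gamma\neq 1$ (the paper evaluates $U(\varphi^Q)$ at $[\sigz;\sigz]$ to get $(1-\gamma)\varphi^Q(\sigz)=0$ directly, a slightly shorter version of your fix-$\sigw$-and-vary-$\sigz$ argument), the isometry $U$ transferring the Hilbert structure, and the verification that $\kappa(\cdot,[\tilde{\sigz};\tilde{\sigw}])=U\left(\kappa^Q(\cdot,\tilde{\sigz})-\gamma\kappa^Q(\cdot,\tilde{\sigw})\right)$ together with the reproducing property pulled back through $\inpro{\cdot,\cdot}_{\H_Q}$. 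Your explicit remark that well-definedness of the inner product rests on injectivity of $U$ is a point the paper leaves implicit, but it does not change the substance of the argument.
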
	
\begin{proof}
	See Appendix \ref{apptheo2}.
\end{proof}	
From Theorem \ref{theoRKHS}, we can use any kernel-based method by assuming that the action-value function is in $\H_{Q}$.
The estimate of $Q^{\phi}$ denoted by $\hat{Q}^{\phi}$ is obtained by $U^{-1}(\hat{\psi}^Q)$, where $\hat{\psi}^Q$ is the estimate of ${\psi}^Q\in\H_{\psi^Q}$.
For instance, suppose that the estimate of ${\psi}^Q(\sigz,\sigw)$ for an input $[\sigz;\sigw]$ at time instant $n$ is given by
\begin{align}
\hat{\psi}_n^Q([\sigz;\sigw]):={\sigh_n^Q}^{\T}\sigk([\sigz;\sigw]),\nn
\end{align}
where $\sigh_n^Q\in\Real^{r}$ is the model parameter, and $\sigk([\sigz;\sigw]):=\left[\kappa\left([\sigz;\sigw],[\tilde{\sigz}_{1};\tilde{\sigw}_{1}]\right);\kappa\left([\sigz;\sigw],[\tilde{\sigz}_{2};\tilde{\sigw}_{2}]\right);\cdots;\kappa\left([\sigz;\sigw],[\tilde{\sigz}_{r};\tilde{\sigw}_{r}]\right)\right]\in\Real^{r}$
for $\{\tilde{\sigz}_j\}_{j\in\{1,2,...,r\}},\{\tilde{\sigw}_j\}_{j\in\{1,2,...,r\}}\subset\mathcal{Z}$ and for $\kappa(\cdot,\cdot)$ defined by \refeq{kapa}.
Then, the estimate of $Q^{\phi}(\sigz)$ for an input $\sigz$ at time instant $n$ is given by
\begin{align}
\hat{Q}_n^{\phi}(\sigz):={\sigh_n^Q}^{\T}\sigk^Q(\sigz), \label{Qest}
\end{align}
where $\sigk^Q(\sigz):=$\\
$\left[U^{-1}\left(\kappa\left(\cdot,[\tilde{\sigz}_{1};\tilde{\sigw}_{1}]\right)\right)(\sigz);\cdots;U^{-1}\left(\kappa\left(\cdot,[\tilde{\sigz}_{r};\tilde{\sigw}_{r}]\right)\right)(\sigz)\right]\in\Real^{r}$.
\begin{remark}[On Theorem \ref{theoRKHS}]
	As discussed in Appendix \ref{appencomp}, the GP SARSA is reproduced by applying a GP in the space $\H_{\psi^Q}$, although the GP SARSA or other kernel-based action-value function approximation is ad-hoc and designed for estimating the action-value function associated with a fixed policy under a stationary agent dynamics.
\end{remark}
	When the parameter $\sigh_n^Q$ for the estimator $\hat{\psi}_n^Q$ is monotonically approaching to an optimal point ${\sigh^Q}^{*}$ in
	the Euclidean norm sense,
	so is the model parameter for the action-value function because the same parameter is used to estimate $\psi^Q$ and $Q^{\phi}$.
	Suppose we employ a method which monotonically brings $\hat{\psi}_n^Q$ closer to an optimal function ${\psi^Q}^{*}$ in the {\em Hilbertian} norm sense.
	Then, the following corollary implies that an estimator of the action-value function also satisfies the monotonicity.
	\begin{corollary}
		\label{theoNonex}
		Let $\H_{\psi^Q}\ni\hat{\psi}_n^Q([\sigz;\sigw]):=\hat{Q}_n^{\phi}(\sigz)-\gamma\hat{Q}_n^{\phi}(\sigw)$ and
		$\H_{\psi^Q}\ni{\psi^Q}^{*}([\sigz;\sigw]):={Q^{\phi}}^{*}(\sigz)-\gamma{Q^{\phi}}^{*}(\sigw),\;\sigz,\sigw\in\mathcal{Z}$, where
		$\hat{Q}_n^{\phi},{Q^{\phi}}^{*}\in\H_Q$.
		Then, if $\hat{\psi}_n^Q$ is approaching to ${\psi^Q}^{*}$, i.e.,
		$\norm{\hat{\psi}_{n+1}^Q-{\psi^Q}^{*}}_{\H_{\psi^Q}}\leq\norm{\hat{\psi}_n^Q-{\psi^Q}^{*}}_{\H_{\psi^Q}}$,
		it follows that
		$\norm{\hat{Q}^{\phi}_{n+1}-{Q^{\phi}}^{*}}_{\H_{Q}}\leq\norm{\hat{Q}_n^{\phi}-{Q^{\phi}}^{*}}_{\H_{Q}}.$
	\end{corollary}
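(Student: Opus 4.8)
The plan is to exploit the isometry established in Theorem \ref{theoRKHS}. Recall that the operator $U:\H_Q\rightarrow\H_{\psi^Q}$ defined by $U(\varphi^Q)([\sigz;\sigw]):=\varphi^Q(\sigz)-\gamma\varphi^Q(\sigw)$ is bijective, and that the inner product on $\H_{\psi^Q}$ is \emph{defined} by $\inpro{\varphi_1,\varphi_2}_{\H_{\psi^Q}}:=\inpro{\varphi_1^Q,\varphi_2^Q}_{\H_Q}$, where $\varphi_i=U(\varphi_i^Q)$. In other words, $U$ is by construction a linear isometry between the two Hilbert spaces, so in particular $\norm{U(\varphi^Q)}_{\H_{\psi^Q}}=\norm{\varphi^Q}_{\H_Q}$ for every $\varphi^Q\in\H_Q$.

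First I would observe that, by the hypotheses of the corollary, $\hat{\psi}_n^Q=U(\hat{Q}_n^{\phi})$ and ${\psi^Q}^{*}=U({Q^{\phi}}^{*})$, since both are of the form $\varphi^Q(\sigz)-\gamma\varphi^Q(\sigw)$ with $\varphi^Q\in\H_Q$. Then I would use linearity of $U$ to write $\hat{\psi}_{n+1}^Q-{\psi^Q}^{*}=U(\hat{Q}_{n+1}^{\phi}-{Q^{\phi}}^{*})$ and $\hat{\psi}_n^Q-{\psi^Q}^{*}=U(\hat{Q}_n^{\phi}-{Q^{\phi}}^{*})$. Applying the isometry property of $U$ to both sides gives $\norm{\hat{\psi}_{n+1}^Q-{\psi^Q}^{*}}_{\H_{\psi^Q}}=\norm{\hat{Q}_{n+1}^{\phi}-{Q^{\phi}}^{*}}_{\H_Q}$ and likewise for the $n$-th term. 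Substituting these two equalities into the assumed inequality $\norm{\hat{\psi}_{n+1}^Q-{\psi^Q}^{*}}_{\H_{\psi^Q}}\leq\norm{\hat{\psi}_n^Q-{\psi^Q}^{*}}_{\H_{\psi^Q}}$ yields exactly $\norm{\hat{Q}_{n+1}^{\phi}-{Q^{\phi}}^{*}}_{\H_Q}\leq\norm{\hat{Q}_n^{\phi}-{Q^{\phi}}^{*}}_{\H_Q}$, which is the claim.

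The only point requiring a modicum of care is confirming that the isometry interpretation is legitimate and that $U$ commutes with subtraction in the way stated — i.e., that $\hat{Q}_{n+1}^{\phi}-{Q^{\phi}}^{*}\in\H_Q$ so that $U$ applies to it, and that the difference $\hat{\psi}_{n+1}^Q-{\psi^Q}^{*}$ really equals $U$ of that difference. Both follow immediately from $\H_Q$ being a vector space and $U$ being linear, facts already contained in Theorem \ref{theoRKHS}. So there is no genuine obstacle here; the corollary is essentially a one-line consequence of the isometry, and the ``main obstacle'' is merely bookkeeping to make the identifications $\hat{\psi}_n^Q=U(\hat{Q}_n^\phi)$ explicit. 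I would therefore keep the proof to a few lines, emphasizing that $U$ is norm-preserving by the very definition \refeq{theoinpro} of the inner product on $\H_{\psi^Q}$.
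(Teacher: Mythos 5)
Your argument is correct and is exactly the paper's proof: the inner product on $\H_{\psi^Q}$ is defined via $U$ in \refeq{theoinpro}, so $U$ is a linear isometry, the differences map to differences, and the two norms coincide term by term. No discrepancy with the paper's own one-line derivation in Appendix \ref{apptheo3}.
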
	
	\begin{proof}
		See Appendix \ref{apptheo3}.
	\end{proof}	
Note that the use of action-value functions enables us to use random control inputs instead of the target policy $\phi$ for exploration,
and we require no models of the agent dynamics for policy updates as discussed below.


To obtain analytical solutions for \refeq{improvepolicy},
we follow the arguments in \cite{engel2005reinforcement}.
Suppose that $\hat{Q}_n^{\phi}$ is given by \refeq{Qest}.
We define the reproducing kernel $\kappa^Q$ of $\H_Q$ as the tensor kernel given by
\begin{align}
\kappa^Q([\sigx;\sigu],[\sigy;\sigv]):=\kappa^x(\sigx,\sigy)\kappa^u(\sigu,\sigv), \label{kernel1des}
\end{align}	
where $\kappa^u(\sigu,\sigv)$ is, for example, defined by
\begin{align}
\kappa^u(\sigu,\sigv):=1+\frac{1}{4}(\sigu^{\T}\sigv). \nn
\end{align}
Then, \refeq{improvepolicy} becomes
\begin{align}
\phi^+(\sigx):=\argmax_{\sigu\in{\mathcal{S}}(\sigx)}\left[{\sigh_n^Q}^{\T}\sigk^Q([\sigx;\sigu])\right], \label{maxprob}
\end{align}
where the target value being maximized is linear to $\sigu$ at $\sigx$.
Therefore, if the set ${\mathcal{S}}(\sigx)\subset\U$ is convex,
an optimal solution to \refeq{maxprob} is guaranteed to be globally optimal, ensuring the greedy improvement of the policy.

As pointed out in \cite{2017dcbf}, $\mathcal{S}(\sigx)\subset\U$ is not a convex set in general.
Instead, we consider a convex subset of $\mathcal{S}(\sigx)$ under the following moderate assumptions:
\begin{assumption}
	\label{assump1}
	\begin{enumerate}
		\item The set $\U$ is convex.
		\item Existence of Lipschitz continuous gradient of the barrier function:
	Given
	\begin{align}
	&\hspace{-3.5em}\mathcal{R}:=\{(1-t)\sigx_n+t(\hat{f}_n(\sigx_n)+\hat{g}_n(\sigx_n)\sigu)|t\in[0,1],\sigu\in\U\},  \nn
	\end{align}
	there exists a constant $\nu\geq0$ such that the gradient of the discrete-time exponential control barrier function $B$, denoted by $\frac{\partial B(\sigx)}{\partial \sigx}$,
	satisfies 
	\begin{align}
\hspace{-1.5em}\norm{\frac{\partial B(\textbf{a})}{\partial \sigx}-\frac{\partial B(\textbf{b})}{\partial \sigx}}_{\Real^{n_x}}\leq \nu\norm{\textbf{a}-\textbf{b}}_{\Real^{n_x}},\;\forall \textbf{a},\textbf{b}\in \mathcal{R}.\nn
	\end{align}
	\end{enumerate}
\end{assumption}
Then, the following theorem holds.
\begin{theorem}
	\label{CBFaffine}
	 Under Assumptions \ref{assumstab}.3 and \ref{assump1}, assume also that $\norm{\sigx_{n+1}-(\hat{f}_n(\sigx_n)+\hat{g}_n(\sigx_n)\sigu_n+\sigx_n)}_{\Real^{n_x}}\leq\frac{\varrho_1}{\nu_B}$.
	 Then, inequality \refeq{DCBFdef2} is satisfied at time instant $n\in\integer_{\geq0}$ if $\sigu_n$ satisfies the following:
	\begin{align}
	&\frac{\partial B(\sigx_n)}{\partial \sigx}(\hat{f}_n(\sigx_n)+\hat{g}_n(\sigx_n)\sigu_n)\nonumber\\
	&\geq -\eta B(\sigx_n)+\frac{\nu}{2}\norm{\hat{f}_n(\sigx_n)+\hat{g}_n(\sigx_n)\sigu_n}_{\Real^{n_x}}^2+\varrho_1. \label{DCBF}
	\end{align}	
	Moreover, \refeq{DCBF} defines a convex constraint for $\sigu_n$.
\end{theorem}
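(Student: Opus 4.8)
The plan is to show that the second-order Taylor-type inequality \refeq{DCBF} forces the one-step decrease condition \refeq{DCBFdef2}, and then to verify convexity of the resulting constraint separately. First I would abbreviate $\hat{\sigx}_{n+1}:=\hat{f}_n(\sigx_n)+\hat{g}_n(\sigx_n)\sigu_n+\sigx_n$, so that the hypothesis $\norm{\sigx_{n+1}-\hat{\sigx}_{n+1}}_{\Real^{n_x}}\leq\varrho_1/\nu_B$ together with Assumption \ref{assumstab}.4 (Lipschitz continuity of $B$ with constant $\nu_B$) gives $\abs{B(\sigx_{n+1})-B(\hat{\sigx}_{n+1})}\leq\varrho_1$, hence $B(\sigx_{n+1})\geq B(\hat{\sigx}_{n+1})-\varrho_1$. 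So it suffices to prove $B(\hat{\sigx}_{n+1})-B(\sigx_n)\geq-\eta B(\sigx_n)+\varrho_1$, i.e. $B(\hat{\sigx}_{n+1})-B(\sigx_n)\geq-\eta B(\sigx_n)+\varrho_1$, under \refeq{DCBF}.

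For that step I would apply the standard descent-lemma estimate along the segment from $\sigx_n$ to $\hat{\sigx}_{n+1}$. Write $\sigx_n\to\hat{\sigx}_{n+1}$ as the straight line $\gamma(t)=(1-t)\sigx_n+t\hat{\sigx}_{n+1}$, $t\in[0,1]$, whose image lies in $\mathcal{R}$ by the definition of $\mathcal{R}$ in Assumption \ref{assump1}.2; note $\hat{\sigx}_{n+1}-\sigx_n=\hat{f}_n(\sigx_n)+\hat{g}_n(\sigx_n)\sigu_n$. By the fundamental theorem of calculus and the $\nu$-Lipschitz bound on $\partial B/\partial\sigx$ over $\mathcal{R}$,
\begin{align}
B(\hat{\sigx}_{n+1})-B(\sigx_n)&=\int_0^1\frac{\partial B(\gamma(t))}{\partial\sigx}(\hat{\sigx}_{n+1}-\sigx_n)\,\di t\nn\\
&\geq\frac{\partial B(\sigx_n)}{\partial\sigx}(\hat{\sigx}_{n+1}-\sigx_n)-\frac{\nu}{2}\norm{\hat{\sigx}_{n+1}-\sigx_n}_{\Real^{n_x}}^2,\nn
\end{align}
which is exactly the control-affine increment $\frac{\partial B(\sigx_n)}{\partial\sigx}(\hat{f}_n(\sigx_n)+\hat{g}_n(\sigx_n)\sigu_n)$ minus the curvature term in \refeq{DCBF}. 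Substituting the hypothesis \refeq{DCBF} then yields $B(\hat{\sigx}_{n+1})-B(\sigx_n)\geq-\eta B(\sigx_n)+\varrho_1$, and combining with $B(\sigx_{n+1})\geq B(\hat{\sigx}_{n+1})-\varrho_1$ gives \refeq{DCBFdef2}.

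For convexity, I would observe that the left-hand side of \refeq{DCBF} is affine in $\sigu_n$ (since $\hat{f}_n(\sigx_n)$, $\hat{g}_n(\sigx_n)$, and $\partial B(\sigx_n)/\partial\sigx$ are all fixed once $\sigx_n$ is fixed), while on the right-hand side the term $\frac{\nu}{2}\norm{\hat{f}_n(\sigx_n)+\hat{g}_n(\sigx_n)\sigu_n}_{\Real^{n_x}}^2$ is a composition of a convex quadratic with an affine map, hence convex in $\sigu_n$, and the remaining terms $-\eta B(\sigx_n)+\varrho_1$ are constant. Thus \refeq{DCBF} has the form (affine) $\geq$ (convex), equivalently (convex) $\leq 0$; intersecting with the convex set $\U$ (Assumption \ref{assump1}.1) yields a convex feasible set for $\sigu_n$. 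The main obstacle is really just being careful that the integration segment stays inside $\mathcal{R}$ so the Lipschitz-gradient bound applies — that is handled by the definition of $\mathcal{R}$ — and correctly bookkeeping the two $\varrho_1$ margins (one consumed by the model-mismatch bound, one appearing explicitly in \refeq{DCBF}) so that they cancel to recover the clean inequality \refeq{DCBFdef2}; the rest is routine.
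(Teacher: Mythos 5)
Your proposal is correct and follows essentially the same route as the paper: the paper also integrates $\frac{\partial B}{\partial\sigx}$ along the straight segment from $\sigx_n$ to $\hat{\sigx}_{n+1}$, lower-bounds the integrand via the $\nu$-Lipschitz gradient over $\mathcal{R}$ to obtain the descent-lemma estimate $B(\hat{\sigx}_{n+1})-B(\sigx_n)\geq\frac{\partial B(\sigx_n)}{\partial\sigx}(\hat{f}_n(\sigx_n)+\hat{g}_n(\sigx_n)\sigu_n)-\frac{\nu}{2}\norm{\hat{f}_n(\sigx_n)+\hat{g}_n(\sigx_n)\sigu_n}_{\Real^{n_x}}^2$, converts to $B(\sigx_{n+1})$ via the $\varrho_1/\nu_B$ model-error bound and Assumption \ref{assumstab}.4, and concludes convexity from concavity of the affine-minus-convex-quadratic left-hand side. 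Your handling of the two $\varrho_1$ margins is in fact spelled out more explicitly than the paper, which defers that step to the proof of Theorem \ref{maintheo}.
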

\begin{proof}
	See Appendix \ref{apptheo5}.
\end{proof}	
\begin{remark}
	When $\frac{\partial B(\sigx_n)}{\partial \sigx}\hat{g}_n(\sigx_n)\neq0$ and $\U$ admits sufficiently large value of each entry of $\sigu_n$,
	there always exists a $\sigu_n$ that satisfies \refeq{DCBF}.
\end{remark}
Theorem \ref{CBFaffine} essentially implies that, even when the gradient of $B$ along the shift of $\sigx_n$ decreases steeply, inequality \refeq{DCBFdef2} holds if \refeq{DCBF} is satisfied. 
From Theorem \ref{CBFaffine}, the set $\hat{\mathcal{S}}_n(\sigx_n)$, defined as
\begin{align}
	&\hat{\mathcal{S}}_n(\sigx_n):=\{\sigu_n\in\U|\frac{\partial B(\sigx_n)}{\partial \sigx}(\hat{f}_n(\sigx_n)+\hat{g}_n(\sigx_n)\sigu_n)\nn\\
	&\geq -\eta B(\sigx_n)+\frac{\nu}{2}\norm{\hat{f}_n(\sigx_n)+\hat{g}_n(\sigx_n)\sigu_n}_{\Real^{n_x}}^2+\varrho_1\}\subset\mathcal{S}(\sigx_n), \label{convexset}
\end{align}
is convex under Assumption \ref{assump1}.

As witnessed in the literatures (e.g., \cite{wang2017safety}), an agent might encounter deadlock situations, where the constrained control keeps the agent remain in the same state, when control barrier certificates are employed.
It is even possible that there is no safe control driving the agent from those states.
However, an elaborative design of control barrier functions remedies this issue, as shown in the following example.
\begin{example}
	\label{nonholoexample}
If the agent is nonholonomic, turning inward safe regions when approaching their boundary might be
infeasible.
To reduce the risk of such deadlock situations, control barrier functions may be designed as
\begin{align}
&\hspace{-1em}B(\sigx)=\tilde{B}(\sigx)-\upsilon\Gamma\left(\left|\theta-{\rm atan2}\left\{\frac{\partial\tilde{B}(\sigx)}{\partial{\rm y}},\frac{\partial\tilde{B}(\sigx)}{\partial{\rm x}}\right\}\right|\right),\nn\\
&\hspace{20em}\upsilon>0, \nn
\end{align}
where the state $\sigx=[{\rm x};{\rm y};\theta]$ consists of the X position ${\rm x}$, the Y position ${\rm y}$, and
the orientation $\theta$ of an agent from the world frame,
$\{\sigx\in\X|\tilde{B}(\sigx)\geq0\}$ is the original safe region, and $\Gamma$ is a strictly increasing function.
If this control barrier function exists, then the agent is forced to turn inward the original safe region before reaching its boundaries
because the control barrier function also depends on $\theta$ and takes larger value when the agent is facing inward the safe region.
An illustration of this example is given in \reffig{fig:illustnonholo}.
\end{example}
\begin{figure}[t]
	\begin{center}
		\includegraphics[clip,width=0.43\textwidth]{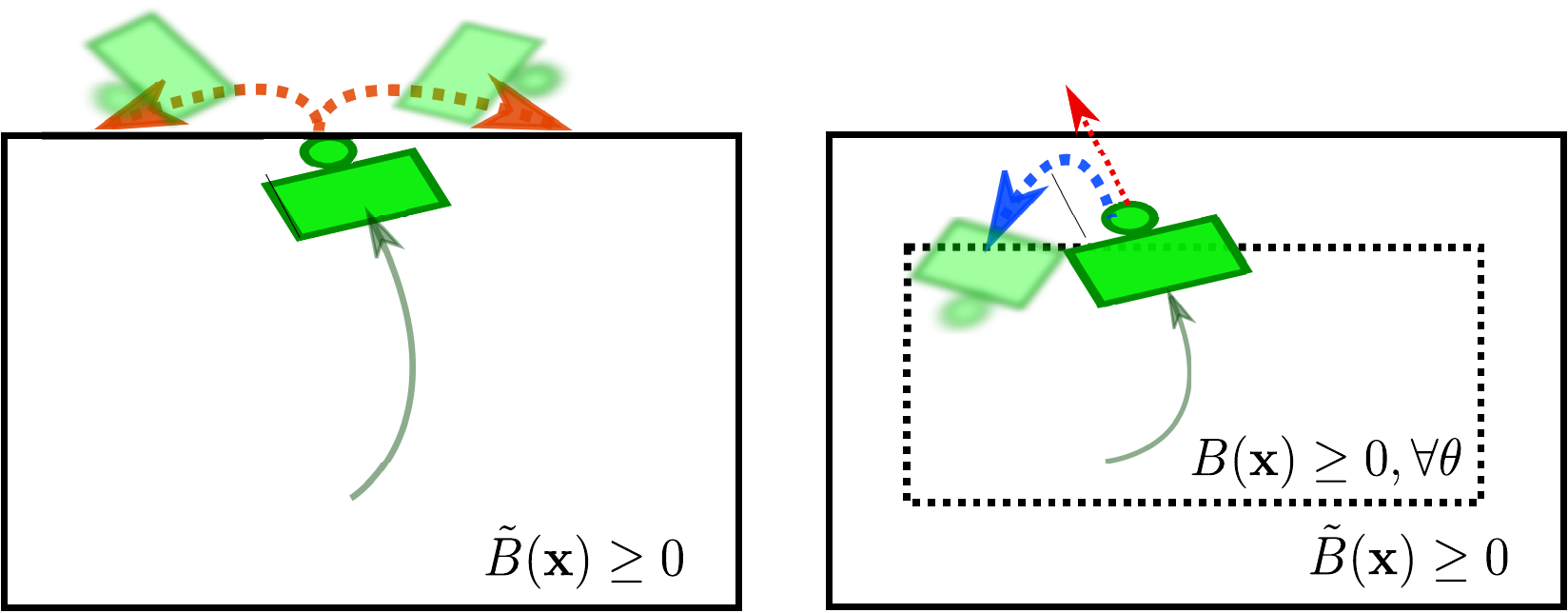}
		\caption{An illustration of how a nonholonomic agent avoids deadlocks.  When the orientation of the agent is not considered (i.e., $\tilde{B}(\sigx)$ is the barrier function), there might be
		no safe control driving the agent from those states as the left figure shows.  By taking into account the orientation (i.e., $B(\sigx)$ is the barrier function), the agent
		turns inward the safe region before reaching its boundaries as the right figure shows.}
		\label{fig:illustnonholo}
	\end{center}
\end{figure}

Resulting barrier-certified adaptive reinforcement learning framework is summarized in Algorithm \ref{Qalgo}.
\begin{algorithm}[tb]
	\caption{Barrier-certified adaptive reinforcement learning}
	\label{Qalgo}
	\begin{algorithmic}
		\STATE {\bfseries Requirement:} Assumptions \ref{assumstab} and \ref{assump1};
		$\kappa^Q$ defined as \refeq{kernel1des};
		$\sigx_0\in\X$ and $\sigu_0\in\U$; $\lambda\in(0,2)$, $\mu\geq0$ and $s\in\integer_{>0}$
		\STATE {\bfseries Output:}
		$\hat{Q}_n^{\phi}(\sigz_n)$ \hfill $\triangleright$ \refeq{Qest}
		\FOR{$n\in\integer_{\geq0}$}
		\STATE - Sample $\sigx_{n},\sigx_{n+1}\in\X$, $\sigu_n\in\mathcal{S}$, and $R(\sigx_{n},\sigu_{n})\in\Real$
		\STATE - Obtain $\phi(\sigx_{n+1})\in\hat{\mathcal{S}}_{n+1}(\sigx_{n+1})$ \hfill $\triangleright$ \refeq{convexset}
		\IF{Random Exploration}
		\STATE Select a (uniformly) random control input: \\\hfill$\sigu_{n+1}\in\hat{\mathcal{S}}_{n+1}(\sigx_{n+1})$ $\;\;\;\;\;\;$ $\triangleright$ \refeq{convexset}
		\ELSE
		\STATE Use the current policy: $\sigu_{n+1}=\phi(\sigx_{n+1})$ 
		\ENDIF
		\STATE - Model update: $\sigh_{n+1}=T_n(\sigh_n)$
		 \hfill $\triangleright$  e.g., \refeq{hnupdate}
		\STATE - Update $\hat{Q}_n^{\phi}$ by updating $\hat{\psi}^Q_n$ in $\H_{\psi^Q}$: \\$\;\;\;$(e.g., kernel adaptive filter)\\ $\;\;\;$ $\sigh^Q_{n+1}=\prox_{\lambda\mu}\left[(1-\lambda)I+\lambda\sum_{\iota=n-s+1}^{n}\frac{1}{s}P_{C_{\iota}}\right](\sigh^Q_n)$
		\\\hfill $\triangleright$  Theorem \ref{theoRKHS} and \refeq{hnupdate}
		\IF{$n\;{\rm mod}\;N_f\;=\;0$} 
		\STATE  $\phi^{+}(\sigx)=\argmax_{\sigu\in{\hat{\mathcal{S}}_n}(\sigx)}\left[\hat{Q}_n^{\phi}(\sigx,\sigu)\right]$ \hfill $\triangleright$ \refeq{convexset} and \refeq{improvepolicy}
		\STATE Let $\phi\leftarrow\phi^+$
		\ENDIF
		\ENDFOR
	\end{algorithmic}
\end{algorithm}

\section{Experimental Results}
\label{sec:numerical}
For the sake of reproducibility and for clarifying each contribution, we first validate the proposed learning framework on simulations of
vertical movements of a quadrotor, which has been used in the safe learnings literature under stationarity assumption (e.g., \cite{akametalu2014reachability}).
Then, we test the proposed learning framework on a real robot called {\em brushbot}, whose dynamics is unknown, highly complex and nonstationary\footnote{The dynamics of the brushbot depends on the body structure, conditions of the brushes, floors and many other factors.  Thus, simulators of the brushbot are unavailable.}.
The experiments on the {\em brushbot} was conducted at the Robotarium, a remotely accessible robot testbed at Georgia institute of technology \cite{robotarium}.

\subsection{Validations of the Safe Learning Framework via Simulations of a Quadrotor}
\label{subsec:simulate}
In this experiment, we empirically validate Theorem \ref{maintheo} (i.e., Lyapunov stability of the set of augmented safe states after an unexpected and abrupt change of the agent dynamics)
and the motivations of using an online kernel method working in the RKHS $\H_{\psi^Q}$ (see Section \ref{subsec:adaptiveRL}) for action-value function approximation.
We also test
the proposed framework for simulated vertical movements of a quadrotor.
We use parametric model for the agent dynamics and nonparametric model for the action-value function
in this experiment.
The discrete-time dynamics of the vertical movement of a quadrotor is given by
\begin{align}
	&\sigx_{n+1}={\Xi}(\sigz_n)\sigh^{*}:=h^{*}_1\xi_1(\sigz_n)+h^{*}_2\xi_2(\sigz_n)+h^{*}_3\xi_3(\sigz_n)\nn\\
	&:=h_1\left[
	\begin{array}{cc}
		1 & \Delta t\\
		0 & 1
	\end{array}
	\right]\sigx_n+h_2
	\left[
	\begin{array}{c}
	-\frac{\Delta t^2}{2} \\
	-{\Delta t}
	\end{array}
	\right]+h_3
	\left[
	\begin{array}{c}
	-\frac{\Delta t^2}{2} \\
	-{\Delta t}
	\end{array}
	\right]\sigu_n,\nn\\
	&\sigh^{*}:=[h^{*}_1;h^{*}_2;h^{*}_3]\in\Real^{3},\;\xi_i:\mathcal{Z}\rightarrow\Real,\;i\in\{1,2,3\},\nn\\
	&\sigz_n:=[\sigx_n;\sigu_n]\in\mathcal{Z},\;\sigx_n:=[{\rm x}_n;\dot{\rm {x}}_n], \nn
\end{align}
where $\Delta t\in(0,\infty)$ denotes the time interval, ${\rm x}_n$ and $\dot{\rm x}_n$ are the vertical position
and the vertical velocity of the quadrotor at time instant $n$, respectively.
When the weight of the quadrotor is $0.027{\rm kg}$, the nominal model is given by $h_1=1,\;h_2=9.81$, and $h_3=1/0.027$.
Let the time interval $\Delta t$ be $0.02$ seconds for the simulations, and the maximum input $2\x0.027\x9.81$.

Control barrier certificates are used to limit the region of exploration
to the area: ${\rm x}\in[-3,3]$, and we employ the following two barrier functions:
\begin{align}
&B_t(\sigx)=3-{\rm x}, \nn\\
&B_b(\sigx)={\rm x}+3, \nn
\end{align}
and we use the barrier-certificate parameter $\eta=0.01$ (see \refeq{DCBFdef2}) in this experiment.
Note that the safe set is equivalently expressed by
\begin{align}
	\mathcal{C}=[-3,3]=\{\sigx\in\X|B_t(\sigx)\geq 0\land B_b(\sigx)\geq 0\}, \nn
\end{align}
and the barrier functions satisfy Assumption \ref{assump1}.2 with the Lipschitz constant $\nu=0$.

The immediate reward is given by
\begin{align}
R(\sigx,\sigu)=-2{\rm x}^2-\frac{1}{2}{\rm \dot{x}}^2+12, \;\forall n\in\integer_{\geq0},\nn
\end{align}
where the constant is added to prevent the resulting value of explored states from
becoming negative, i.e., lower than the value outside of the safe set.
\subsubsection{Stability of the Safe Set}
In terms of safety recovery, we compare a GP-based approach, which tends to be less adaptive to time-varying systems, and a set-theoretical adaptive model learning algorithm with monotone approximation property.
Random explorations by uniformly random control inputs are conducted for the first $20$ seconds corresponding to $1000$ iterations
under the dynamics $\sigh^{*}=[1;9.81;1/0.027]$.
Then, we change the simulated dynamics and observe if the quadrotor is stabilized on the set of augmented safe states.
To clearly visualize the difference between the GP-based approach and the adaptive model learning algorithm,
we let the new agent dynamics be $\sigh^{*}=[1;9.81;5/0.027]$, which is an extreme situation
where the maximum input generates very large acceleration.

We define the update rule of model learning as
\begin{align}
	\sigh_{n+1}=\sigh_n-\lambda\Xi^{\T}(\sigz_n)(\Xi(\sigz_n)\Xi^{\T}(\sigz_n))^{-1}(\Xi(\sigz_n)\sigh_n-\sigx_{n+1}),\nn
\end{align}
which satisfies the monotone approximation property\footnote{This update is viewed as the projection of the current parameter onto the affine set
in which any element $\sigh_n^{*}$ satisfies $\Xi(\sigz_n)\sigh_n^{*}-\sigx_{n+1}=0$, and hence it follows that $\norm{\sigh_{n}-\sigh_n^{*}}^2_{\Real^{r}}-\norm{\sigh_{n+1}-\sigh_n^{*}}^2_{\Real^{r}}\geq\varrho^2_3 dist^2(\sigh_n,\Omega_n),\;\forall \sigh_n^{*}\in\Omega_n:=\argmin_{\sigh\in\Real^{r}}\left[\Xi(\sigz_n)\sigh_n^{*}-\sigx_{n+1}\right]$.  (See \cite{APSM1} for more detailed arguments.)}, where $\lambda\in(0,2)$ is the step size.
In this experiment, we used $\lambda=0.6$.
For the GP-based learning, on the other hand, we let the noise variance of the output be $0.01$, and let
the prior covariance of the parameter vector $\sigh$ be $25I$.

The trajectories of the vector $[{\rm x};h_2;h_3]$ of the GP-based learning and the adaptive model learning algorithm
from $n=1000$ to $n=10000$
are plotted in \reffig{fig:lyapunov}.
We can observe that the trajectory of the adaptive model learning algorithm converges to the forward invariant set $\mathcal{C}\x\Omega$.
GP-based learning seems to be slowly approaching to the safe set while safety recovery is not theoretically supported in the current settings.
\begin{figure}[t]
	\begin{center}
		\includegraphics[clip,width=0.55\textwidth]{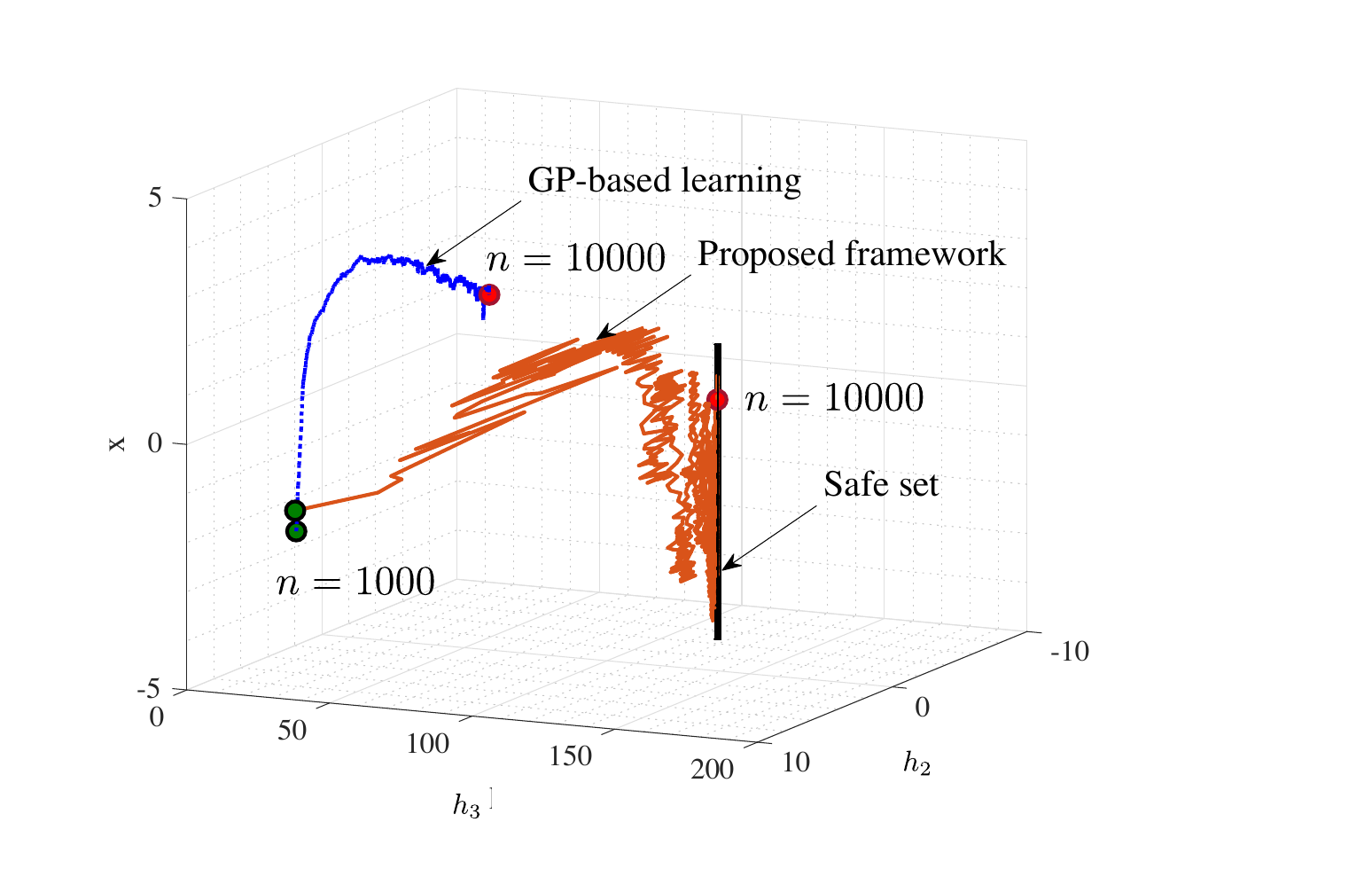}
		\caption{Trajectories of the vector $[{\rm x};h_2;h_3]$ of the GP-based learning and the adaptive model learning algorithm with barrier certificates
			from $n=1000$ to $n=10000$.  The trajectory of the adaptive model learning algorithm converges to the forward invariant set $\mathcal{C}\x\Omega$.
			GP-based learning seems slowly approaching to the safe set while safety recovery is not theoretically supported in the current settings.}
		\label{fig:lyapunov}
	\end{center}
\end{figure}
\subsubsection{Adaptive Action-value Function Approximation}
\label{experisimQfunc}
We also validate our action-value function approximation framework by employing a GP (i.e., the GP SARSA) and a kernel adaptive filter in the same RKHS.
The parameter settings for the kernel adaptive filter are summarized in \reftab{sumparamsim}.
Please refer to Appendix \ref{appemulti} for the notations that are not in the main text.
Six Gaussian kernels with different scale parameters $\s$ are employed for the kernel adaptive filter (i.e., $M=6$.  See also Appendix \ref{appemulti} for more detail about multikernel adaptive filter).
For the GP SARSA, we employ a Gaussian kernel with scale parameter $3$, which achieved sufficiently good performance, and let the noise variance of the output be $10^{-6}$ (i.e., $\Sigma=10^{-6}I$.  See Appendix \ref{appencomp}.).  Other parameters are the same as those of the kernel adaptive filter.
In addition, we also test the GP SARSA in another settings, where the kernel function is added in the first $600$ iterations (i.e., dimension of the parameter becomes $r=600$)
and is not newly added after $600$ iterations.  We call this as the GP SARSA 2 for convenience in this section.
\begin{table}[t]
	\begin{center}
		\caption{Summary of the Parameter Settings of the Simulated Vertical Movements of a Quadrotor (kernel adaptive filter)}
		\label{sumparamsim}
		\begin{tabular}{ccc}\hline
			Parameter & Description & Values \\\hline
			$\lambda$ & step size & $0.1$ \\
			$s$ & data size & $5$ \\
			$\mu$ & regularization parameter & $0.01$ \\
			$\epsilon_1$ & precision parameter & $0.2$ \\
			$\epsilon_2$ & large-normalized-error  & $0.1$ \\
			$r_{\rm max}$ & maximum-dictionary-size & $600$\\
			$\s$ & scale parameters & $\{50,30,10,5,2,1\}$ \\
			$\gamma$ & discount factor & $0.9$\\\hline
		\end{tabular}
	\end{center}
\end{table}
We employ an adaptive model learning algorithm for all of the three reinforcement learning approaches, and
update policies every $1000$ iterations.
For the comparison purpose, we do not reset learning even when the policy is updated\footnote{Note the GP SARSA was originally designed for stationary agent dynamics.  In this experiment, we call GP SARSA as a GP working in the RKHS $\H_{\psi^Q}$.}.
Random explorations by uniformly random control inputs are conducted for the first $200$ seconds corresponding to $10000$ iterations
under the dynamics $\sigh^{*}=[1;9.81;1/0.027]$, and the dynamics changes to
$\sigh^{*}=[1;11.81;0.9/0.027]$ (i.e., additional downward accelerations and degradations of batteries, for example) at
time instant $n=2500$.
We evaluate the policy obtained at time instant $n=10000$ for five times with different initial states,
and we also conduct $15$ runs for learning.  For each policy evaluation, the initial position ${\rm x}$ follows the uniform distribution while the velocity ${\rm \dot{x}}=0$.

The learning curves of the normalized mean squared errors (NMSEs) of action-value function approximation, which are
averaged over $15$ runs and smoothed, are plotted in \reffig{fig:NMSE} for the GP SARSA, the kernel adaptive filter and the GP SARSA 2.
From \reffig{fig:NMSE}, we can observe that both the GP SARSA and the kernel adaptive filter show no large degradations of the NMSE
even after the dynamics changes or the policy is updated, while the GP SARSA 2 stops improving the NMSE after the policy is updated (and the dynamics is changed).
Because no kernel function is newly added after the first $600$ iterations, the GP SARSA 2 could not adapt
to the new policy or new dynamics.
\begin{figure}[t]
	\begin{center}
		\includegraphics[clip,width=0.5\textwidth]{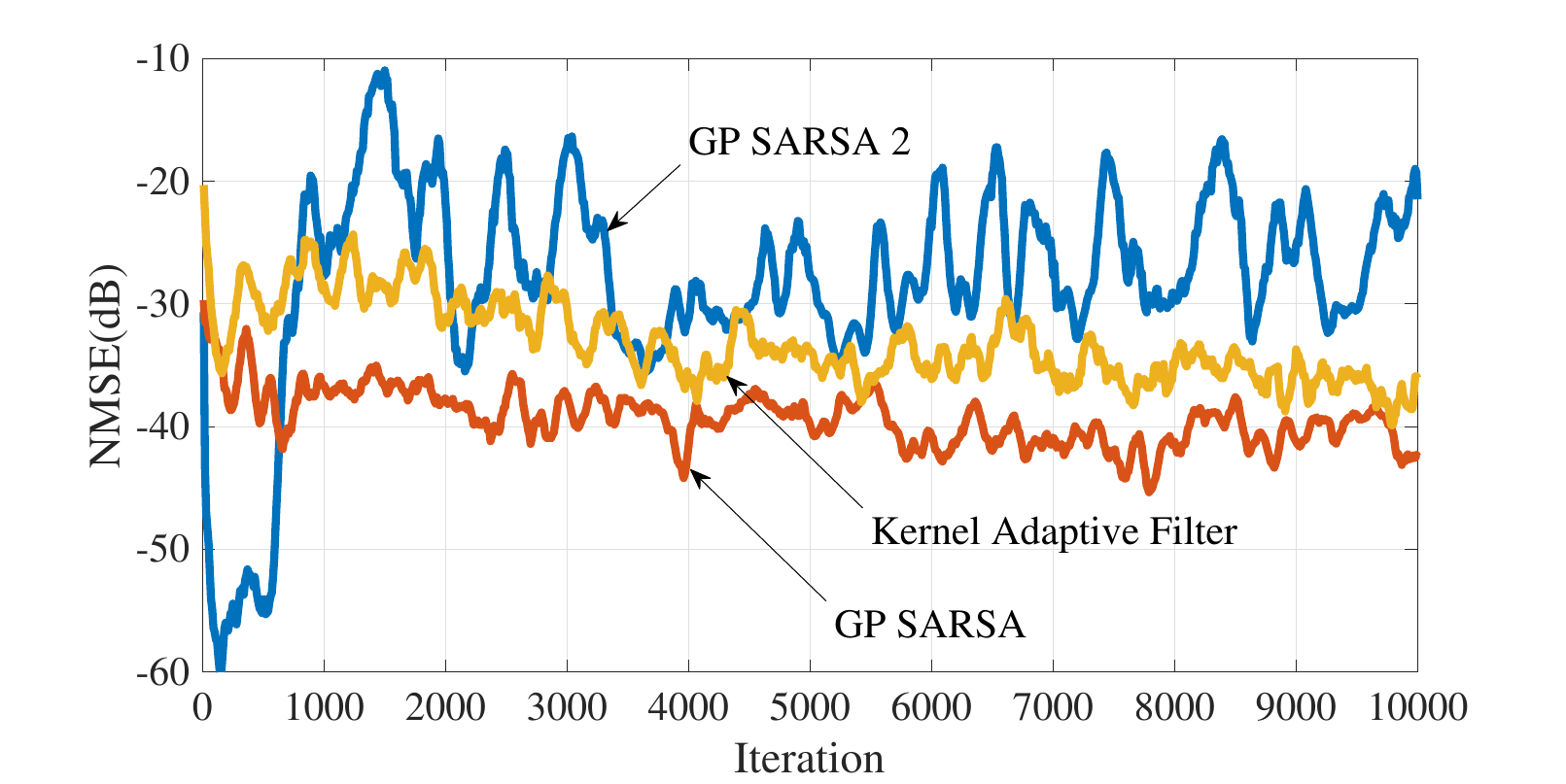}
		\caption{The learning curves of the normalized mean squared errors (NMSEs) of action-value function approximation
			for the GP SARSA, kernel adaptive filter, and the GP SARSA 2.}
		\label{fig:NMSE}
	\end{center}
\end{figure}

The expected values $\Ex{V^{\phi}(\sigx)}$ for the GP SARSA, the kernel adaptive filter and the GP SARSA 2
associated with the policies obtained at time instant $n=10000$ are
shown in \reftab{expectedQ}.  (expectation is taken over the $15\x5$ runs, i.e., $15$ runs for learning, each of which includes five policy evaluations).  Recall that
$V^{\phi}$ is defined in \refeq{Qdef}.
\begin{table}[t]
	\begin{center}
		\caption{The expected values of the GP SARSA and the kernel adaptive filter}
		\label{expectedQ}
		\begin{tabular}{ccc}\hline
			GP SARSA& kernel adaptive filter & GP SARSA 2 \\\hline
			$65.77\pm41.42 $ & $64.68\pm40.39$ & $63.01\pm42.50$\\\hline
		\end{tabular}	
	\end{center}
\end{table}

Among the $15$ runs for the kernel adaptive filter, we extracted the seventh run, which was successful.
The left figure of \reffig{fig:simuresult} illustrates the action-value function at time instant $n=10000$ of the seventh run for the kernel adaptive filter, and the right figure of \reffig{fig:simuresult} plots the trajectory of the optimal policy obtained at time instant $n=10000$ for the seventh run.  The simulated quadrotor was relocated at time instant $n=11000,12000,13000$, and $n=14000$,
and both the position and the velocity of the simulated quadrotor went to zeros successfully.
\begin{figure*}[t]
	\begin{minipage}{0.4\hsize}
		\begin{center}
			\includegraphics[clip,width=0.9\textwidth]{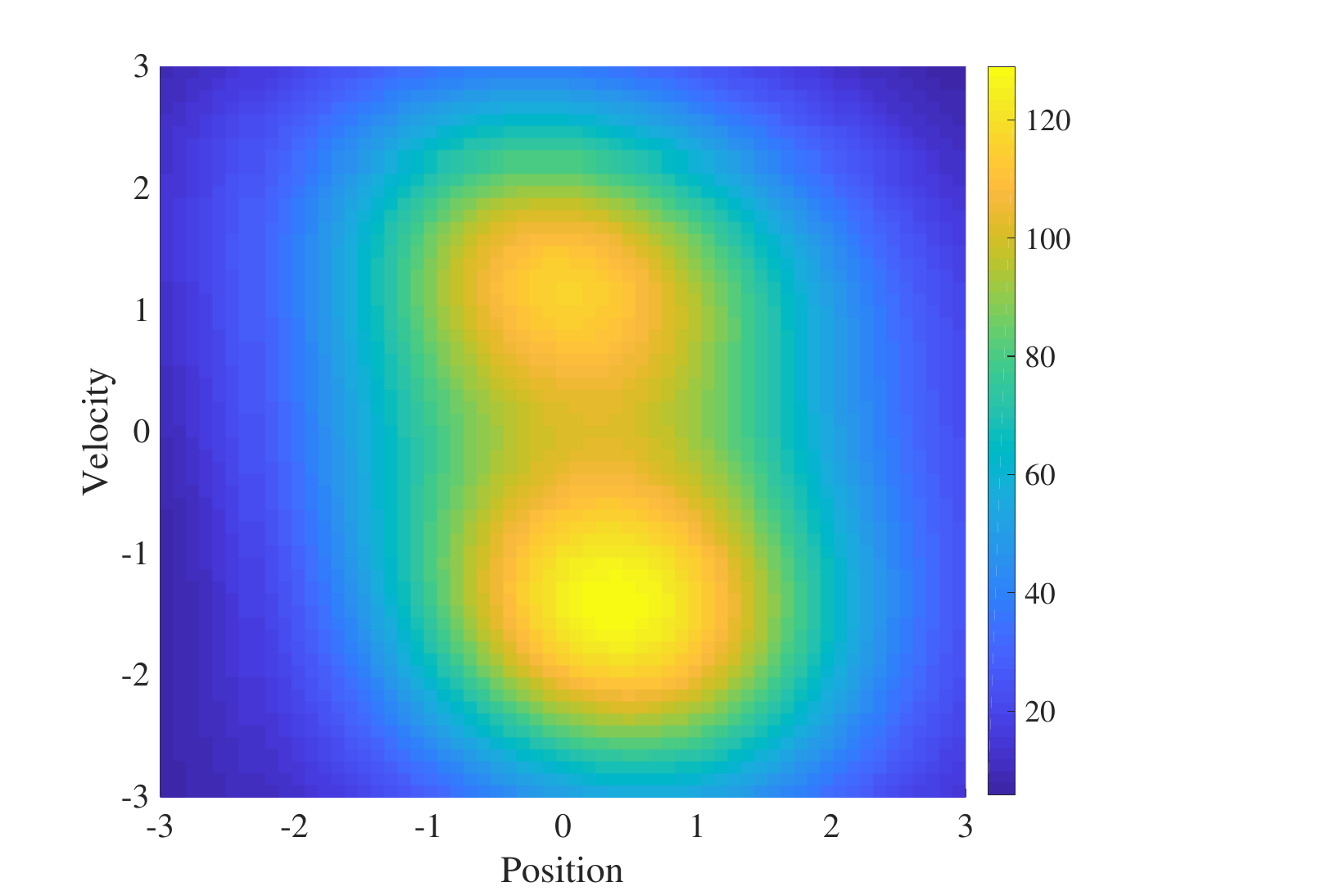}
		\end{center}	
	\end{minipage}
	\begin{minipage}{0.58\hsize}
		\begin{center}
			\includegraphics[clip,width=0.95\textwidth]{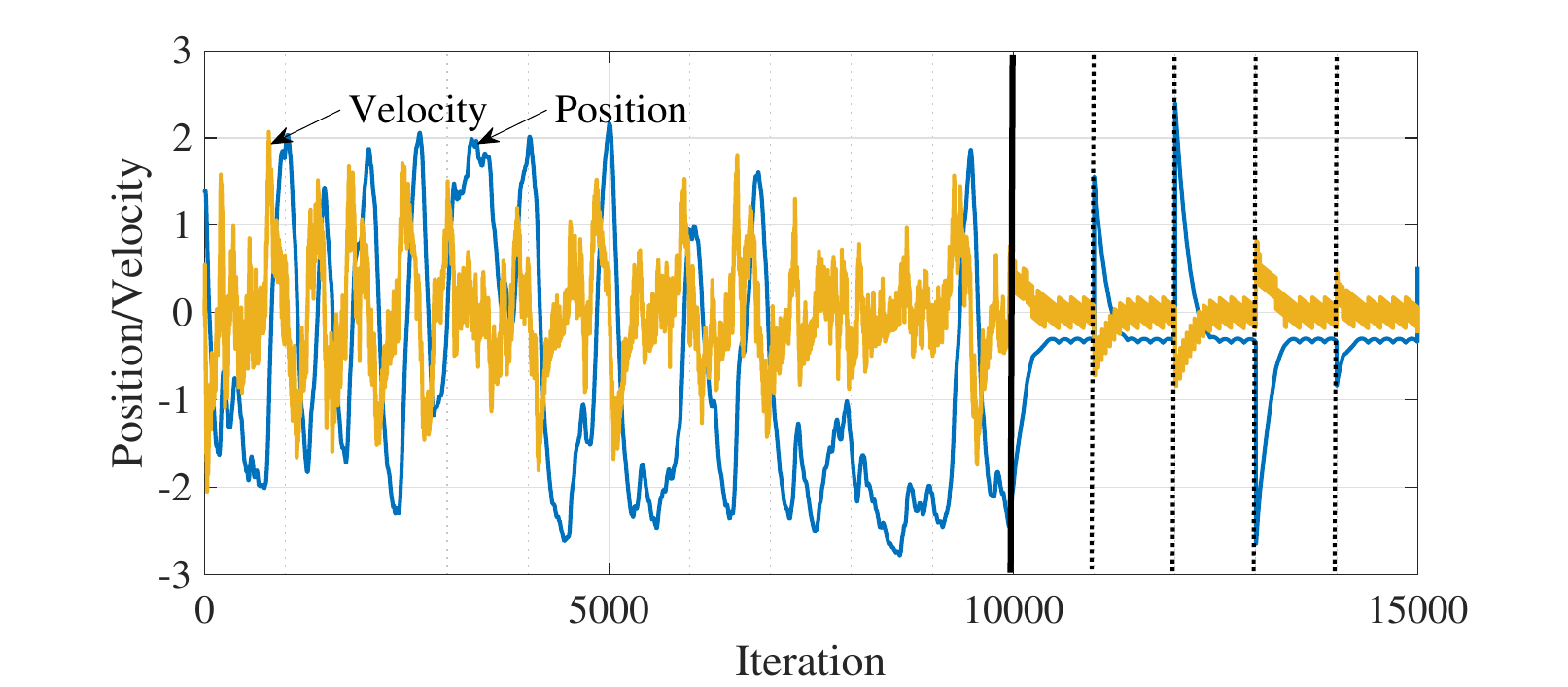}
		\end{center}
	\end{minipage}
	\caption{The left figure illustrates the action-value function over the position ${\rm x}$ and the velocity ${\rm \dot{x}}$ at $n=10000$ and at the control input $-0.027\x11.81/0.9$, which cancels out the acceleration added to the quadrotor.  Positive velocities for negative positions and negative velocities for positive positions have higher values.  The right figure shows the trajectory of the optimal policy obtained at time instant $n=10000$ for the seventh run, which was a successful run among the $15$ runs.  Dashed lines indicate the time when the quadrotor was relocated.  Both the position and the velocity of the simulated quadrotor went to zeros successfully.}
	\label{fig:simuresult}
\end{figure*}
\subsubsection{Discussion}
The control barrier certificates with an adaptive model learning algorithm recovered safety even for an extreme situation
where the control inputs start generating very large acceleration.  As long as model learning algorithm satisfies
Assumption \ref{assumstab}, safety recovery is guaranteed.

Reinforcement learning with the GP SARSA and kernel adaptive filter in the RKHS $\H_{\psi^Q}$ worked sufficiently well.
If no kernel functions are newly added, GP-based learnings cannot adapt to the new policies or agent dynamics.
Therefore, we need to sequentially add new kernel functions or use a sparse adaptive filter to prune redundant kernel functions (see also Appendix \ref{appemulti} for
a sparse adaptive filter).
We mention that identifying the RKHS $\H_{\psi^Q}$ enabled us to employ GPs for nonstationary agent dynamics without having to reset learnings.
Consequently, we can effectively reuse the previous estimation of the target function if the new target function is close to the previous one.

Our safe learning framework validated by these simulations is now ready to be applied to a real robot called brushbot as presented below.
\subsection{Real-Robotics Experiments on the Brushbot}
Next, we apply our safe learning framework, which was validated by simulations, to the brushbot, which has highly nonlinear, nonholonomic and nonstationary dynamics (see \reffig{fig:brushbot}).
\begin{figure}[t]
	\begin{center}
		\includegraphics[clip,width=0.45\textwidth]{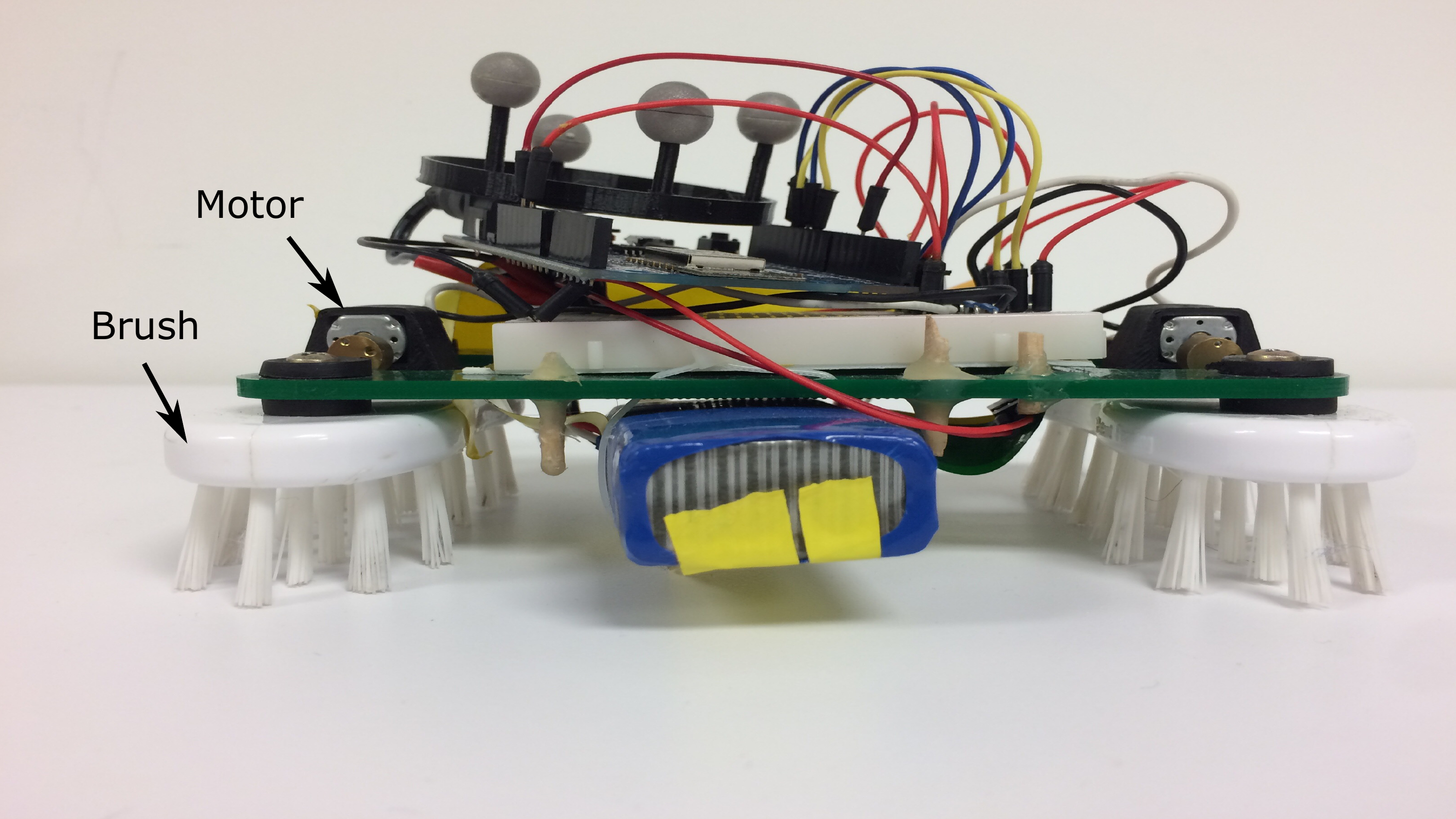}
		\includegraphics[clip,width=0.45\textwidth]{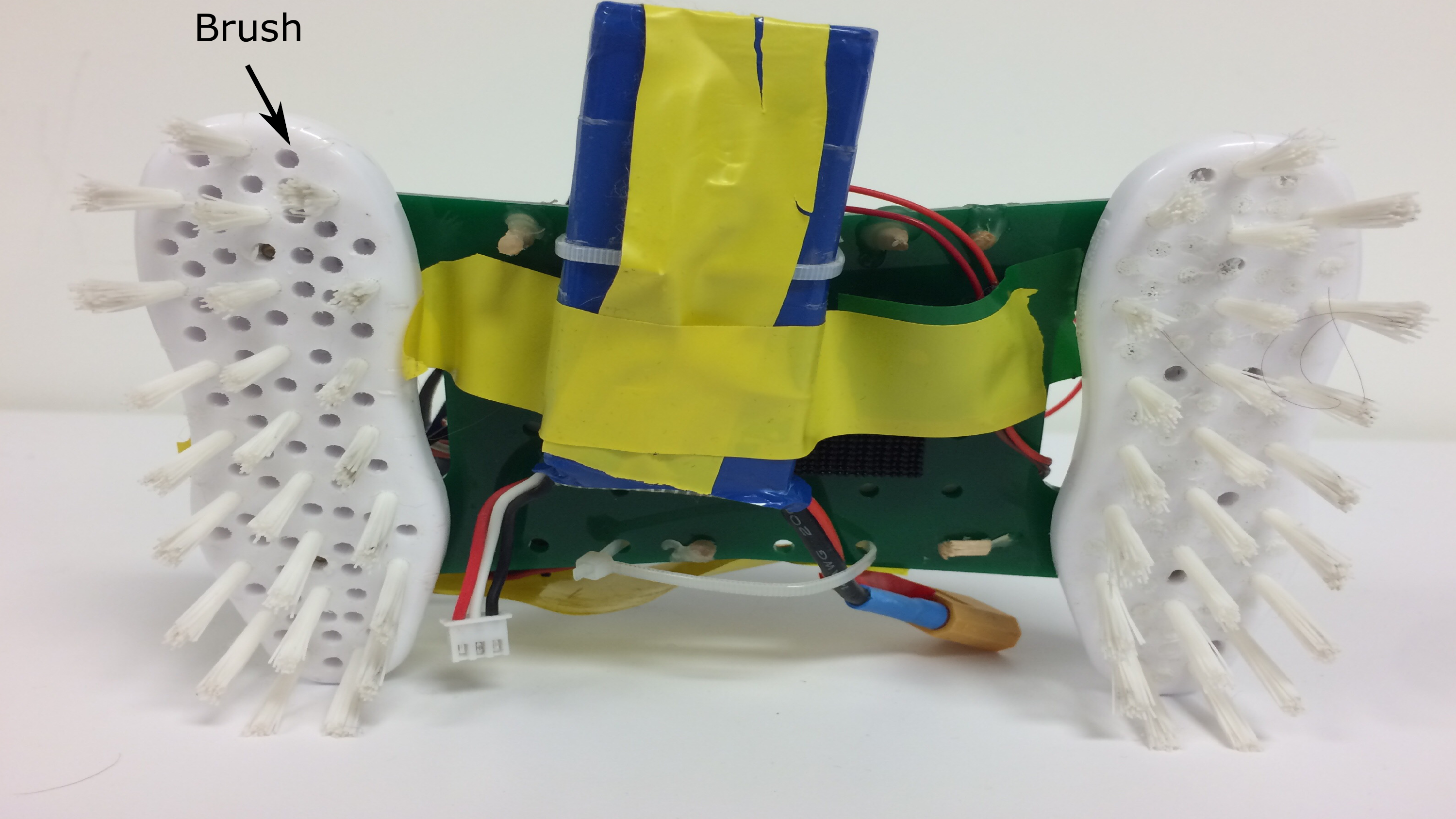}
		\caption{A picture of the brushbot used in the experiment.  Vibrations of the two motors propagate to the two brushes,
		driving the brushbot.  Control inputs are of two dimensions each of which corresponds to the rotational speed of a motor.}
		\label{fig:brushbot}
	\end{center}
\end{figure}
The objective of this experiment is to find a policy driving the brushbot to the origin, while restricting the region of exploration.
The experiment is conducted at the Robotarium, a remotely accessible robot testbed at Georgia institute of technology \cite{robotarium}.
\subsubsection{Experimental Condition}
The experimental conditions for model learning, reinforcement learning, control barrier functions and their parameter settings are presented below.
\paragraph{Model learning}
The state $\sigx=[{\rm x};{\rm y};\theta]$ consists of the X position ${\rm x}$, Y position ${\rm y}$ and the orientation $\theta\in[-\pi,\pi]$ of the brushbot from the world frame.
The exact positions and the orientation are recorded by motion capture systems every $0.3$ seconds.
A control input $\sigu$ is of two dimensions each of which corresponds to the rotational speed of a motor.
To improve the learning efficiency and reduce the total learning time required, we identify the most significant dimension and reduce the dimensions to learn.
The sole input variable of $p,f$ and $g$ for the shifts of ${\rm x}$ and ${\rm y}$, is assumed to be $\theta$.
The shift of $\theta$ is assumed to be constant over the state, and hence depends on nothing but control inputs (see Section \ref{paramset}).
The brushbot used in the present study is nonholonomic, i.e.,
it can only go forward, and positive control inputs basically drive the brushbot in the same way as negative control inputs.
As such, we use the rotational speeds of the motors as the control inputs.
Moreover, to eliminate the effect of static frictions on the model, we assume that
the zero control input given to the algorithm actually generates some minimum control inputs $u_{\delta}$ to the motors, i.e., the actual maximum control inputs to the motors are given by $u_{\rm max}+u_{\delta}$, where $u_{\rm max}$ is the
maximum control input fed to the algorithm.
\paragraph{Reinforcement learning}
The state for action-value function approximation consists of the distance $\norm{[{\rm x};{\rm y}]}_{\Real^2}$ from the origin and the orientation $\theta-{\rm atan2}\left({\rm y},{\rm x}\right)$ which is wrapped to the interval $[-\pi,\pi]$.
The immediate reward is given by
\begin{align}
	R(\sigx,\sigu)=-\norm{[{\rm x};{\rm y}]}^2_{\Real^2}+2, \;\forall n\in\integer_{\geq0},\nn
\end{align}
where the constant is added to prevent the resulting value of explored states from
becoming negative, namely, lower than the value outside of the region of exploration.
\paragraph{Discrete-time control barrier certificates}
Control barrier certificates are used to limit the region of exploration
to the rectangular area: ${\rm x}\in[-{\rm x}_{\rm max},{\rm x}_{\rm max}],\;{\rm y}\in[-{\rm y}_{\rm max},{\rm y}_{\rm max}]$, where
${\rm x}_{\rm max}>0$ and ${\rm y}_{\rm max}>0$.
Because the brushbot can only go forward, we employ the following four barrier functions:
\begin{align}
&B_1(\sigx)={\rm x}_{\rm max}-{\rm x}-\upsilon\left|\theta+\pi\right|,\nn\\
&B_2(\sigx)={\rm x}+{\rm x}_{\rm max}-\upsilon\left|\theta\right|,\nn\\
&B_3(\sigx)={\rm y}_{\rm max}-{\rm y}-\upsilon\left|\theta+\frac{\pi}{2}\right|,\nn\\
&B_4(\sigx)={\rm y}+{\rm y}_{\rm max}-\upsilon\left|\theta-\frac{\pi}{2}\right|,\nn
\end{align}
(see Example \ref{nonholoexample} for the motivations of using the above control barrier functions).
Note that those functions satisfy Assumption \ref{assump1}.2 and the Lipschitz constant $\nu$ is zero except at around $\theta=-\frac{\pi}{2},0,\frac{\pi}{2},\pi$.
(Although we can employ globally Lipschitz functions for more rigorous treatment, we use the above functions for simplicity.)
\paragraph{Parameter settings}
\label{paramset}
The parameter settings are summarized in \reftab{sumparam}.
Please refer to Appendix \ref{appemulti} for the notations that are not in the main text.
Five Gaussian kernels with different scale parameters $\s$ are employed in action-value function approximation (i.e., $M=5$.  See also Appendix \ref{appemulti} for more detail about multikernel adaptive filter), and
six Gaussian kernels are employed in model learning for ${\rm x}$ and ${\rm y}$ (i.e., $M=6$).
In model learning for $\theta$, we define $\H_p$, $\H_f$ and $\H_g$ as sets of constant functions.

The kernels of $\H_p$ and $\H_f$ are weighed by $\tau=0.1$ in model learning (see Lemma \ref{propweight} in Appendix \ref{appemulti}).
\begin{table*}[t]
	\begin{center}
		\caption{Summary of the Parameter Settings}
		\label{sumparam}
		\begin{tabular}{ccccc}\hline
			Parameter & Description & \multicolumn{3}{c}{General settings} \\\hline
			${\rm x}_{\rm max}$ & maximum X position & \multicolumn{3}{c}{$1.2$} \\
			${\rm y}_{\rm max}$ & maximum Y position & \multicolumn{3}{c}{$1.2$} \\
			$\eta $ & barrier-function parameter& \multicolumn{3}{c}{$0.1$}\\
			$\upsilon$ & coefficient in barrier functions & \multicolumn{3}{c}{$0.1$} \\
			$u_{\delta}$ & actual minimum control & \multicolumn{3}{c}{$0.4$} \\
			$u_{\rm max}$ & maximum control input & \multicolumn{3}{c}{$0.623$} \\\hline
			Parameter & Description & Model learning (${\rm x}$ and ${\rm y}$) &Model learning ($\theta$) & Action-value function approximation\\\hline
			$\lambda$ & step size & $0.3$ & $0.03$ & $0.3$\\
			$s$ & data size & $5$ & $10$ & $10$\\
			$\mu$ & regularization parameter & $0.0001$ & $0$ &$0.0001$\\
			$\epsilon_1$ & precision parameter & $0.001$ & $0.01$ & $0.05$ \\
			$\epsilon_2$ & large-normalized-error  & $0.1$ & $0.1$& $0.1$\\
			$r_{\rm max}$ & maximum-dictionary-size & $500$ & $3$ & $2000$\\
			$\s$ & scale parameters & $\{10,5,2,1,0.5,0.2\}$ & -- & $\{10,5,2,1,0.5\}$\\
			$\gamma$ & discount factor & -- & --& $0.95$\\\hline
		\end{tabular}
	\end{center}
\end{table*}
\paragraph{Procedure}
The time interval (duration of one iteration) for learning is $0.3$ seconds,
and random explorations are conducted for the first $300$ seconds corresponding to $1000$ iterations.
While exploring, the model learning algorithm adaptively learns a model whose control-affine terms, i.e., $\hat{f}_n(\sigx)+\hat{g}_n(\sigx)\sigu$, is used in combination with barrier certificates.
Although barrier functions employed in the experiment reduce deadlock situations, the brushbot is forced to turn inward the region of exploration
when a deadlock is detected.
Note that the barrier certificates are intentionally violated in such a case.
The policy is updated every $50$ seconds.
After $300$ seconds, we stop learning a model and the action-value function, and the policy replaces random explorations.
The brushbot is forced to stop when it enters into the circle of radius $0.2$ centered at the origin.
When the brushbot is driven close to the origin and enters this circle, it is pushed away from the origin to see
if it returns to the origin again (see \reffig{frameimage}).
\subsubsection{Results}
\reffig{fig:learnedmodel} plots $\hat{p}_{n}([\sigx;0;0]),\; \hat{f}_{n}(\sigx)$, $\hat{g}_{n}^{(1)}(\sigx)$ and $\hat{g}_{n}^{(2)}(\sigx)$ for
${\rm x}$ and ${\rm y}$ at $n=1000$.
Here $\hat{g}_{n}^{(i)}$ is the estimate of ${g}^{(i)}$ at time instant $n$.
Recall that these functions only depend on $\theta$ in this experiment to improve the learning efficiency.
For the shift of $\theta$, the estimators are constant over the state, and the result is $\hat{g}_{n}^{(1)}(\sigx)=1.38$, $\hat{g}_{n}^{(2)}(\sigx)=-0.77$ and $\hat{p}_{n}([\sigx;0;0])=\hat{f}_{n}(\sigx)=0$ at $n=1000$.
As can be seen in \reffig{fig:learnedmodel}, $\hat{p}_{n}([\sigx;0;0])$ is almost zero and so is $\hat{f}_{n}(\sigx)$,
implying that the proposed algorithm successfully dropped off irrelevant structural components of a model.
\begin{figure}[t]
	\begin{center}
		\includegraphics[clip,width=0.51\textwidth]{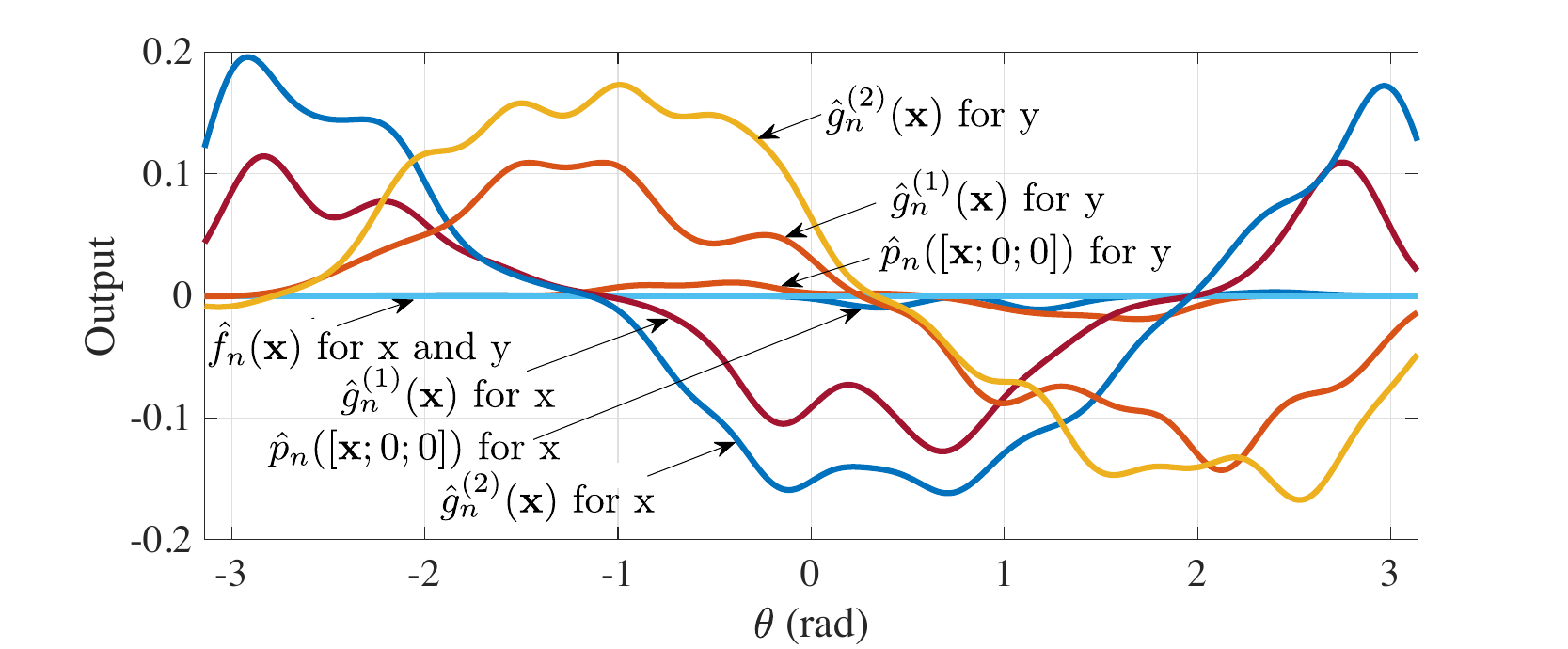}
	\end{center}
	\caption{Estimated output of the model estimator at $\sigu=[0;0]$ and $n=1000$ over the orientation $\theta$.  
		Irrelevant structures such as $\hat{p}_{n}$ and $\hat{f}_{n}$ dropped off successfully.} 
	\label{fig:learnedmodel}
\end{figure}

\reffig{fig:result1} plots the trajectory of the brushbot while exploring (i.e., X,Y positions from $n=0$ to $n=1000$).
It is observed that the brushbot remained in the region of exploration (${\rm x}\in[-1.2,1.2]$ and ${\rm y}\in[-1.2,1.2]$) most of the time.
Moreover, the values of barrier functions $B_i,\;i\in\{1,2,3,4\}$, for
the whole trajectory are plotted in \reffig{CBFcurve}.
Even though some violations of safety are seen in the figure, the brushbot returned to the safe region before large violations occurred.
Despite unknown, highly complex and nonstationary system, the proposed safe learning framework was shown to work efficiently.
\begin{figure*}[t]
	\begin{minipage}{0.38\hsize}
		\begin{center}
			\hspace{1em}
			\includegraphics[clip,width=1.1\textwidth]{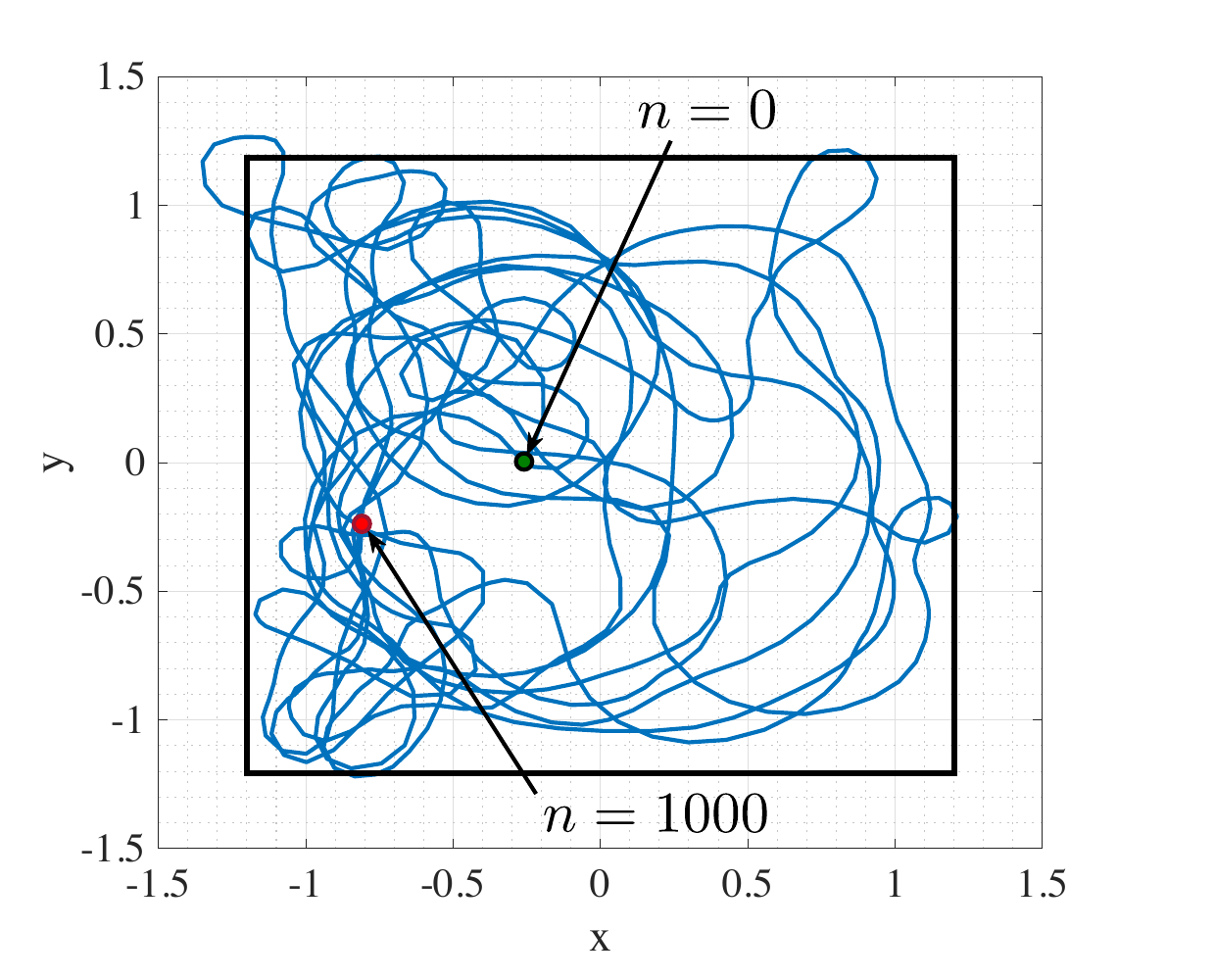}
		\end{center}	
	\end{minipage}
	\begin{minipage}{0.60\hsize}
		\begin{center}
			\hspace{1em}
			\includegraphics[clip,width=1.0\textwidth]{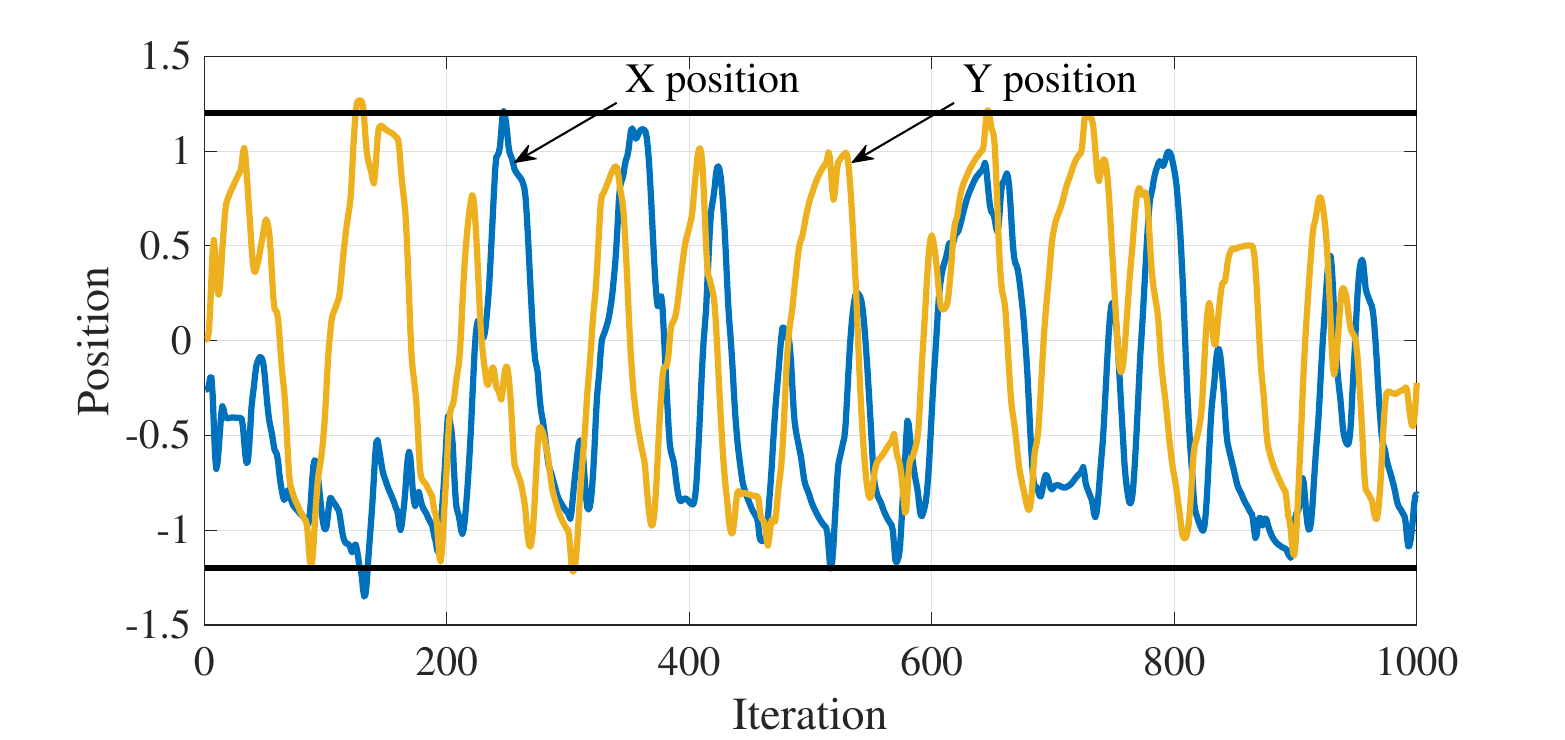}
		\end{center}
	\end{minipage}
	\caption{The left figure shows the trajectory of the brushbot while exploring, and the right figure shows X,Y positions over iterations.  The region of exploration is limited to ${\rm x}\in[-1.2,1.2]$ and ${\rm y}\in[-1.2,1.2]$.
		The brushbot remained in the region most of the time. }
	\label{fig:result1}
\end{figure*}
\begin{figure*}[t]
	\begin{center}
		\hspace{-3em}
		\includegraphics[clip,width=0.95\textwidth]{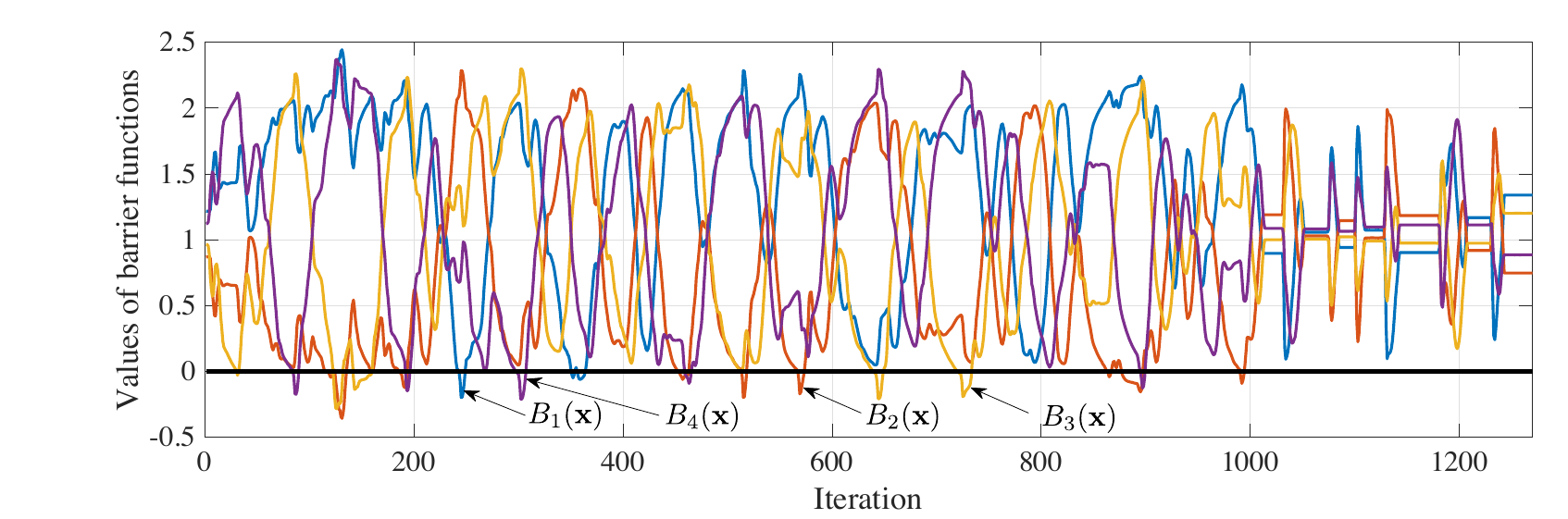}
	\end{center}
	\caption{The values of four control barrier functions employed in the experiment for the whole trajectory.  Even though some violations of safety were seen, the brushbot returned to the safe region before large violations occurred.  The nonholonomic brushbot adaptively learned a model to turn inward the region of exploration before reaching the boundaries of the region of exploration.}
	\label{CBFcurve}
\end{figure*}

\reffig{fig:result2} plots the trajectories of the optimal policy learned by the brushbot.
Once the optimal policy replaced random explorations, the brushbot returned to the origin until $n=1016$ as the first figure shows.
The brushbot was pushed by a sweeper at time instant $n=1031,1075,1101,1128,1181$ and $n=1230$, and
the trajectories of the brushbot after being pushed at $n=1031,1075,1101$ are also shown in \reffig{fig:result2}.
Dashed lines in the last figure indicate the time when the brushbot is pushed away.
Given relatively short learning time and the fact that no simulator was used, the brushbot learned the desirable behavior sufficiently well.
\begin{figure*}[t]
	\begin{minipage}{0.32\hsize}
		\begin{center}
			\includegraphics[clip,width=1.0\textwidth]{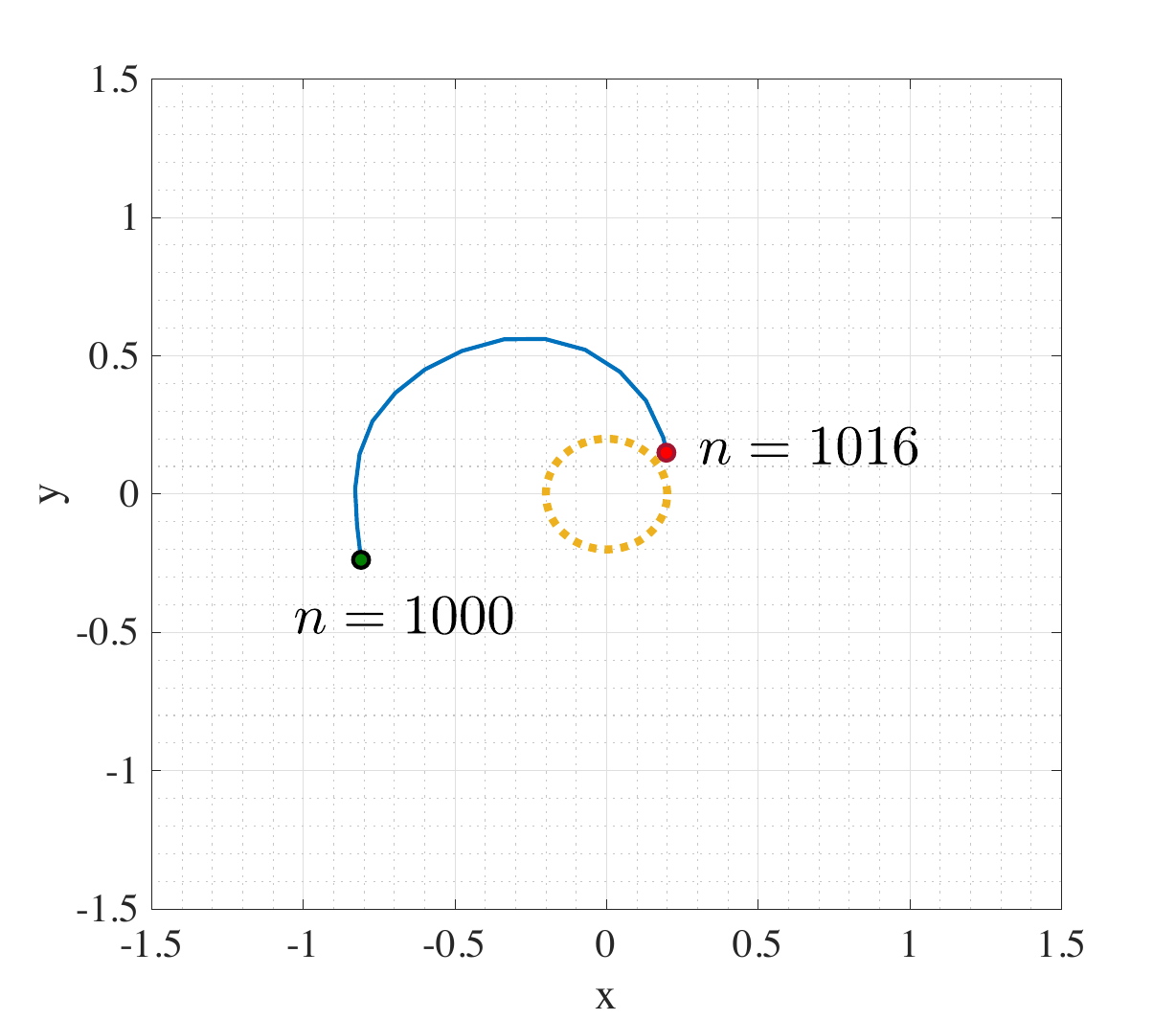}
		\end{center}	
	\end{minipage}
	\begin{minipage}{0.32\hsize}
		\begin{center}
			\includegraphics[clip,width=1.0\textwidth]{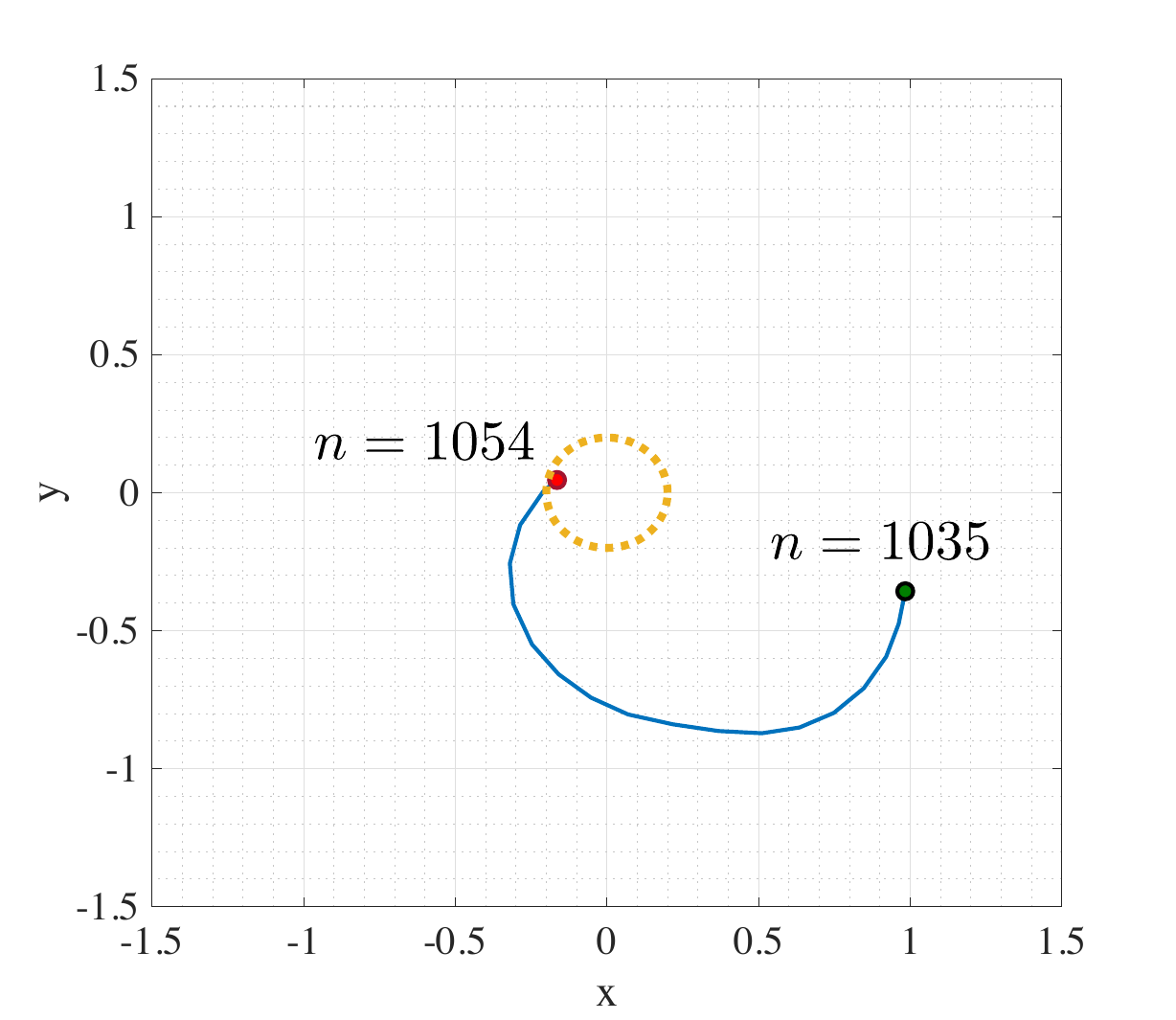}
		\end{center}	
	\end{minipage}
	\begin{minipage}{0.32\hsize}
		\begin{center}
			\includegraphics[clip,width=1.0\textwidth]{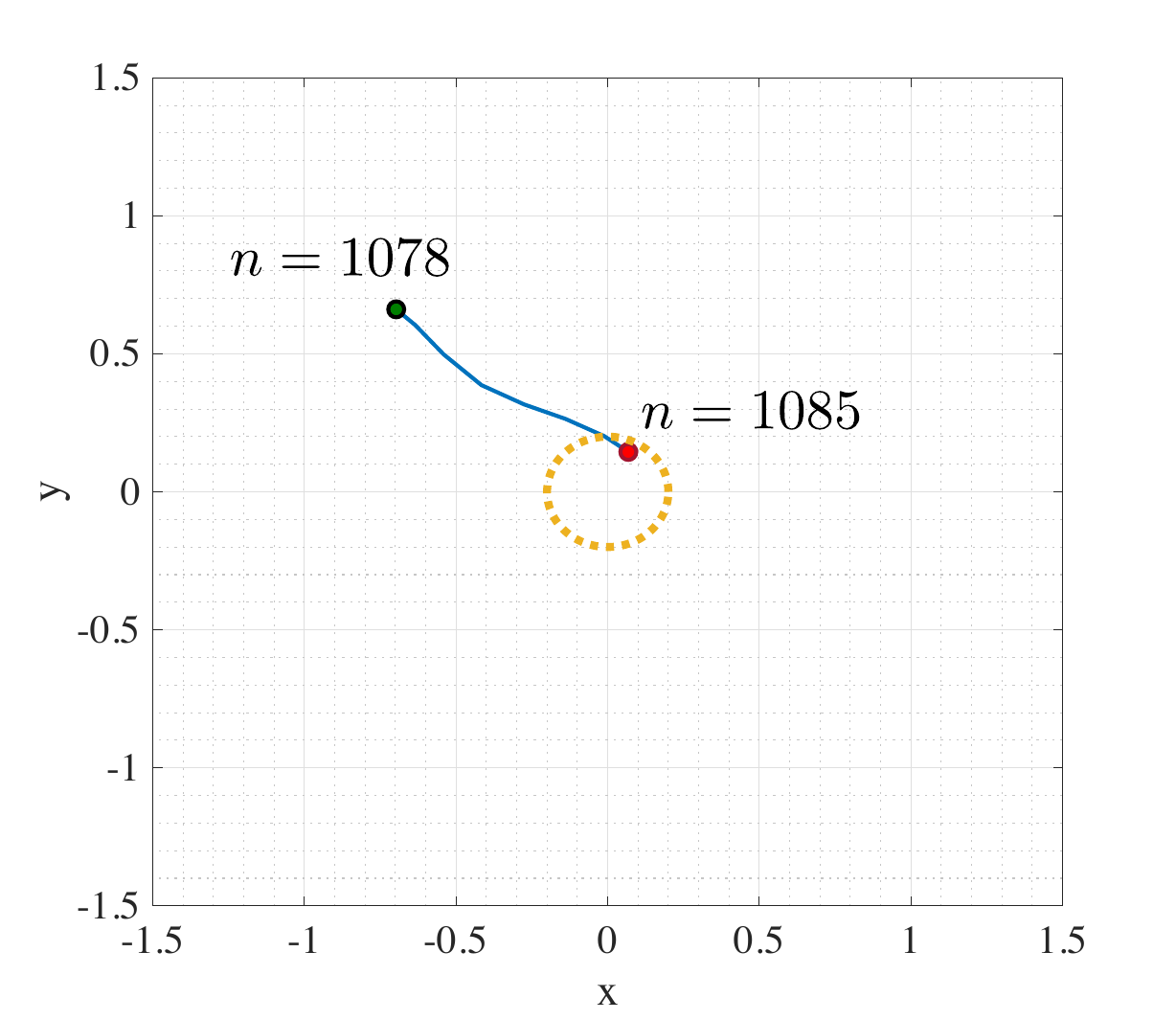}
		\end{center}	
	\end{minipage}
	\begin{minipage}{0.32\hsize}
		\begin{center}
			\includegraphics[clip,width=1.0\textwidth]{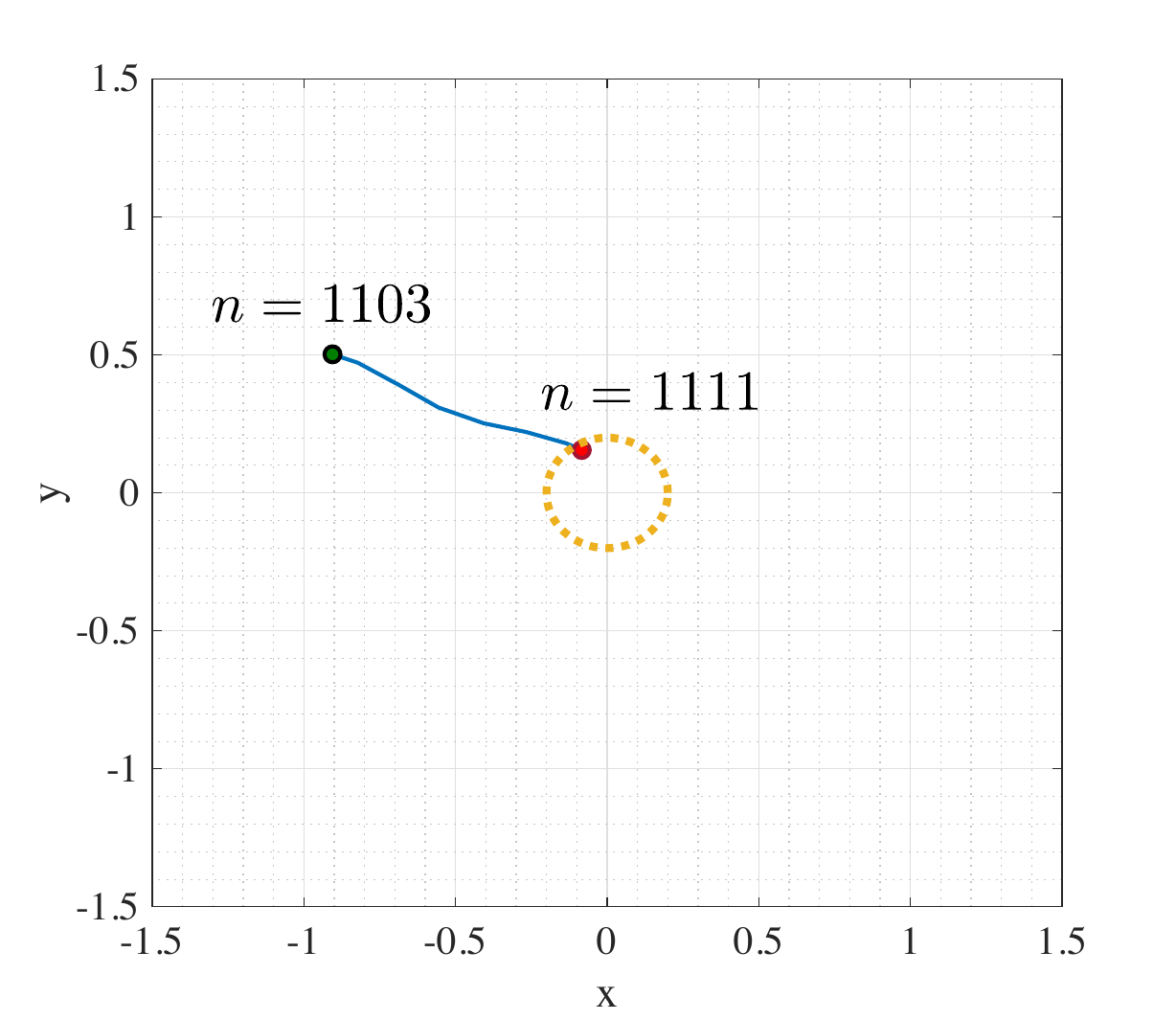}
		\end{center}	
	\end{minipage}
	\begin{minipage}{0.66\hsize}
		\begin{center}
			\includegraphics[clip,width=1.0\textwidth]{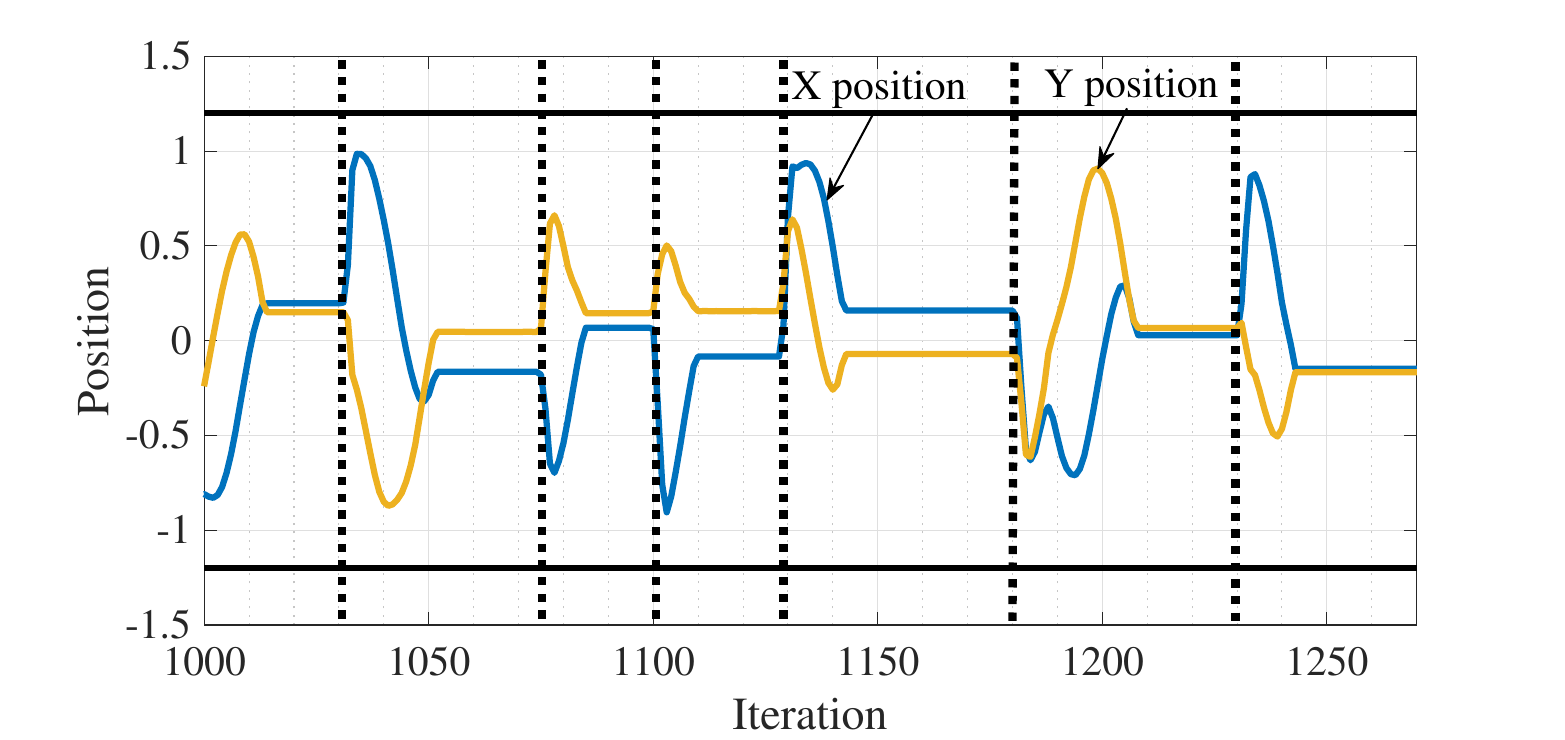}
		\end{center}	
	\end{minipage}
	\caption{Trajectories of the optimal policy learned by the brushbot.  The optimal policy replaced random explorations at $n=1000$, and the brushbot returned to the origin until $n=1016$ (first figure).
		The brushbot was pushed by a sweeper at time instant $n=1031,1075,1101,1128,1181$, and $n=1230$.
		Dashed lines in the last figure indicate the time when the brushbot was pushed away.
		The brushbot learned the desirable behavior sufficiently well. }
	\label{fig:result2}
\end{figure*}

\reffig{fig:result3} plots the shape of $\hat{Q}_n^{\phi}\left(\left[\norm{[{\rm x};{\rm y}]}_{\Real^2};0\right],[0;0]\right)$ over X,Y positions at $n=1000$.
It is observed that when the control input is zero (i.e., when the brushbot basically does not move), the vicinity of the origin has the highest value, which is reasonable.
\begin{figure}[t]
	\begin{center}
		\includegraphics[clip,width=0.50\textwidth]{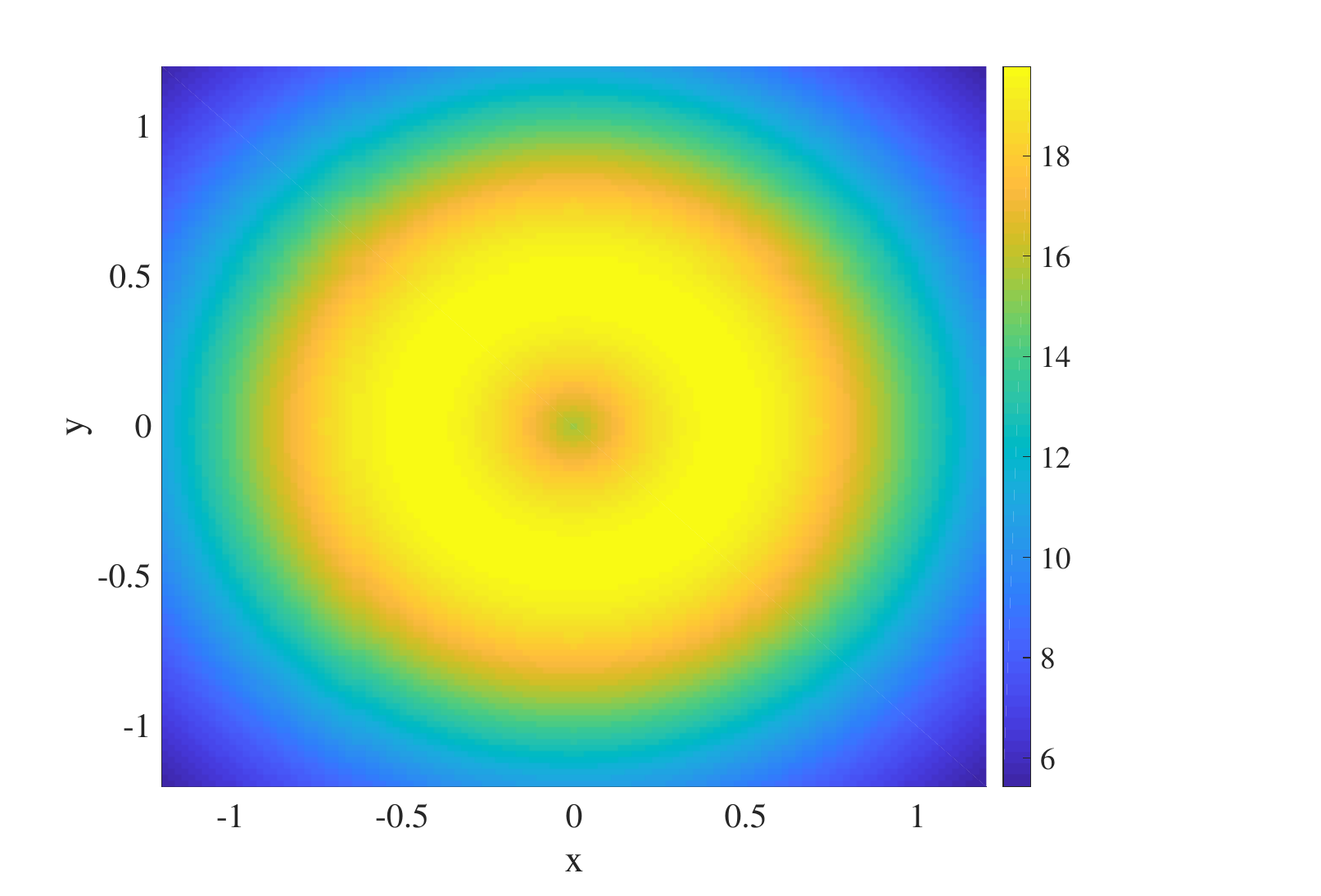}
	\end{center}
	\caption{The shape of the action-value function over X,Y positions at the control input $\sigu=[0;0]$ and $n=1000$.  The vicinity of the origin has the highest value when
		the control input is zero.}
	\label{fig:result3}
\end{figure}

Finally, \reffig{frameimage} shows two trajectories of the brushbot returning to the origin by using the action-value function saved at $n=1000$.
After being pushed away from the origin, the brushbot successfully returned to the origin again.
\begin{figure*}[t]
	\begin{minipage}{0.32\hsize}
		\begin{center}
	
			\includegraphics[clip,width=\textwidth]{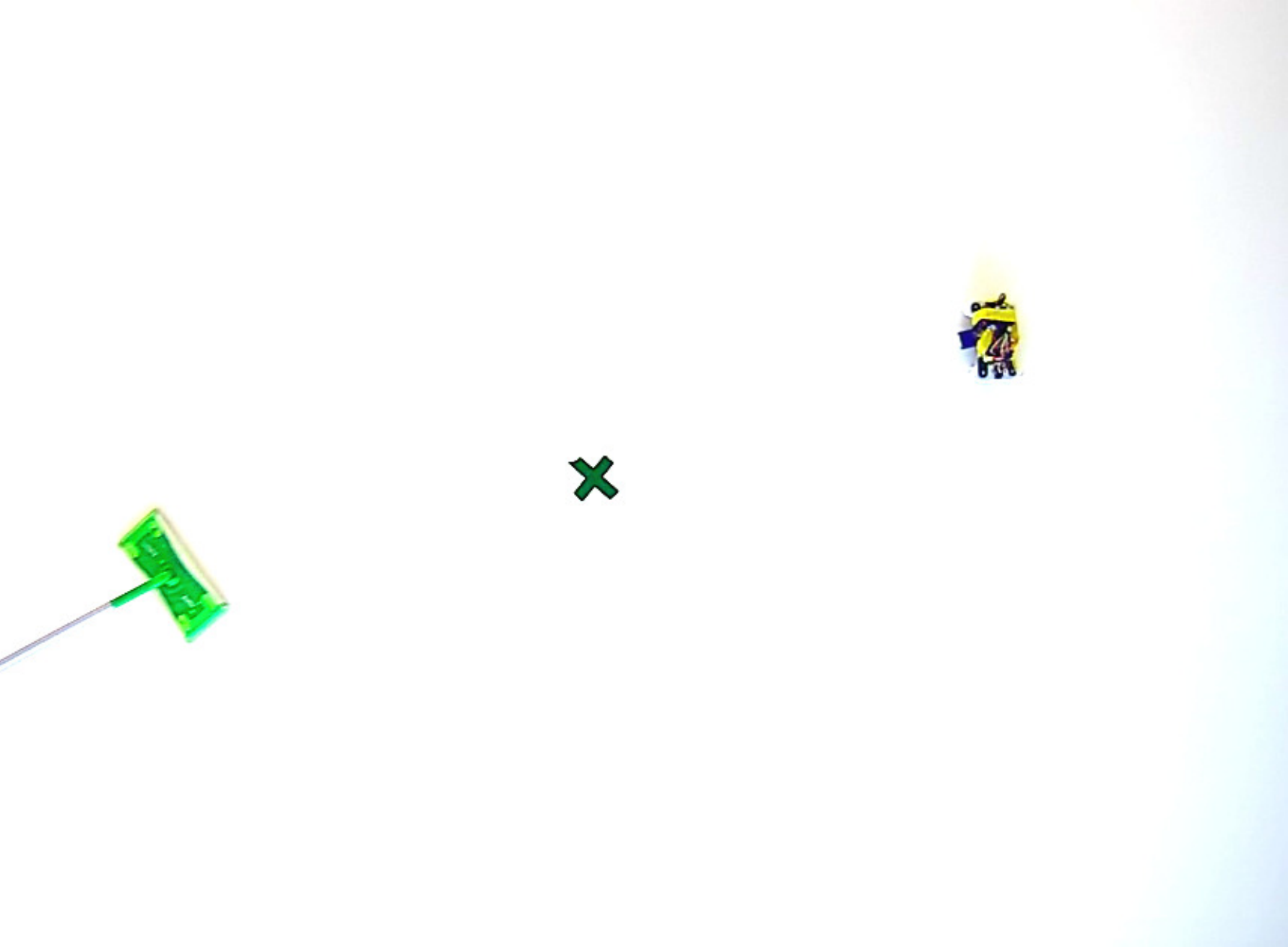}
		\end{center}	
	\end{minipage}
	\begin{minipage}{0.31\hsize}
		\begin{center}
			\includegraphics[clip,width=\textwidth]{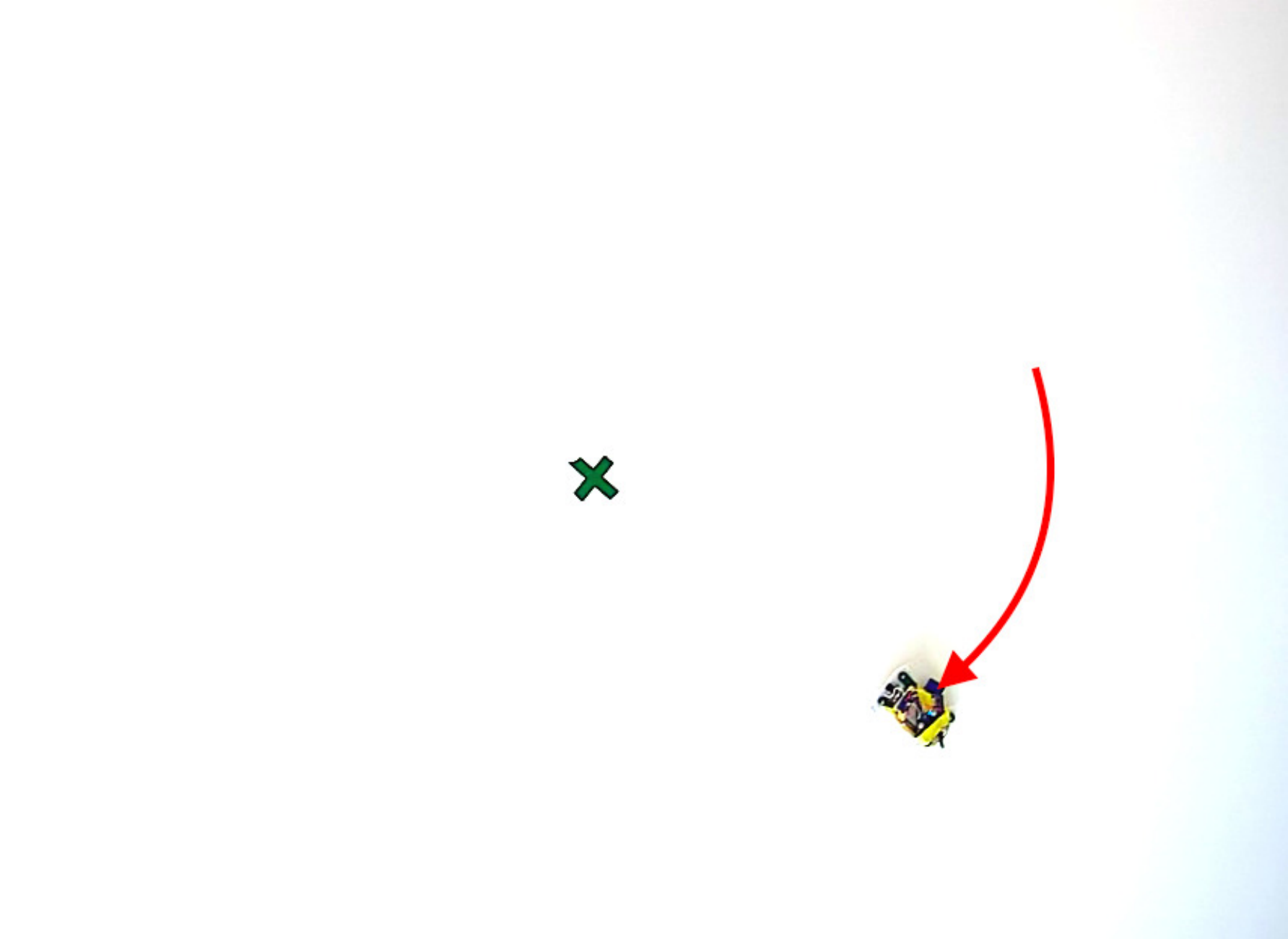}
		\end{center}
	\end{minipage}
	\begin{minipage}{0.31\hsize}
		\begin{center}
			
			\includegraphics[clip,width=\textwidth]{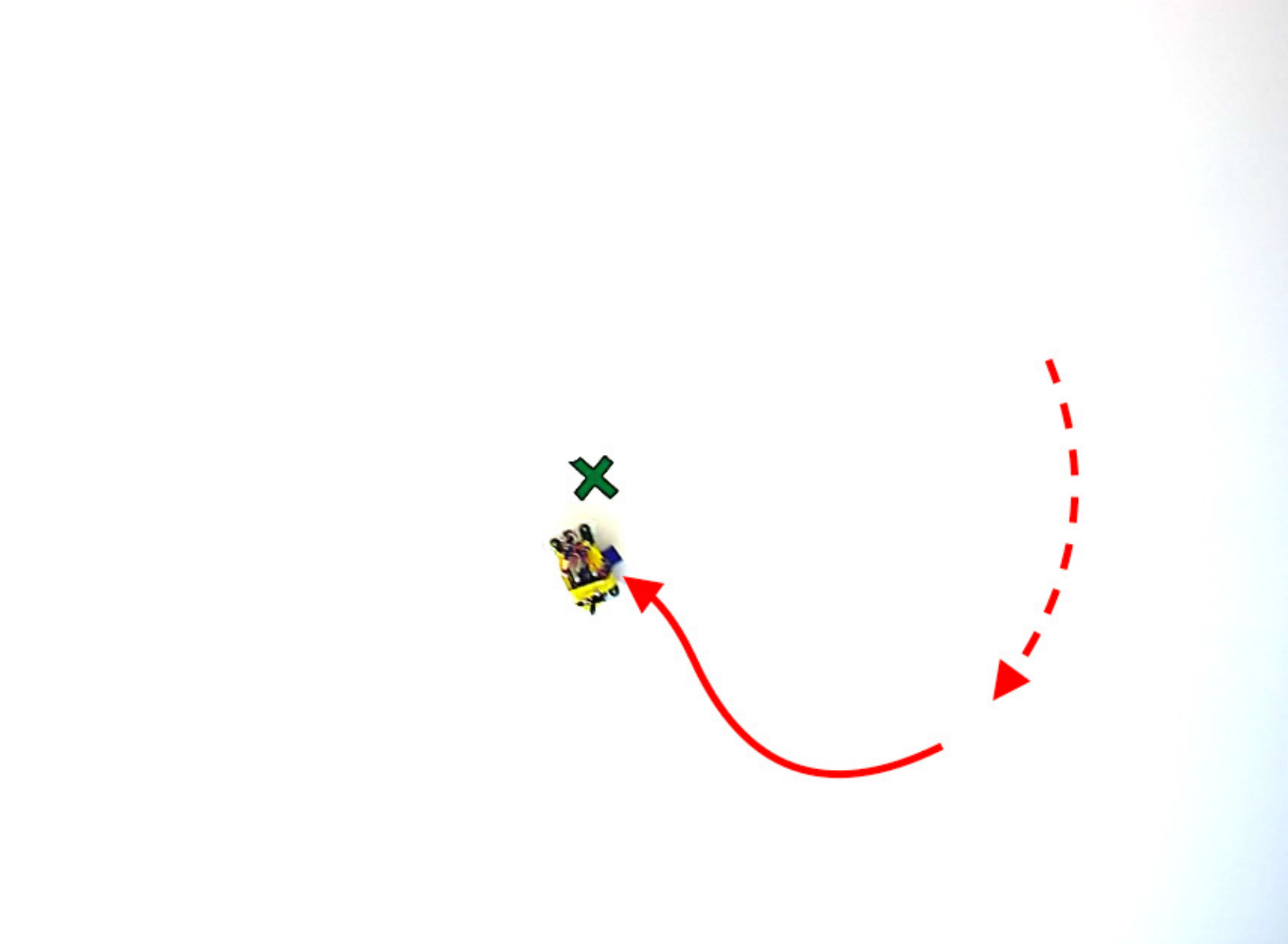}
		\end{center}
	\end{minipage}
	\begin{minipage}{0.32\hsize}
		\begin{center}
			
			\includegraphics[clip,width=\textwidth]{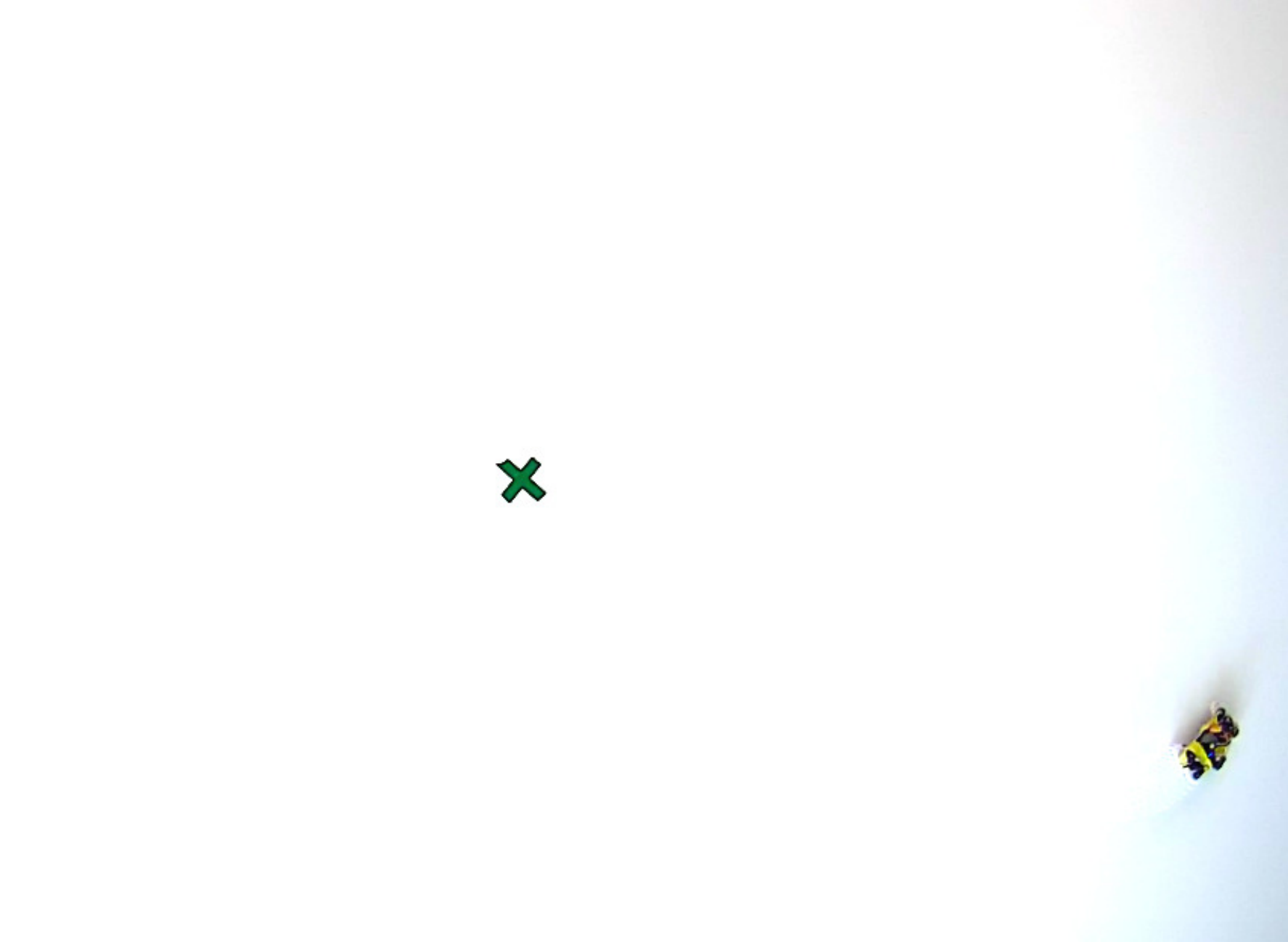}
		\end{center}	
	\end{minipage}
	\begin{minipage}{0.335\hsize}
		\begin{center}

			\includegraphics[clip,width=\textwidth]{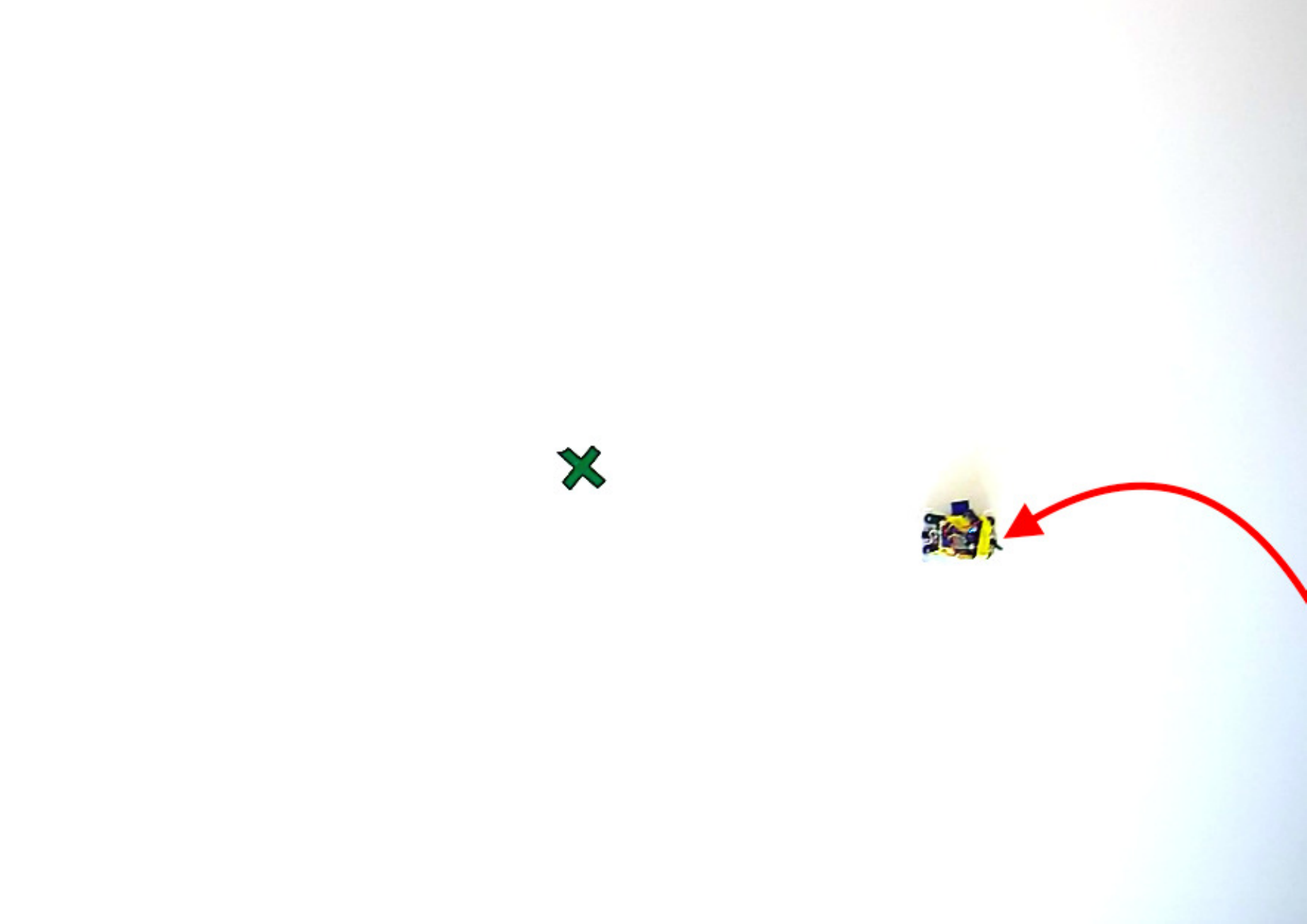}
		\end{center}
	\end{minipage}
	\begin{minipage}{0.335\hsize}
		\begin{center}
			
			\includegraphics[clip,width=\textwidth]{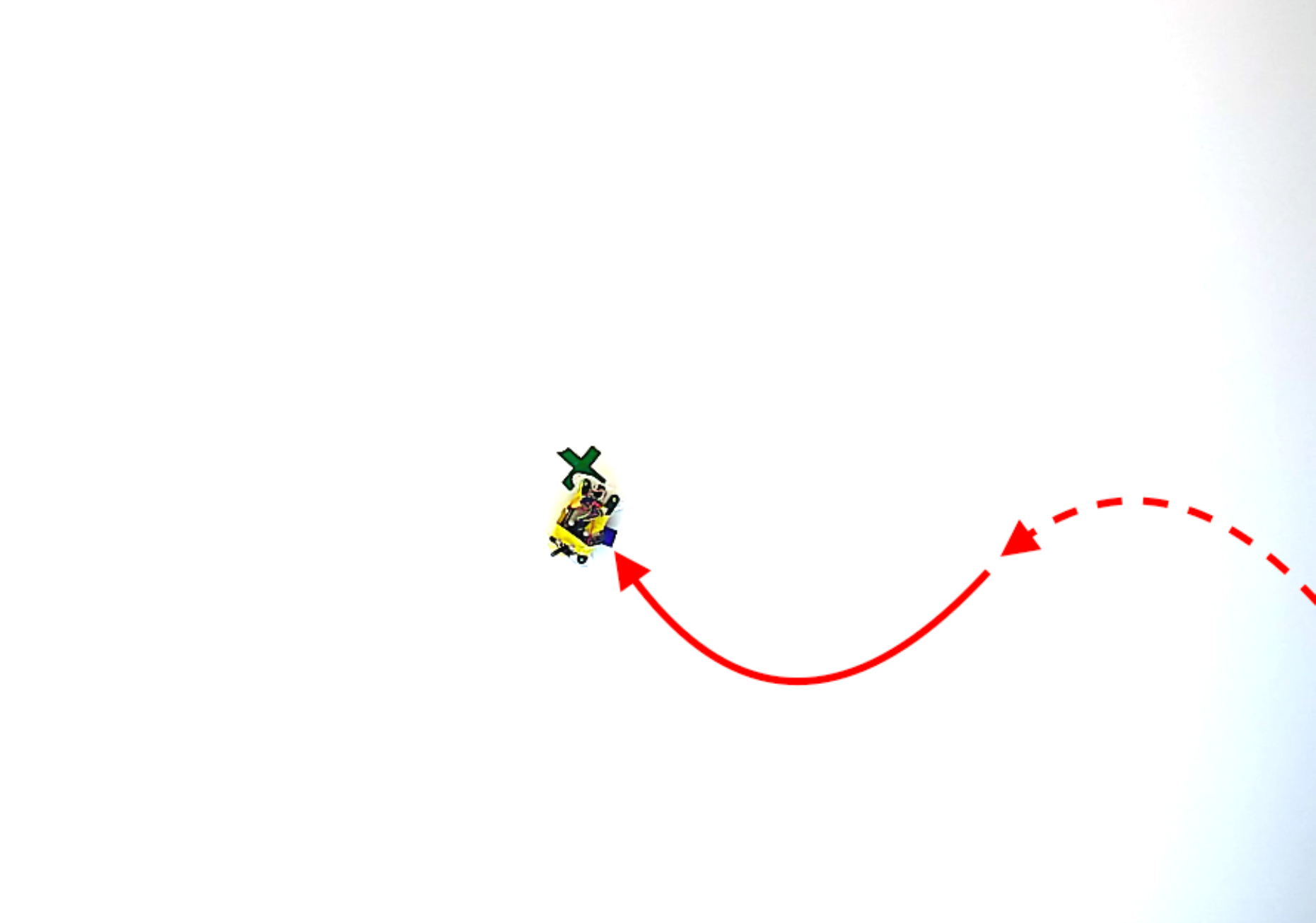}
		\end{center}
	\end{minipage}
	\caption{Two trajectories of the brushbot returning to the origin by using the action-value function saved at $n=1000$.
		Red arrows show the trajectories.
		After being pushed away from the origin, the brushbot successfully returned to the origin again.}
	\label{frameimage}
\end{figure*}

\subsubsection{Discussion}
One of the challenges of the experiments is that no initial data or simulators were available.
Despite the fact that the brushbot with highly complex system had to learn an optimal policy while dealing with safety by employing an adaptive model learning algorithm,
the proposed learning framework worked well in the real world.
Brushbot is powered by brushes, and its dynamics highly depends on the conditions of the floor and brushes.
The possible changes of the agent dynamics thus lead to some violations of safety.
Nevertheless, our learning framework recovered safety quickly.
In addition, the agent learned a {\em good} policy within a quite short period.
One reason of those successes of adaptivity and data-efficiency is the convex-analytic formulations.

On the other hand, because no initial nominal model or policy is available and our framework is fully adaptive, i.e., we
do {\em not} collect data to conduct batch model learning and/or reinforcement learning, we need to reduce the dimensions of input vectors
to speed-up and robustify learning.  This can be an inherent limitation of our framework. 
\section{Conclusion}
\label{sec:conclusion}
The learning framework presented in this paper successfully tied model learning, reinforcement learning, and
barrier certificates, enabling barrier-certified reinforcement learning for unknown, highly nonlinear, nonholonomic, and possibly nonstationary agent dynamics.
The proposed model learning algorithm captures a structure of the agent dynamics by employing a sparse optimization.
The resulting model has preferable structure for preserving efficient computations of barrier certificates.
In addition, recovery of safety after an unexpected and abrupt change of the agent dynamics was guaranteed by
employing barrier certificates and a model learning algorithm with monotone approximation property under certain conditions.
For possibly nonstationary agent dynamics,
the action-value function approximation problem was appropriately reformulated
so that kernel-based methods, including kernel adaptive filter, can be directly applied in an RKHS.
Lastly, certain conditions were also presented to render the set of safe policies convex, thereby
guaranteeing the global optimality of solutions to the policy update to
ensure the greedy improvement of a policy.
The experimental result shows the efficacy of the proposed learning framework
in the real world.
\begin{appendices}
	\numberwithin{equation}{section}
	\numberwithin{proposition}{section}
	\numberwithin{theorem}{section}
	\renewcommand{\theequation}{\thesection.\arabic{equation}}
	\renewcommand{\theproposition}{\thesection.\arabic{proposition}}
\section{Proof of Proposition \ref{prpCBF}}
\label{appprp3-1}
See {\cite[Proposition~4]{2017dcbf}} for the proof of forward invariance.
The set $\mathcal{C}\subset\X$ is asymptotically stable as
\begin{align}
\lim_{n\rightarrow\infty}B(\sigx_n)\geq \lim_{n\rightarrow\infty}(1-\eta)^nB(\sigx_0)=0, \nn
\end{align}
where the inequality holds from {\cite[Proposition~1]{2017dcbf}}.
\section{Proof of Theorem \ref{maintheo}}
\label{applyapu}
	From Assumptions \ref{assumstab}.1, \ref{assumstab}.2, \ref{assumstab}.5, and from the facts that the estimated output is linear to the model parameter
at a fixed input and that $\norm{\sigx_{n+1}-\hat{\sigx}_{n+1}}\geq0$, we obtain
\begin{align}
\norm{\sigx_{n+1}-\hat{\sigx}_{n+1}}^2_{\Real^{n_x}}-\frac{\varrho^2_1}{\nu^2_B}\leq \varrho^2_4dist^2(\sigh_{n},\Omega_n),\nn
\end{align}
for some bounded $\varrho^2_4\geq0$.
From Assumptions \ref{assumstab}.3, we also obtain that
\begin{align}
|B(\sigx_{n+1})-B(\hat{\sigx}_{n+1})|^2\leq\nu^2_B\norm{\sigx_{n+1}-\hat{\sigx}_{n+1}}^2_{\Real^{n_x}}.
\end{align}
Therefore, from Assumptions \ref{assumstab}.6 and \ref{assumstab}.7, and from $\nu_B\geq0$, we obtain for $\sigh_n^{*}\in\{\sigh\in\Omega|dist(\sigh_n,\Omega)=\norm{\sigh_n-\sigh}_{\Real^{r}}\}$ that
\begin{align}
&|B(\sigx_{n+1})-B(\hat{\sigx}_{n+1})|^2-\varrho^2_1\leq\nu^2_B\norm{\sigx_{n+1}-\hat{\sigx}_{n+1}}^2_{\Real^{n_x}}-\varrho^2_1\nn\\
&\leq\nu^2_B\varrho^2_4dist^2(\sigh_{n},\Omega_n)\nn\\
&\leq\frac{\nu^2_B\varrho^2_4}{\varrho^2_3}\left(\norm{\sigh_{n}-\sigh_n^{*}}^2_{\Real^{r}}-\norm{\sigh_{n+1}-\sigh_n^{*}}^2_{\Real^{r}}\right),\nn\\
&\leq\frac{\nu^2_B\varrho^2_4}{\varrho^2_3}[dist^2(\sigh_{n},\Omega)-dist^2(\sigh_{n+1},\Omega)].\nn
\end{align}
If $B(\sigx_{n+1})<B(\hat{\sigx}_{n+1})$, then we obtain
\begin{align}
&B(\sigx_{n+1})-B(\hat{\sigx}_{n+1})\nn\\
&~~~~~~\geq-\sqrt{\varrho^2_1+\frac{\nu^2_B\varrho^2_4}{\varrho^2_3}[dist^2(\sigh_{n},\Omega)-dist^2(\sigh_{n+1},\Omega)]}\nn\\
&~~~~~~\geq- \varrho_1-\frac{\nu_B\varrho_4}{\varrho_3}\sqrt{[dist^2(\sigh_{n},\Omega)-dist^2(\sigh_{n+1},\Omega)]}.   \label{smallenough}
\end{align}
This inequality also holds in case when $B(\sigx_{n+1})\geq B(\hat{\sigx}_{n+1})$.
Because of the continuity of the cost function $\Theta_n$ and the barrier function $B$ (Assumptions \ref{assumstab}.3 and \ref{assumstab}.5), the set $\mathcal{C}\x\Omega$ is closed.
We show that there exists a Lyapunov function $V_{\mathcal{C}\x\Omega}$ with respect to the closed set $\mathcal{C}\x\Omega$ for the augmented state $[\sigx;\sigh]$.
A candidate function is given by
\begin{align}
&\hspace{-1em}V_{\mathcal{C}\x\Omega}([\sigx;\sigh])\nn\\
&\hspace{-1em}~~~=\begin{cases}
0 \hfill~~~{\rm if}\;[\sigx;\sigh]\in \mathcal{C}\x\Omega\\
-\min(B(\sigx),0)+\frac{2\nu_B\varrho_4}{\varrho_2\varrho^2_3}dist^2(\sigh,\Omega)\hfill~~~{\rm if}\;[\sigx,\sigh]\notin \mathcal{C}\x\Omega
\end{cases}\nn
\end{align}
Since $-\min(B(\sigx),0)+\frac{2\nu_B\varrho_4}{\varrho_3}dist^2(\sigh,\Omega)=0$, $\forall[\sigx;\sigh]\in \partial(\mathcal{C}\x\Omega)$, where $\partial(\mathcal{C}\x\Omega)$ 
is the boundary of the set $\mathcal{C}\x\Omega$, from Assumption \ref{assumstab}.3, the function $V_{\mathcal{C}\x\Omega}$ is continuous.
It also holds that $V_{\mathcal{C}\x\Omega}([\sigx;\sigh])>0$ when $[\sigx;\sigh]\notin \mathcal{C}\x\Omega$.
Under Assumption \ref{assumstab}.6, we obtain
\begin{align}
&V_{\mathcal{C}\x\Omega}([\sigx_{n+1};\sigh_{n+1}])-V_{\mathcal{C}\x\Omega}([\sigx_{n};\sigh_{n}])\nn\\
&=-\min(B(\sigx_{n+1}),0)+\frac{2\nu_B\varrho_4}{\varrho_2\varrho^2_3}dist^2(\sigh_{n+1},\Omega)\nn\\
&+\min(B(\sigx_{n}),0)-\frac{2\nu_B\varrho_4}{\varrho_2\varrho^2_3}dist^2(\sigh_{n},\Omega)\nn\\
&\leq-\frac{\nu_B\varrho_4}{\varrho_2\varrho^2_3}[dist^2(\sigh_{n},\Omega)-dist^2(\sigh_{n+1},\Omega)]\leq 0,  \label{Lyapu}
\end{align}
for all $n\in\integer_{\geq0}$.
To show that the first inequality holds, we first show
\begin{align}
&-\min(B(\sigx_{n+1}),0)+\min(B(\sigx_n),0)\nn\\
&~~~~\leq \frac{\nu_B\varrho_4}{\varrho_3}\sqrt{[dist^2(\sigh_{n},\Omega)-dist^2(\sigh_{n+1},\Omega)]}. \nn
\end{align}
(a) For $B(\sigx_n)\geq0$:
from \refeq{DCBFdef3}, \refeq{smallenough}, and $0<\eta\leq1$, we obtain
$B(\hat{\sigx}_{n+1})\geq\varrho_1$ and $B(\sigx_{n+1})\geq-\frac{\nu_B\varrho_4}{\varrho_3}\sqrt{[dist^2(\sigh_{n},\Omega)-dist^2(\sigh_{n+1},\Omega)]}$
from which it follows that
\begin{align}
\hspace{-1.5em}-\min(B(\sigx_{n+1}),0)\leq \frac{\nu_B\varrho_4}{\varrho_3}\sqrt{[dist^2(\sigh_{n},\Omega)-dist^2(\sigh_{n+1},\Omega)]}. \nn
\end{align}
(b) For $B(\sigx_n)<0$ and $B(\sigx_{n+1})\geq0$:
it is straightforward to see that
\begin{align}
&-\min(B(\sigx_{n+1}),0)+\min(B(\sigx_n),0)=B(\sigx_n)\nn\\
&~~~~<0\leq \frac{\nu_B\varrho_4}{\varrho_3}\sqrt{[dist^2(\sigh_{n},\Omega)-dist^2(\sigh_{n+1},\Omega)]}. \nn
\end{align}
(c) For $B(\sigx_n)<0$ and $B(\sigx_{n+1})<0$:
from \refeq{DCBFdef3}, \refeq{smallenough}, and $0<\eta\leq1$, we obtain
$B(\hat{\sigx}_{n+1})\geq B(\sigx_n)+\varrho_1$ and $B(\sigx_{n+1})-B(\sigx_n)\geq-\frac{\nu_B\varrho_4}{\varrho_3}\sqrt{[dist^2(\sigh_{n},\Omega)-dist^2(\sigh_{n+1},\Omega)]}$
from which it follows that
\begin{align}
&-\min(B(\sigx_{n+1}),0)+\min(B(\sigx_n),0)=B(\sigx_n)-B(\sigx_{n+1})\nn\\
&~~~~~~\leq \frac{\nu_B\varrho_4}{\varrho_3}\sqrt{[dist^2(\sigh_{n},\Omega)-dist^2(\sigh_{n+1},\Omega)]}. \nn
\end{align}
If $\sigh_n\notin\Omega_n$, under Assumption \ref{assumstab}.6,
we obtain $\varrho_2\varrho_3\leq\sqrt{[dist^2(\sigh_{n},\Omega)-dist^2(\sigh_{n+1},\Omega)]}$ from which it follows that
$\sqrt{[dist^2(\sigh_{n},\Omega)-dist^2(\sigh_{n+1},\Omega)]}\leq\frac{1}{\varrho_2\varrho_3}[dist^2(\sigh_{n},\Omega)-dist^2(\sigh_{n+1},\Omega)]$ and
\begin{align}
&V_{\mathcal{C}\x\Omega}([\sigx_{n+1};\sigh_{n+1}])-V_{\mathcal{C}\x\Omega}([\sigx_{n};\sigh_{n}])\nn\\
&\leq\frac{\nu_B\varrho_4}{\varrho_3}\sqrt{[dist^2(\sigh_{n},\Omega)-dist^2(\sigh_{n+1},\Omega)]}\nn\\
&~~~~~~~~-\frac{2\nu_B\varrho_4}{\varrho_2\varrho^2_3}[dist^2(\sigh_{n},\Omega)-dist^2(\sigh_{n+1},\Omega)]\nn\\
&\leq-\frac{\nu_B\varrho_4}{\varrho_2\varrho^2_3}[dist^2(\sigh_{n},\Omega)-dist^2(\sigh_{n+1},\Omega)],\nn
\end{align}
and the first inequality of \refeq{Lyapu} holds.
The inequality also holds for $\sigh_n\in\Omega_n$.
Moreover, if $[\sigx_n;\sigh_n]\in \mathcal{C}\x\Omega$, then $\sigh_n$ remains in $\Omega$ because of monotonic approximation property.
From \refeq{smallenough},
the control barrier certificate \refeq{DCBFdef2} is thus ensured with a control input satisfying \refeq{DCBFdef3} under Assumption \ref{assumstab}.4,
and the set $\mathcal{C}\x\Omega$ is forward invariant.
Therefore, the system for the augmented state is stable with respect to the set $\mathcal{C}\x\Omega$.
If $\sigh_n\notin\Omega_n$ for all $n\in\integer_{\geq0}$ such that $[\sigx_n;\sigh_n]\notin\mathcal{C}\x\Omega$, it follows that
\begin{align}
V_{\mathcal{C}\x\Omega}([\sigx_{n+1};\sigh_{n+1}])-V_{\mathcal{C}\x\Omega}([\sigx_{n};\sigh_{n}])<0,
\end{align}
and {\cite[Theorem~1]{jiang2002converse}} applies, i.e., the system for the augmented state is uniformly globally asymptotically stable with respect to the set $\mathcal{C}\x\Omega$.
\section{Proof of Lemma \ref{prp1}}
\label{appprp1}
Since $\kappa(\sigu,\sigv)=\textbf{1}(\sigu)=1,\forall \sigu,\sigv\in\U$, is a positive definite kernel, it defines
the unique RKHS given by ${\rm span}\{\textbf{1}\}$, which is complete because it is a finite-dimensional space.
For any $\varphi:=\alpha\textbf{1}\in\H_c$, $\inpro{\varphi,\varphi}_{\H_c}=\alpha^2\geq0$ and the equality holds if and only if $\alpha=0$, or equivalently, $\varphi=0$.
The symmetry and the linearity also hold, and hence $\inpro{\cdot,\cdot}_{\H_c}$ defines the inner product.
For any $\sigu\in\U$, it holds that
$\inpro{\varphi,\kappa(\cdot,\sigu)}_{\H_c}=\inpro{\alpha\textbf{1},\textbf{1}}_{\H_c}=\alpha=\varphi(\sigu)$.
Therefore, the reproducing property is satisfied.	
\section{Proof of Theorem \ref{directtheorem}}
\label{apptheo1}
The following lemmas are used to prove the theorem.
\begin{lemma}[{\cite[Theorem~2]{minh2010some}}]
	\label{hojo1}
	Let $\X\subset\Real^{n_x}$ be any set with nonempty interior.  Then, the RKHS associated with the Gaussian kernel for an arbitrary scale parameter $\sigma>0$
	does not contain any polynomial on $\X$, including the nonzero constant function.
\end{lemma}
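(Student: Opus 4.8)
The plan is to exploit two structural properties of the Gaussian RKHS $\H$—that every element decays at infinity and that every element is real-analytic—and then to invoke the identity theorem to force a contradiction over $\X$, whose interior is nonempty. Once these two properties are in hand, the restriction of a nonzero polynomial to $\X$ cannot be squeezed into $\H$.

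First I would show that every $f\in\H$ satisfies $\lim_{\norm{\sigx}_{\Real^{n_x}}\to\infty}f(\sigx)=0$. By the reproducing property $f(\sigx)=\inpro{f,\ka{\cdot,\sigx}}_{\H}$, while $\norm{\ka{\cdot,\sigx}}_{\H}^2=\ka{\sigx,\sigx}=:c_0$ is a positive constant independent of $\sigx$ by translation invariance. For each fixed $\sigy$ one has $\inpro{\ka{\cdot,\sigx},\ka{\cdot,\sigy}}_{\H}=\ka{\sigx,\sigy}\to 0$ as $\norm{\sigx}_{\Real^{n_x}}\to\infty$, and since ${\rm span}\{\ka{\cdot,\sigy}:\sigy\in\Real^{n_x}\}$ is dense in $\H$, the bounded family $\ka{\cdot,\sigx}$ converges weakly to $0$; hence $f(\sigx)=\inpro{f,\ka{\cdot,\sigx}}_{\H}\to 0$. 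In particular, no nonzero polynomial—whether unbounded (degree $\geq 1$) or a nonzero constant (degree $0$)—can belong to $\H$ as a function on all of $\Real^{n_x}$, since none of these vanish at infinity.

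Second I would record that the Gaussian kernel is real-analytic in each argument, so every $f\in\H$ is real-analytic on $\Real^{n_x}$ (equivalently, the restriction of an entire function on $\comp^{n_x}$): this is the standard transfer of analyticity from a kernel to its RKHS. The same fact is transparent from the explicit feature expansion, which factors $\ka{\sigx,\sigy}=\exp(-\norm{\sigx}_{\Real^{n_x}}^2/(2\sigma^2))\,h(\sigx,\sigy)\,\exp(-\norm{\sigy}_{\Real^{n_x}}^2/(2\sigma^2))$ with $h$ an entire cross term, so that every $f\in\H$ is $\exp(-\norm{\sigx}_{\Real^{n_x}}^2/(2\sigma^2))$ times an entire function—yielding both analyticity and the decay used above. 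Now suppose, for contradiction, that some polynomial $P$ restricted to $\X$ lies in the RKHS of $\ka{\cdot,\cdot}$ restricted to $\X\times\X$. By Aronszajn's restriction theorem this RKHS equals $\{f|_{\X}:f\in\H\}$, so there is $f\in\H$ with $f=P$ on $\X$. Because ${\rm int}(\X)\neq\emptyset$, the analytic functions $f$ and $P$ agree on a nonempty open set, and since $\Real^{n_x}$ is connected the identity theorem gives $f\equiv P$ on all of $\Real^{n_x}$. If $P$ is nonzero this contradicts the vanishing-at-infinity property, so $P\equiv 0$, proving the claim for every scale $\sigma>0$.

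The main obstacle is the analyticity step: one must argue carefully that norm convergence in $\H$ (not merely pointwise convergence) propagates real-analyticity to limits of kernel sections—precisely what the kernel's analyticity guarantees—and that the reduction from ``a polynomial on $\X$ in the restricted RKHS'' to ``a global element of $\H$ equal to $P$ on $\X$'' is legitimate via the restriction theorem. With those two points secured, the decay-at-infinity property together with the identity theorem closes the argument uniformly in $\sigma$.
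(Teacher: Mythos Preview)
The paper does not supply a proof of this lemma at all: it is stated with the citation \cite[Theorem~2]{minh2010some} and used as a black box in the proof of Theorem~\ref{directtheorem}. There is therefore no ``paper's own proof'' to compare against; your proposal is an independent proof of the cited result.

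Your argument is sound. The two ingredients---(i) weak convergence of $\kappa(\cdot,\sigx)$ to zero in $\H$ as $\norm{\sigx}_{\Real^{n_x}}\to\infty$, giving $f(\sigx)\to 0$ for every $f\in\H$, and (ii) real-analyticity of every $f\in\H$ combined with Aronszajn's restriction theorem and the identity principle to pass from $\X$ to $\Real^{n_x}$---are correct and together force any candidate polynomial to be identically zero. For context, Minh's original proof proceeds via the explicit description of the Gaussian RKHS as functions of the form $e^{-\norm{\sigx}^2/\sigma^2}F(\sigx)$ with $F$ entire and satisfying a weighted $L^2$ growth condition, from which the exclusion of polynomials is read off directly; your route replaces that explicit characterization with the more qualitative decay-plus-analyticity argument, which is arguably cleaner for this particular conclusion. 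The only point you should tighten in a full write-up is the analyticity step you flag yourself: the cleanest justification is exactly the factorization you mention, which already exhibits every $f\in\H$ as a Gaussian envelope times an entire function and simultaneously delivers both analyticity and decay.
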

\begin{lemma}
	\label{hojo2}
	Assume that $\X\subset\Real^{n_x}$ and $\U\subset\Real^{n_u}$ have nonempty interiors.  Then, the intersection of the RKHS
	$\H_u$ associated with the kernel $\ka{\sigu,\sigv}:=\sigu^{\T}\sigv,\;\sigu,\sigv\in\U$, and the RKHS $\H_c$ is $\{0\}$, i.e.,
	\begin{align}
	\H_c \cap \H_u = \{0\}.\nn
	\end{align}
\end{lemma}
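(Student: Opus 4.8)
The plan is to describe both $\H_u$ and $\H_c$ explicitly as concrete spaces of functions on $\U$ and then compute their intersection directly. For $\H_u$, I would first invoke the standard construction of an RKHS from its kernel: $\H_u$ is the completion of ${\rm span}\{\ka{\cdot,\sigv}\mid\sigv\in\U\}=\{\sigu\mapsto\sigw^{\T}\sigu\mid\sigw\in{\rm span}\,\U\}$. Since $\U$ has nonempty interior it spans $\Real^{n_u}$, so this span already equals $\{\sigu\mapsto\sigw^{\T}\sigu\mid\sigw\in\Real^{n_u}\}$; being finite-dimensional it is complete, hence $\H_u=\{\sigu\mapsto\sigw^{\T}\sigu\mid\sigw\in\Real^{n_u}\}$. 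The coefficient representation is unique: if $\sigw^{\T}\sigu=0$ for all $\sigu\in\U$, then $\sigw=\textbf{0}$ because $\U$ spans $\Real^{n_u}$.

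Next, Lemma \ref{prp1} already identifies $\H_c$ with ${\rm span}\{\textbf{1}\}$, i.e., with the constant functions $\sigu\mapsto c$, $c\in\Real$. So I would take an arbitrary $h\in\H_c\cap\H_u$ and write it in both forms: $h(\sigu)=c$ and $h(\sigu)=\sigw^{\T}\sigu$ for every $\sigu\in\U$, whence $\sigw^{\T}\sigu=c$ on $\U$. Fixing an interior point $\sigu_0\in\U$ and using that $\sigu_0\pm t\textbf{e}_i\in\U$ for all sufficiently small $t>0$ and each coordinate direction $\textbf{e}_i$, subtracting the identities $\sigw^{\T}(\sigu_0+t\textbf{e}_i)=c=\sigw^{\T}(\sigu_0-t\textbf{e}_i)$ gives $\sigw^{\T}\textbf{e}_i=0$ for every $i$, hence $\sigw=\textbf{0}$ and then $c=\sigw^{\T}\sigu_0=0$; thus $h\equiv0$. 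Since the zero function trivially belongs to both $\H_c$ and $\H_u$, this yields $\H_c\cap\H_u=\{0\}$.

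The only step requiring any care is the explicit description of $\H_u$ together with uniqueness of the coefficient vector, and this is precisely where the nonempty-interior hypothesis on $\U$ enters — without it one would only obtain that $\sigw$ is orthogonal to ${\rm span}\,\U$ (which still forces $h\equiv0$, but the bookkeeping is messier). I do not anticipate a genuine obstacle here; this lemma is essentially a warm-up feeding into the disjointness claims underlying Theorem \ref{directtheorem}, where the substantive input is instead Lemma \ref{hojo1} on Gaussian RKHSs.
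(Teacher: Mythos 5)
Your proposal is correct and follows essentially the same route as the paper: both identify $\H_u$ as the finite-dimensional space of linear functions $\sigu\mapsto\sigw^{\T}\sigu$ spanned by the kernel sections, identify $\H_c$ with the constants, and use an interior point of $\U$ (perturbed along suitable directions) to show that a linear function which is constant on $\U$ must vanish identically. The only cosmetic difference is that the paper perturbs the interior point along a single direction $\sigv$ with $\varphi(\sigv)\neq0$ and cites a structure theorem for finite-dimensional RKHSs, whereas you perturb along the coordinate directions $\pm t\textbf{e}_i$ and build the description of $\H_u$ from the completion of the span of kernel sections; these are interchangeable.
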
	
\begin{proof}
	It is obvious that the function $\varphi(\sigu)=0,\forall \sigu\in \U$, is an element of both of the RKHSs (vector spaces) $\H_u$ and $\H_c$.
	Therefore, it is sufficient to show that there exists $\sigu\in\U$ satisfying that $\varphi(\sigu)\neq \varphi(\sigu^{\rm int}),\;\sigu^{\rm int}\in{\rm int}(\U)$, where
	${\rm int}(\U)$ denotes the interior of $\U$, for any
	$\varphi\in\H_u\setminus\{0\}$.
	Assume that $\varphi(\sigv)\neq0$ for some $\sigv\in\U$.
	From {\cite[Theorem~3]{moor1}}, the RKHS $\H_u$ is expressed as $\H_u={\rm span}\{\ka{\cdot,\sigu}\}_{\sigu\in\U}$, which is finite dimension,
	implying that any function in $\H_u$ is linear.
	Since there exists $\sigu=\sigu^{\rm int}+\varrho_5 \sigv\in \U$ for some $\varrho_5>0$,
	it is proved that
	\begin{align}
	&\varphi(\sigu)=\varphi(\sigu^{\rm int}+\varrho_5 \sigv)=\varphi(\sigu^{\rm int})+\varrho_5\varphi(\sigv)\neq \varphi(\sigu^{\rm int}).\nn
	\end{align}
\end{proof}	
\begin{lemma}[{\cite[Proposition~1.3]{tensorproduct}}]
	\label{hojo3}
	If $\H_2=\H_{21}\oplus\H_{22}$ for given vector spaces $\H_1$ and $\H_2$, then
	$\H_1\otimes\H_{21}\cap\H_1\otimes\H_{22}=\{0\},$
	i.e.,
	$\H_1\otimes\H_2=(\H_1\otimes\H_{21})\oplus(\H_1\otimes\H_{22}).$
\end{lemma}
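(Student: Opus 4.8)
The plan is to reduce the claim to the single structural fact that the tensor product $\H_1\otimes(\cdot)$ carries a direct-sum decomposition of the second factor to a direct-sum decomposition of the product, and then to extract both the trivial-intersection and the spanning assertion from one compatible choice of bases. First I would fix a basis $\{a_i\}_{i\in I}$ of $\H_1$, a basis $\{b_j\}_{j\in J}$ of $\H_{21}$, and a basis $\{c_k\}_{k\in K}$ of $\H_{22}$. Since $\H_2=\H_{21}\oplus\H_{22}$ is a direct sum, the disjoint union $\{b_j\}_{j\in J}\sqcup\{c_k\}_{k\in K}$ is a basis of $\H_2$, every vector of $\H_2$ decomposing uniquely into an $\H_{21}$-part and an $\H_{22}$-part.

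The core step is the tensor-product basis theorem: if $\{a_i\}$ is a basis of $\H_1$ and $\{e_\ell\}$ is a basis of $\H_2$, then $\{a_i\otimes e_\ell\}$ is a basis of $\H_1\otimes\H_2$. Feeding in the basis $\{b_j\}\sqcup\{c_k\}$ of $\H_2$ partitions the resulting basis of $\H_1\otimes\H_2$ into two disjoint families, $\{a_i\otimes b_j\}_{(i,j)}$ and $\{a_i\otimes c_k\}_{(i,k)}$. By definition $\H_1\otimes\H_{21}$ is the span of the first family and $\H_1\otimes\H_{22}$ is the span of the second. Because these are the spans of two disjoint parts of a single basis, linear independence of that basis immediately forces $(\H_1\otimes\H_{21})\cap(\H_1\otimes\H_{22})=\{0\}$, and the fact that the two families together exhaust the basis gives $(\H_1\otimes\H_{21})+(\H_1\otimes\H_{22})=\H_1\otimes\H_2$; these two facts are precisely the asserted internal direct-sum decomposition.

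To match the RKHS setting of the paper, in which one works with completed tensor products, I would rerun the same argument with orthonormal bases: taking $\{a_i\}$, $\{b_j\}$, $\{c_k\}$ orthonormal makes $\{a_i\otimes b_j\}\sqcup\{a_i\otimes c_k\}$ an orthonormal family, and for members of different families $\inpro{a_i\otimes b_j,a_{i'}\otimes c_k}=\inpro{a_i,a_{i'}}\inpro{b_j,c_k}$ vanishes whenever the decomposition is taken orthogonal, so that the closed spans $\H_1\otimes\H_{21}$ and $\H_1\otimes\H_{22}$ are mutually orthogonal and the union of their bases spans the completed product. The main obstacle is the tensor-product basis theorem itself: spanning follows at once from bilinearity of $\otimes$, but linear independence of $\{a_i\otimes e_\ell\}$ is the real content, and I would secure it through the universal property, exhibiting for each index pair a bilinear functional that separates $a_i\otimes e_\ell$ from the other basis vectors, which is equivalent to the natural isomorphism $\H_1\otimes(\H_{21}\oplus\H_{22})\cong(\H_1\otimes\H_{21})\oplus(\H_1\otimes\H_{22})$ expressing that tensoring preserves direct sums.
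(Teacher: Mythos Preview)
Your argument is correct and is essentially the standard proof of the distributivity of the tensor product over direct sums: choose compatible bases, invoke the tensor-product basis theorem, and read off both the trivial intersection and the spanning from the partition of the resulting basis. Your extension to the completed (Hilbert) tensor product via orthonormal bases is also sound, provided the direct sum $\H_2=\H_{21}\oplus\H_{22}$ is orthogonal, which is the relevant case here.

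The only point of comparison is that the paper does not actually supply a proof of this lemma: it is quoted verbatim as Proposition~1.3 of \cite{tensorproduct} and used as a black box in the proof of Theorem~\ref{directtheorem}. So there is no ``paper's own proof'' to weigh your argument against; you have filled in exactly the content the paper delegates to the reference, and your route (basis argument plus universal property for the independence step) is the same one Ryan's text uses. Nothing more is needed.
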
	
\begin{lemma}
	\label{hojo4}
	Given $\X\subset\Real^{n_x}$ and $\U\subset\Real^{n_u}$,
	let $\H_1$, $\H_2$, and $\H$ be associated with the Gaussian kernels
	$\kappa_1(\sigx,\sigy):=\frac{1}{(\sqrt{2\pi}\s)^{n_x}}\exp\left(-\frac{\norm{\sigx-\sigy}_{\Real^{n_x}}^2}{2\s^2}\right),\;\sigx,\sigy\in\X$,
	$\kappa_2(\sigu,\sigv):=\frac{1}{(\sqrt{2\pi}\s)^{n_u}}\exp\left(-\frac{\norm{\sigu-\sigv}_{\Real^{n_u}}^2}{2\s^2}\right),\;\sigu,\sigv\in\U$,
	and $\kappa([\sigx;\sigu],[\sigy;\sigv]):=\frac{1}{(\sqrt{2\pi}\s)^{n_x+n_u}}
	\exp\left(-\frac{\norm{[\sigx;\sigu]-[\sigy;\sigv]}_{\Real^{n_x+n_u}}^2}{2\s^2}\right),\;\sigx,\sigy\in\X,\;\sigu,\sigv\in\U$, respectively,
	for an arbitrary $\s>0$.
	Then, by regarding a function in $\H_1\otimes\H_2$ as a function over the input space $\X\x\U\subset\Real^{n_x+n_u}$, it holds that
	\begin{align}
	\H=\H_1\otimes\H_2.\nn
	\end{align}
\end{lemma}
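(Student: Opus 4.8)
The plan is to reduce the claim to the uniqueness of the RKHS attached to a positive definite kernel, after observing that the Gaussian kernel on the product domain factors as the product of the two Gaussian kernels on the factors. First I would record the elementary factorization: for $\sigx,\sigy\in\X$ and $\sigu,\sigv\in\U$ one has $\norm{[\sigx;\sigu]-[\sigy;\sigv]}_{\Real^{n_x+n_u}}^2=\norm{\sigx-\sigy}_{\Real^{n_x}}^2+\norm{\sigu-\sigv}_{\Real^{n_u}}^2$, while the normalization constant splits as $(\sqrt{2\pi}\s)^{n_x+n_u}=(\sqrt{2\pi}\s)^{n_x}(\sqrt{2\pi}\s)^{n_u}$. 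Since $\exp$ turns the sum in the exponent into a product, this yields
\begin{align}
\kappa([\sigx;\sigu],[\sigy;\sigv])=\kappa_1(\sigx,\sigy)\,\kappa_2(\sigu,\sigv),\quad\forall\,\sigx,\sigy\in\X,\ \sigu,\sigv\in\U.\nn
\end{align}

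Next I would invoke Lemma \ref{prp2}: the completion of the tensor product, written $\H_1\otimes\H_2$, is an RKHS whose reproducing kernel is $\kappa_1\otimes\kappa_2$. Under the realization of an element $\sum_i\varphi_i\otimes\psi_i$ of $\H_1\otimes\H_2$ as the function $[\sigx;\sigu]\mapsto\sum_i\varphi_i(\sigx)\psi_i(\sigu)$ on $\X\x\U$, the tensor-product kernel evaluated at a pair of points is exactly $(\kappa_1\otimes\kappa_2)([\sigx;\sigu],[\sigy;\sigv])=\kappa_1(\sigx,\sigy)\kappa_2(\sigu,\sigv)$. Comparing with the display above, $\kappa_1\otimes\kappa_2$ and $\kappa$ are the same positive definite function on $(\X\x\U)\x(\X\x\U)$. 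Since a positive definite kernel determines its RKHS uniquely (the same uniqueness already used in Appendix \ref{appprp1}), the RKHS $\H$ of $\kappa$ must coincide, as a set of functions on $\X\x\U$ and with identical inner product, with the RKHS $\H_1\otimes\H_2$ of $\kappa_1\otimes\kappa_2$; hence $\H=\H_1\otimes\H_2$.

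There is no serious obstacle here: the argument is essentially ``factor the Gaussian, then invoke uniqueness of the RKHS.'' The only care needed is bookkeeping — keeping the notation $\H_1\otimes\H_2$ consistent with Lemma \ref{prp2} (the \emph{completed} tensor product, not the purely algebraic one) and making explicit the identification of its elements with genuine functions on $\X\x\U\subset\Real^{n_x+n_u}$, which is precisely what the reproducing-kernel formula $\kappa_1\otimes\kappa_2$ encodes.
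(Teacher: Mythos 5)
Your proposal is correct and follows essentially the same route as the paper's own proof: factor the Gaussian kernel on $\X\x\U$ into $\kappa_1(\sigx,\sigy)\kappa_2(\sigu,\sigv)$, identify this with the reproducing kernel of $\H_1\otimes\H_2$ from Lemma \ref{prp2}, and conclude by the uniqueness of the RKHS associated with a positive definite kernel. The only difference is that you make the final uniqueness step explicit, whereas the paper leaves it implicit after the kernel computation.
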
	
\begin{proof}
	$\H_1\otimes\H_2$ has the reproducing kernel defined by
	\begin{align}
	&\kappa_{\otimes}([\sigx;\sigu],[\sigy;\sigv]):=\kappa_1(\sigx,\sigy)\kappa_2(\sigu,\sigv)\nn\\
	&=\frac{1}{(\sqrt{2\pi}\s)^{n_x}(\sqrt{2\pi}\s)^{n_u}}\nn\\
	&\exp\left(-\frac{\norm{\sigx-\sigy}_{\Real^{n_x}}^2}{2\s^2}\right)\exp\left(-\frac{\norm{\sigu-\sigv}_{\Real^{n_u}}^2}{2\s^2}\right)\nn\\
	&=\frac{1}{(\sqrt{2\pi}\s)^{n_x+n_u}}
	\exp\left(-\frac{\norm{\sigx-\sigy}_{\Real^{n_x}}^2+\norm{\sigu-\sigv}_{\Real^{n_u}}^2}{2\s^2}\right)\nn\\
	&=\kappa([\sigx;\sigu],[\sigy;\sigv]).\nn
	\end{align}
	This verifies the claim.
\end{proof}	
We are now ready to prove Theorem \ref{directtheorem}.\\
\begin{proof}[Proof of Theorem \ref{directtheorem}]
	By Lemmas \ref{hojo2} and \ref{hojo3}, it is derived that $\H_f\otimes\H_c\cap\H_g\otimes\H_u=\{0\}$.
	By Lemmas \ref{hojo1}, \ref{hojo3}, and \ref{hojo4}, it holds that $\H_p\cap\H_f\otimes\H_c=\{0\}$ and $\H_p\cap\H_g\otimes\H_u=\{0\}$.
\end{proof}
\section{Proof of Theorem \ref{theoRKHS}}
\label{apptheo2}
We show that the operator $U:\H_Q\rightarrow\H_{\psi^Q}$, which maps $\varphi^Q\in\H_Q$ to a function $\varphi\in\H_{\psi^Q},\varphi([\sigz;\sigw])=\varphi^Q(\sigz)-\gamma \varphi^Q(\sigw)$
where $\gamma\in(0,1),\;\sigz,\sigw\in\mathcal{Z}$, is bijective.
Because the mapping $U$ is surjective by definition,
we show it is also injective.
For any $\varphi_1^Q,\varphi_2^Q\in\H_Q$, 
\begin{align}
&\hspace{-1em}U(\varphi_1^Q+\varphi_2^Q)([\sigz;\sigw])=(\varphi_1^Q+\varphi_2^Q)(\sigz)-\gamma(\varphi_1^Q+\varphi_2^Q)(\sigw)\nn\\
&\hspace{-1em}=(\varphi_1^Q(\sigz)-\gamma \varphi_1^Q(\sigw))+(\varphi_2^Q(\sigz)-\gamma \varphi_2^Q(\sigw))\nn\\
&\hspace{-1em}=U(\varphi_1^Q)([\sigz;\sigw])+U(\varphi_2^Q)([\sigz;\sigw]),\;\forall \sigz,\sigw\in\mathcal{Z},\nn
\end{align}
and 
\begin{align}
&U(\alpha \varphi_1^Q)([\sigz;\sigw])\nn\\
&=\alpha \varphi_1^Q(\sigz)-\gamma\alpha \varphi_1^Q(\sigw)=\alpha (\varphi_1^Q(\sigz)-\gamma \varphi_1^Q(\sigw))\nn\\
&=\alpha U(\varphi_1^Q)([\sigz;\sigw]),\;\forall \alpha\in\Real,\;\forall \sigz,\sigw\in\mathcal{Z},\nn
\end{align}
from which the linearity holds.
Therefore, it is sufficient to show that ${\rm ker}(U)=0$ \cite{linearalgebra}.  For any $\varphi^Q\in{\rm ker}(U)$, we obtain
\begin{align}
&U(\varphi^Q)([\sigz;\sigz])=(1-\gamma)\varphi^Q(\sigz)=0,\;\;\forall \sigz\in\mathcal{Z},\nn
\end{align}	
which implies that $\varphi^Q=0$.

Next, we show that $\H_{\psi^Q}$ is an RKHS.
The space $\H_{\psi^Q}$ with the inner product defined in \refeq{theoinpro} is isometric to the RKHS $H_Q$, and hence 
is a Hilbert space.
Because $\kappa^Q(\cdot,\sigz)-\gamma\kappa^Q(\cdot,\sigw)\in\H_Q$, it is true that $\kappa(\cdot,[\sigz;\sigw])\in\H_{\psi^Q}$.
Moreover, it holds that
\begin{align}
&\hspace{-1em}\inpro{\kappa(\cdot,[\sigz;\sigw]),\kappa(\cdot,[\tilde{\sigz};\tilde{\sigw}])}_{\H_{\psi^Q}}\nn\\
&\hspace{-1em}=\inpro{\kappa^Q(\cdot,\sigz)-\gamma\kappa^Q(\cdot,\sigw),\kappa^Q(\cdot,\tilde{\sigz})-\gamma\kappa^Q(\cdot,\tilde{\sigw})}_{\H_Q}\nn\\
&\hspace{-1em}=\left(\kappa^Q(\sigz,\tilde{\sigz})-\gamma\kappa^Q(\sigz,\tilde{\sigw})\right)-\gamma\left(\kappa^Q(\sigw,\tilde{\sigz})-\gamma\kappa^Q(\sigw,\tilde{\sigw})\right)\nn\\
&\hspace{-1em}=\kappa([\sigz;\sigw],[\tilde{\sigz};\tilde{\sigw}]),\nn
\end{align}	
and that
\begin{align}
&\hspace{-1em}\inpro{\varphi,\kappa(\cdot,[\sigz;\sigw])}_{\H_{\psi^Q}}=\inpro{\varphi^Q,\kappa^Q(\cdot,\sigz)-\gamma\kappa^Q(\cdot,\sigw)}_{\H_Q}\nn\\
&\hspace{-1em}=\varphi^Q(\sigz)-\gamma \varphi^Q(\sigw)=\varphi([\sigz;\sigw]),\;\forall \varphi\in\H_{\psi^Q}.\nn
\end{align}	
Therefore, $\kappa(\cdot,\cdot):\mathcal{Z}^2\x\mathcal{Z}^2\rightarrow\Real$ is the reproducing kernel with which the RKHS $\H_{\psi^Q}$ is associated.	
\section{Proof of Corollary \ref{theoNonex}}
\label{apptheo3}
From the definition of the inner product in the RKHS $\H_{\psi^Q}$, it follows that
\begin{align}
&\norm{\hat{Q}_{n+1}^{\phi}-{Q^{\phi}}^{*}}_{\H_{Q}}=\norm{\hat{\psi}_{n+1}^Q-{\psi^Q}^{*}}_{\H_{\psi^Q}}\nn\\
&\leq\norm{\hat{\psi}_n^Q-{\psi^Q}^{*}}_{\H_{\psi^Q}}=\norm{\hat{Q}_n^{\phi}-{Q^{\phi}}^{*}}_{\H_{Q}}.\nn
\end{align}	
\section{Proof of Theorem \ref{CBFaffine}}
\label{apptheo5}
The line integral of $\frac{\partial {B}(\sigx)}{\partial \sigx}$ is path independent because it is the gradient
of the scaler field $B$ \cite{complexAna}.
Let $\sigx(t):=(1-t)\sigx_n+t\sigx_{n+1}=\sigx_n+t(\hat{f}_n(\sigx_n)+\hat{g}_n(\sigx_n)\sigu_n)$, where $t\in[0,1]$ parameterizes the line path between $\sigx_n$ and $\sigx_{n+1}$,
then $\frac{d B(\sigx(t))}{d t}=\frac{\partial B(\sigx(t))}{\partial \sigx}(\hat{f}_n(\sigx_n)+\hat{g}_n(\sigx_n)\sigu_n)$.
Therefore, for any path $A$ from $\sigx_n$ to $\hat{\sigx}_{n+1}:=\sigx_n+\hat{f}_n(\sigx_n)+\hat{g}_n(\sigx_n)\sigu_n$, it holds under Assumption \ref{assump1}.2 that
\begin{align}
&\hspace{-1em} B(\hat{\sigx}_{n+1})-B(\sigx_n)=\int_A\frac{\partial B(\sigx)}{\partial \sigx} \cdot\di \sigx=\int_0^{1}\frac{d B(\sigx(t))}{d t}\di t\nn\\
&\hspace{-1em} \geq\int_0^{1}\left(\frac{\partial B(\sigx_n)}{\partial \sigx}-\nu t(\hat{f}_n(\sigx_n)+\hat{g}_n(\sigx_n)\sigu_n)^{\T}\right)\nn\\
&\hspace{-1em} (\hat{f}_n(\sigx_n)+\hat{g}_n(\sigx_n)\sigu_n)\di t \nn\\
&\hspace{-1em} =\frac{\partial B(\sigx_n)}{\partial \sigx}(\hat{f}_n(\sigx_n)+\hat{g}_n(\sigx_n)\sigu_n)-\frac{\nu}{2}\norm{\hat{f}_n(\sigx_n)+\hat{g}_n(\sigx_n)\sigu_n}_{\Real^{n_x}}^2.  \label{ineq}
\end{align}	
The inequality implies that $B(\hat{\sigx}_{n+1})-B(\sigx_n)$ is greater than or equal to that in the case when
$\frac{\partial B(\sigx)}{\partial \sigx}$ decreases along the line path at the maximum rate.
Therefore, when \refeq{DCBF} is satisfied, it holds from \refeq{ineq} that
\begin{align}
B(\hat{\sigx}_{n+1})-B(\sigx_n)\geq -\eta B(\sigx_n)+\varrho_1,\nn
\end{align}	 
which is the control barrier certificate defined in \refeq{DCBFdef3}.
Hence, \refeq{DCBFdef2} is satisfied by the same argument as in the proof of Theorem \ref{maintheo}
under Assumption \ref{assumstab}.3.
Equation \refeq{DCBF} can be rewritten as
\begin{align}
&\frac{\partial B(\sigx_n)}{\partial \sigx}(\hat{f}_n(\sigx_n)+\hat{g}_n(\sigx_n)\sigu_n)\nonumber\\
&-\frac{\nu}{2}\norm{\hat{f}_n(\sigx_n)+\hat{g}_n(\sigx_n)\sigu_n}_{\Real^{n_x}}^2\geq-\eta B(\sigx_n)+\varrho_1. \label{DCBF2}
\end{align}	
The first term in the left hand side of \refeq{DCBF2} is affine to $\sigu_n$, the second term
is the combination of a concave function $-\frac{\nu}{2}\norm{\cdot}_{\Real^{n_x}}^2$ and an affine function of $\sigu_n$, which is concave.
Therefore, the left hand side of \refeq{DCBF2} is a concave function, and the inequality \refeq{DCBF2} defines
a convex constraint under Assumption \ref{assump1}.1.
\section{Kernel Adaptive Filter with Monotone Approximation Property}
\label{appemulti}
Kernel adaptive filter \cite{liu_book10} is an adaptive extension of the kernel ridge regression \cite{muller01intro, schoelkopf2002} or GPs.
Multikernel adaptive filter \cite{yukawa_tsp12} exploits multiple kernels to conduct learning in the sum space of RKHSs associated with each kernel.
Let $M\in\integer_{>0}$ be the number of kernels employed.
Here, we only discuss the case that the dimension of the model parameter $\sigh$ is fixed, for simplicity.
Denote, by $\Dict_{m}:=\{\kappa_m(\cdot,\tilde{\sigz}_{m,j})\}_{j\in\{1,2,...,r_{m}\}},\;m\in\{1,2,...,M\},r_{m}\in\integer_{>0}$, the
time-dependent set of functions, referred to as a {\it dictionary}, at time instant $n$ for the $m$th kernel $\kappa_m(\cdot,\cdot)$.
The current estimator $\hat{\psi}_n$ is evaluated at the current input $\sigz_n$, in a linear form, as
\begin{equation}
\hat{\psi}_n(\sigz_n):=\sigh_n^{\T}\sigk(\sigz_n)=\sum_{m=1}^M\sigh_{m,n}^{\T}\sigk_{m}(\sigz_n),\nn
\end{equation}
where $\sigh_n:=[\sigh_{1,n};\sigh_{2,n};\cdots; \sigh_{M,n}]:=[h_1;h_2;\cdots; h_{r}]\in\Real^{r},\;r:=\sum_{m=1}^Mr_{m}$, is the coefficent vector, and $\sigk(\sigz_n):=\left[{\sigk_{1}(\sigz_n)};{\sigk_{2}(\sigz_n)};\cdots;{\sigk_{M}(\sigz_n)}\right]\in\Real^{r}$,
$\sigk_{m}(\sigz_n):=\left[\kappa_m\left(\sigz_n,\tilde{\sigz}_{m,1}\right);\kappa_m\left(\sigz_n,\tilde{\sigz}_{m,2}\right);\cdots;\kappa_m\left(\sigz_n,\tilde{\sigz}_{m,r_{m}}\right)\right]\in\Real^{r_{m}}$.
To obtain a sparse model parameter, we define the cost at time instant $n$ as
\begin{align}
\Theta_n(\sigh):=\frac{1}{2}\sum_{\iota=n-s+1}^{n}\frac{1}{s}{dist}^2(\sigh,C_{\iota})+\mu\norm{\sigh}_1, \label{APFBScost}
\end{align}
where $\iota\in\{n-s+1,n\}\subset\integer_{\geq0},\;s\in\integer_{>0}$, and
\begin{align}
C_{\iota}:=\{\sigh\in\Real^{r}||\sigh^{\T}\sigk(\sigz_{\iota})-\delta_{\iota}|\leq\epsilon_1\},\;\epsilon_1\geq0, \label{dataS}
\end{align}
which is a set of coefficient vector $\sigh$ satisfying instantaneous-error-zero with a precision parameter $\epsilon_1$.
Here, $\delta_n\in\Real$ is the output at time instant $n$, and
the $\ell_1$-norm regularization $\norm{\sigh}_1:=\sum_{i=1}^{r}|h_i|$ with a parameter $\mu\geq0$ promotes sparsity of $\sigh$.
The update rule of the adaptive proximal forward-backward splitting \cite{murakami10}, which is an adaptive filter designed for sparse optimizations, for the cost \refeq{APFBScost} is given by
\begin{align}
\sigh_{n+1}=\prox_{\lambda\mu}\left[(1-\lambda)I+\lambda\sum_{\iota=n-s+1}^{n}\frac{1}{s}P_{C_{\iota}}\right](\sigh_n), \label{hnupdate}
\end{align}
where $\lambda\in(0,2)$ is the step size, $I$ is the identity operator, and
\begin{align}
\prox_{\lambda\mu}(\sigh)=\sum_i^{r}\sgn{(h_i)}\max{\left\{|h_i|-\lambda\mu,0\right\}}\textbf{e}_i,\nn%
\end{align}
where $\sgn(\cdot)$ is the sign function.
Then, the strictly monotone approximation property \cite{murakami10}: $\norm{\sigh_{n+1}-\sigh_n^{*}}_{\Real^{r}}<\norm{\sigh_{n}-\sigh_n^{*}}_{\Real^{r}},\;\forall \sigh_n^{*}\in\Omega_n:=\argmin_{\sigh\in\Real^{r}}\Theta_n(\sigh)$, holds
if $\sigh_n\notin\Omega_n\neq\emptyset$.

\noindent{\bf Dictionary Construction:}
If the dictionary is insufficient, we can employ two novelty conditions when adding the kernel functions $\{\kappa_m(\cdot,\sigz_n)\}_{m\in\{1,2,...,M\}}$ to the dictionary:
(i) the maximum-dictionary-size condition
\begin{equation}
r\leq r_{{\rm max}},  \;r_{{\rm max}}\in\integer_{>0},\nn%
\end{equation}
and
(ii) the large-normalized-error condition
\begin{equation}
|\delta_n-\hat{\psi}_n(\sigz_n)|^2>\epsilon_2|\hat{\psi}_n(\sigz_n)|^2,\; \epsilon_2\geq 0.  \nn%
\end{equation}
By using sparse optimizations, {\it nonactive} structural components represented by some kernel functions can be removed, and the dictionary is refined as time goes by. 
To effectively achieve a compact representation of the model, it might be required to appropriately weigh the kernel functions
to include some preferences on a structure of the model.
The following lemma implies that the resulting kernels are still reproducing kernels.
\begin{lemma}[{\cite[Theorem~2]{yukawa_tsp15}}]
	\label{propweight}
	Let $\kappa:\mathcal{Z}\x\mathcal{Z}\rightarrow\Real$ be the reproducing kernel of an RKHS $(\H,\inpro{\cdot,\cdot}_{\H})$. Then, $\tau\kappa(\sigz,\sigw),\;\sigz,\sigw\in\mathcal{Z}$ for an
	arbitrary $\tau>0$ is the reproducing kernel of the RKHS $(\H_{\tau},\inpro{\cdot,\cdot}_{\H_{\tau}})$ with the
	inner product $\inpro{\sigz,\sigw}_{\H_{\tau}}:={\tau}^{-1}\inpro{\sigz,\sigw}_{\H},\;\sigz,\sigw\in\mathcal{Z}$.
\end{lemma}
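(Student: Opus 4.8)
The plan is a direct verification against the two-part definition of a reproducing kernel stated earlier in the paper, using that $\H_\tau$ is literally the same vector space of functions on $\mathcal{Z}$ as $\H$, equipped with the rescaled inner product $\inpro{\varphi,\varphi'}_{\H_\tau}:=\tau^{-1}\inpro{\varphi,\varphi'}_{\H}$ for $\varphi,\varphi'\in\H$.

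First I would confirm that $(\H_\tau,\inpro{\cdot,\cdot}_{\H_\tau})$ is itself a Hilbert space. Symmetry and bilinearity of $\inpro{\cdot,\cdot}_{\H_\tau}$ are immediate, and positive definiteness is inherited from $\inpro{\cdot,\cdot}_{\H}$ because $\tau>0$. The induced norm obeys $\norm{\varphi}_{\H_\tau}=\tau^{-1/2}\norm{\varphi}_{\H}$, so the $\H_\tau$- and $\H$-norms are equivalent; consequently a sequence is Cauchy, respectively convergent, in $\H_\tau$ if and only if it is so in $\H$, and completeness of $\H$ carries over to $\H_\tau$.

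Next I would verify the two reproducing-kernel conditions for $\tau\kappa$. For membership: for any fixed $\sigw\in\mathcal{Z}$ we have $\kappa(\cdot,\sigw)\in\H$, and since $\H$ is closed under scalar multiplication, $\tau\kappa(\cdot,\sigw)\in\H=\H_\tau$. For the reproducing property: for arbitrary $\varphi\in\H_\tau$ (equivalently $\varphi\in\H$) and $\sigw\in\mathcal{Z}$,
\begin{align}
\inpro{\varphi,\tau\kappa(\cdot,\sigw)}_{\H_\tau}
&=\tau^{-1}\inpro{\varphi,\tau\kappa(\cdot,\sigw)}_{\H}
=\tau^{-1}\tau\,\inpro{\varphi,\kappa(\cdot,\sigw)}_{\H}\nn\\
&=\inpro{\varphi,\kappa(\cdot,\sigw)}_{\H}=\varphi(\sigw),\nn
\end{align}
the last equality being the reproducing property of $\kappa$ in $\H$. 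Hence $\tau\kappa$ is a reproducing kernel of $\H_\tau$, and since a reproducing kernel is unique, it is \emph{the} reproducing kernel, which proves the claim.

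There is no genuinely hard step; the only point that warrants care is that an RKHS is by definition \emph{complete}, so one must observe the norm equivalence rather than merely that the rescaled bilinear form is still an inner product. The arithmetic of the factor $\tau$ cancelling the $\tau^{-1}$ in $\inpro{\cdot,\cdot}_{\H_\tau}$ is precisely the reason the constant must be placed on the kernel and its reciprocal on the inner product. As an alternative route, one could invoke the Moore--Aronszajn correspondence: $\tau\kappa$ is positive definite whenever $\kappa$ is and $\tau>0$, hence it has a unique associated RKHS, and the displayed computation simply identifies that RKHS as $(\H_\tau,\inpro{\cdot,\cdot}_{\H_\tau})$.
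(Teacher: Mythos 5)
Your proof is correct. Note that the paper itself offers no proof of this lemma---it is imported verbatim as Theorem~2 of the cited reference \cite{yukawa_tsp15}---so there is no in-paper argument to compare against; your direct verification (same underlying vector space, norm equivalence $\norm{\varphi}_{\H_\tau}=\tau^{-1/2}\norm{\varphi}_{\H}$ giving completeness, the $\tau\cdot\tau^{-1}$ cancellation in the reproducing identity, and uniqueness of the kernel) is exactly the standard route and is complete.
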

\section{Comparison to Parametric Approaches and the GP SARSA}
\label{appencomp}
If the suitable set of basis functions for approximating action-value functions 
is available, we can adopt a parametric approach for action-value function approximation.
Suppose that an estimate of the action-value function at time instant $n$ is given by
$\hat{Q}_n^{\phi}(\sigz)=\sigh_n^{\T}\zeta(\sigz)$,
where $\zeta:\mathcal{Z}\rightarrow\Real^r$ is fixed for all time.
In this parametric case, given an input-output pair $([\sigz_n;\sigz_{n+1}],R(\sigx_n,\sigu_n))$,
we can update the estimate of the action-value function as 
\begin{align}
	\hat{Q}_{n+1}^{\phi}=\sigh_n-\lambda&\left[\sigh_n^{\T}(\zeta(\sigz_n)-\gamma\zeta(\sigz_{n+1}))-R(\sigx_n,\sigu_n)\right]\nn\\
	&~~~~~~~~~~~~~\cdot(\zeta(\sigz_n)-\gamma\zeta(\sigz_{n+1})).\nn
\end{align}
Then, stable tracking is achieved if the step size $\lambda$ is properly selected, even after
the dynamics or the policy is changed.

On the other hand, when employing a kernel-based learning, it is not trivial how to update the estimate in a
theoretically formal manner.
Because the output of the action-value function is not directly observable, 
the expansion $\sum_{i=0}^n\kappa^Q(\cdot,\sigz_n)$ (where $\kappa^Q$ is the reproducing kernel of the RKHS containing the action-value function)
cannot be validated by the representer theorem \cite{repre1} any more.
By defining the RKHS $\H_{\psi^Q}$ as in Theorem \ref{theoRKHS}, however, we can view an action-value function approximation as
the supervised learning in the RKHS $\H_{\psi^Q}$, and can overcome the aforementioned issue.
We mention that when an adaptive filter is employed in the RKHS $\H_{\psi^Q}$, we do not have to reset learning
even after policies are updated or the dynamics changes, since the domain of $\H_{\psi^Q}$ is $\mathcal{Z}\x\mathcal{Z}$ instead of $\mathcal{Z}$.
The example below indicates that our approach is general.

As discussed in Section \ref{subsec:relatedwork}, the least squares temporal difference algorithm has been extended to kernel-based methods including the GP SARSA  \cite{engel2005reinforcement}.
Given a set of input data $\{\sigz_n\}_{n=0,1,...,N_d},\;\sigz_n:=[\sigx_n;\sigu_n],\;N_d\in\integer_{>0}$,
the posterior mean $m^{Q}$ and variance ${\mu^{Q}}^2$ of $\hat{Q}_{N_d}^{\phi}$ at a point $\sigz_{*}\in\mathcal{Z}$ are given by
\begin{align}
&m^{Q}(\sigz_{*})={\tilde{\sigk}_{N_d}}^{\T}\textbf{H}^{\T}(\textbf{H}\textbf{K}^Q\textbf{H}^{\T}+\Sigma)^{-1}{\textbf{R}}_{N_d-1},\label{meangptd}\\
&{\mu^{Q}}^2(\sigz_{*})=\kappa^{Q}(\sigz_{*},\sigz_{*}) -{\tilde{\sigk}_{N_d}}^{\T}\textbf{H}^{\T}(\textbf{H}\textbf{K}^Q\textbf{H}^{\T}+\Sigma)^{-1}\textbf{H}\tilde{\sigk}_{N_d}, \label{covgptd}
\end{align}	
where ${\textbf{R}}_{N_d-1}\sim\mathcal{N}([R(\sigx_0,\sigu_0);R(\sigx_1,\sigu_1);\cdots;R(\sigx_{N_d-1},\sigu_{N_d-1})],\Sigma)$ is the vector of immediate rewards,
$\kappa^{Q}$ is the reproducing kernel of $\H_Q$,
$\tilde{\sigk}_{N_d}:=[\kappa^{Q}(\sigz_{*},\sigz_0);\kappa^{Q}(\sigz_{*},\sigz_1);\cdots;\kappa^{Q}(\sigz_{*},\sigz_{N_d})]$, the $(i,j)$ entry of $\textbf{K}^Q\in\Real^{(N_d+1)\x (N_d+1)}$ is $\kappa^{Q}(\sigz_{i-1},\sigz_{j-1})$, and $\Sigma\in\Real^{N_d\x N_d}$ is the covariance matrix of $\textbf{R}_{N_d-1}$.
Here, the matrix $H$ is defined by
\begin{align}
\textbf{H}:=\left[
\begin{array}{ccccc}
1 & -\gamma & 0 & \cdots & 0\\
0 & 1 & -\gamma & \cdots & 0\\
\vdots &  &  &  & \vdots\\
0 & 0 & \cdots & 1 & -\gamma\\
\end{array}
\right]\in\Real^{N_d\x (N_d+1)}.\nn
\end{align}
If we employ a GP for learning $\psi^Q$ in $\H_{\psi^Q}$ defined in Theorem \ref{theoRKHS},
the posterior mean $m^{\psi^Q}$ and variance ${\mu^{\psi^Q}}^2$ of $\hat{\psi}_{N_d}^Q$ at a point $[\sigz_{*};\sigw_{*}]\in\mathcal{Z}\x\mathcal{Z}$ are given by
\begin{align}
&m^{\psi^Q}([\sigz_{*};\sigw_{*}])={\sigk_{N_d}}^{\T}(\textbf{K}+\Sigma)^{-1}{\textbf{R}}_{N_d-1},\nn\\
&{\mu^{\psi^Q}}^2([\sigz_{*};\sigw_{*}])=\kappa([\sigz_{*};\sigw_{*}];[\sigz_{*};\sigw_{*}]) -{\sigk_{N_d}}^{\T}(\textbf{K}+\Sigma)^{-1}\sigk_{N_d},\nn
\end{align}	
where $\sigk_{N_d}:=[\kappa([\sigz_{*};\sigw_{*}],[\sigz_0;\sigz_1]);\cdots;\kappa([\sigz_{*};\sigw_{*}],[\sigz_{N_d-1};\sigz_{N_d}])]$, and the $(i,j)$ entry of $\textbf{K}\in\Real^{N\x N}$ is ${\kappa}([\sigz_{i-1};\sigz_{i}],[\sigz_{j-1};\sigz_{j}])$.
Then, the posterior mean $m^{Q}$ and variance ${\mu^{Q}}^2$ of $\hat{Q}_{N_d}^{\phi}$ at a point $\sigz_{*}\in\mathcal{Z}$ are given by
\begin{align}
&m^{Q}(\sigz_{*})=U^{-1}(m^{\psi^Q}(\cdot))(\sigz_{*})={\sigk_{N_d}^Q}^{\T}(\textbf{K}+\Sigma)^{-1}{\textbf{R}}_{N_d-1},\nn\\
&{\mu^{Q}}^2(\sigz_{*})=\kappa^{Q}(\sigz_{*},\sigz_{*}) -{\sigk_{N_d}^Q}^{\T}(\textbf{K}+\Sigma)^{-1}{\sigk_{N_d}^Q},\nn
\end{align}	
which result in the same values as \refeq{meangptd} and \refeq{covgptd}.
\end{appendices}
\section*{Acknowledgments}
M. Ohnishi thanks all of those who have given him insightful comments on this work,
including the members of the Georgia Robotics and Intelligent Systems Laboratory.
The authors thank all of the anonymous reviewers for their constructive suggestions.		

\end{document}